\title{Algorithmic Instabilities of Accelerated Gradient Descent}
    \let\Cref\crtCref
    \let\cref\crtcref
\declaretheoremstyle[
	    spaceabove=\topsep, 
	    spacebelow=\topsep, 
	    headfont=\normalfont\bfseries,
	    bodyfont=\normalfont\itshape,
	    notefont=\normalfont\bfseries,
	    notebraces={(}{)},
	    postheadspace=0.33em, 
	    headpunct={.},
    ]{theorem}
\declaretheorem[style=theorem]{theorem}
\declaretheoremstyle[
	    spaceabove=\topsep, 
	    spacebelow=\topsep, 
	    headfont=\normalfont\bfseries,
	    bodyfont=\normalfont,
	    notefont=\normalfont\bfseries,
	    notebraces={(}{)},
	    postheadspace=0.33em, 
	    headpunct={.},
    ]{definition}
\declaretheoremstyle[
        spaceabove=\topsep, 
        spacebelow=\topsep, 
        headfont=\normalfont\bfseries,
        bodyfont=\normalfont,
        notefont=\normalfont\bfseries,
        notebraces={}{},
        postheadspace=0.33em, 
        qed=$\blacksquare$, 
        headpunct={.},
    ]{proofstyle}
\declaretheorem[style=proofstyle,numbered=no,name=Proof]{proof}
\declaretheorem[style=theorem,sibling=theorem,name=Lemma]{lemma}
\declaretheorem[style=theorem,sibling=theorem,name=Corollary]{corollary}
\declaretheorem[style=theorem,sibling=theorem,name=Claim]{claim}
\declaretheorem[style=theorem,numbered=no,name=Theorem]{theorem*}
\declaretheorem[style=theorem,numbered=no,name=Lemma]{lemma*}
\declaretheorem[style=theorem,numbered=no,name=Corollary]{corollary*}
\declaretheorem[style=theorem,numbered=no,name=Proposition]{proposition*}
\declaretheorem[style=theorem,numbered=no,name=Claim]{claim*}
\declaretheorem[style=theorem,numbered=no,name=Fact]{fact*}
\declaretheorem[style=theorem,numbered=no,name=Observation]{observation*}
\declaretheorem[style=theorem,numbered=no,name=Conjecture]{conjecture*}
\declaretheorem[style=definition,sibling=theorem,name=Definition]{definition}
\declaretheorem[style=definition,numbered=no,name=Definition]{definition*}
\declaretheorem[style=definition,numbered=no,name=Remark]{remark*}
\declaretheorem[style=definition,numbered=no,name=Example]{example*}
\declaretheorem[style=definition,numbered=no,name=Question]{question*}
\newcommand{\agd}{\textup{NAG}\xspace}
\Crefname{claim}{Claim}{Claims}
\DeclareMathOperator*{\argmax}{arg\,max}
\DeclareMathOperator*{\argmin}{arg\,min}
\newcommand{\ceil}[1]{\lceil #1 \rceil}
\newcommand{\eqdef}{\triangleq}
\newcommand{\ifrac}[2]{{#1}/{#2}}
\newenvironment{aligni*}{\begin{math}}{\end{math}}
\newcommand{\dltx}[1]{\Delta^{x}_{#1}}
\newcommand{\dlty}[1]{\Delta^{y}_{#1}}
\newcommand{\dltm}[1]{\Delta^{m}_{#1}}
\newcommand{\dltf}[1]{\Delta^{f}_{#1}}
\newcommand{\fmp}[0]{f}
\newcommand{\xt}[0]{\tilde{x}}
\newcommand{\xh}[0]{\hat{x}}
\newcommand{\xb}[0]{\bar{x}}
\newcommand{\yt}[0]{\tilde{y}}
\newcommand{\yh}[0]{\hat{y}}
\newcommand{\yb}[0]{\bar{y}}
\newcommand{\zt}[0]{\tilde{z}}
\newcommand{\mt}[0]{\tilde{m}}
\newcommand{\ymax}[1]{y^{\mathrm{max}}_{#1}}
\newcommand{\ymin}[1]{y^{\mathrm{min}}_{#1}}
\newcommand{\R}[0]{\mathbb{R}}
\newcommand{\E}[0]{\mathbb{E}}
\newcommand{\D}[0]{\mathcal{D}}
\newcommand{\Z}[0]{\mathcal{Z}}
\newcommand{\I}[0]{\mathbf{1}}
\newcommand{\IND}[1]{\I\brk[s]*{#1}}
\newcommand{\unif}[2]{\delta_{#1,#2}^{\textup{unif}}}
\newcommand{\init}[1]{\delta_{#1}^{\textup{init}}}
\newcommand{\qcp}[0]{p}
\author{%
    Amit Attia%
    \thanks{Blavatnik School of Computer Science, Tel Aviv University; \texttt{amitattia@mail.tau.ac.il}.}
    \and
    Tomer Koren%
    \thanks{Blavatnik School of Computer Science, Tel Aviv University, and Google Research Tel Aviv; \texttt{tkoren@tauex.tau.ac.il}.}
}
\begin{document}

\maketitle

\begin{abstract}
    We study the algorithmic stability of Nesterov's accelerated gradient
    method. For convex quadratic objectives, \citet{chen2018stability} proved
    that the uniform stability of the method grows quadratically with the number
    of optimization steps, and conjectured that the same is true for the general
    convex and smooth case. We disprove this conjecture and show, for two
    notions of algorithmic stability (including uniform stability), that the stability of Nesterov's accelerated method
    in fact deteriorates \emph{exponentially fast} with the number of gradient
    steps. This stands in sharp contrast to the bounds in the quadratic case, but also to known results for non-accelerated gradient methods where stability typically grows linearly with the number of steps.
\end{abstract}

\section{Introduction}

Algorithmic stability has emerged over the last two decades as a central tool for
generalization analysis of learning algorithms. While the classical approach in
generalization theory originating in the PAC learning framework appeal to uniform
convergence arguments, more recent progress on stochastic convex optimization
models, starting with the pioneering work of \citet{bousquet2002stability} and
\citet{shalev2009stochastic}, has relied on stability analysis for deriving
tight generalization results for convex risk minimizing algorithms.

Perhaps the most common form of algorithmic stability is the so called
\emph{uniform stability} \citep{bousquet2002stability}. Roughly, the
uniform stability of a learning algorithm is the worst-case change in its output
model, in terms of its loss on an arbitrary example, when replacing a single
sample in the data set used for training.
\citet{bousquet2002stability} initially used uniform
stability to argue about the generalization of empirical risk minimization with
strongly convex losses.
\citet{shalev2009stochastic} revisited this concept
and studied the stability effect of regularization on the generalization of convex
models.
Their bounds were recently improved in a variety of ways
\citep{feldman2018generalization,feldman2019high,bousquet2020sharper} and their
approach has been influential in a variety of settings (e.g.,
\citealp{koren2015fast,gonen2017fast,charles2018stability}).
In fact, to this day, algorithmic stability is essentially the only general
approach for obtaining tight (dimension free) generalization bounds for convex
optimization algorithms applied to the empirical risk (see
\citealp{shalev2009stochastic,feldman2016generalization}).

Significant focus has been put recently on studying the stability properties of
iterative optimization algorithms. \citet{hardt2016train} considered stochastic
gradient descent (SGD) and gave the first bounds on its uniform stability for
a convex and smooth loss function, that grow linearly with the
number of optimization steps. As observed by \citet{feldman2018generalization}
and \citet{chen2018stability}, their arguments also apply with minor
modifications to full-batch gradient descent (GD). \citet{bassily2020stability}
exhibited a significant gap in stability between the smooth and non-smooth
cases, showing that non-smooth GD and SGD are inherently less stable than their
smooth counterparts. Even further, algorithmic stability has also been used as
an analysis technique in stochastic mini-batched iterative optimization (e.g.,
\citealp{wang2017memory,agarwalAKTZ20}), and has been proved crucial to the
design and analysis of differentially private optimization algorithms
\citep{wu2017bolt,bassily2019private,feldman2020private}, both of which focusing
primarily on smooth optimization.

Having identified smoothness as key to algorithmic stability of iterative
optimization methods, the following fundamental question emerges: how stable are
\emph{optimal methods} for smooth convex optimization? 
In particular, what is the algorithmic stability of the celebrated Nesterov
accelerated gradient (\agd) method~\citep{nesterov1983method}---a cornerstone 
of optimal methods in convex optimization? Besides being a basic and natural 
question in its own right, 
its resolution could have important implications to the design and analysis of
optimization algorithms, as well as serve to deepen our understanding of the
generalization properties of iterative gradient methods. 
\citet{chen2018stability} addressed this question in the case of convex
\emph{quadratic} objectives and derived bounds on the uniform stability of
\agd that grow quadratically with the number of gradient steps (as opposed to
the linear growth known for GD). They conjectured that similar bounds hold
true more broadly, but fell short of proving this for general convex and
smooth objectives. Our work is aimed at filling this gap.

\subsection{Our Results}

We establish tight algorithmic stability bounds for the Nesterov accelerated
gradient method (\agd). We show that, somewhat surprisingly, the uniform
stability of \agd grows \emph{exponentially fast} with the number of steps in
the general convex and smooth setting. Namely, the uniform stability of
$T$-steps \agd with respect to a dataset of $n$ examples is in general
$\exp(\Omega(T))/n$, and in particular, after merely $T = O(\log{n})$ steps the
stability becomes the trivial $\Omega(1)$.
This result demonstrates a sharp contrast between the stability of \agd in the quadratic case and in the general convex, and disproves the conjecture of \citet{chen2018stability} that the uniform stability of \agd in the general convex setting is $O\brk{\ifrac{T^2}{n}}$, as in the case of a quadratic objective.

Our results in fact apply to a simpler notion of stability---one that is arguably more fundamental in the context of iterative optimization methods---which we term \emph{initialization stability}. 
The initialization stability of an algorithm $A$ (formally defined in \cref{sec:prelims} below) measures the sensitivity of $A$'s output to an $\epsilon$-perturbation in its initialization point.
For this notion, we demonstrate a construction of a smooth and convex objective function such that, for sufficiently small $\epsilon$, the stability of $T$-steps \agd is lower bounded by $\exp(\Omega(T))\epsilon$.
Here again, we exhibit a dramatic gap between the quadratic and general convex cases: for quadratic objectives, we show that the initialization stability of \agd is upper bounded by $O(T\epsilon)$.

For completeness, we also prove initialization stability upper bounds in a few relevant convex optimization settings: for GD, we analyze both the smooth and non-smooth cases; for \agd, we give bounds for quadratic objectives as well as for general smooth ones.
\cref{tbl:comp_methods_v1} summarizes the stability bounds we establish compared to existing bounds in the literature.
Note in particular the remarkable exponential gap between the stability bounds for GD and \agd in the general smooth case, with respect to both stability definitions.
Stability lower bounds for \agd are discussed in \cref{sec:initialization,sec:uniform}; initialization stability upper bounds for the various settings are given in \cref{sec:initi_stab}, and additional uniform stability bounds are detailed in \cref{sec:additional-unif}.

\begin{table}[ht]
\centering
\begin{tabular}{llccc}
\hline
{\sc Method}              & {\sc Setting}       & {\sc Init.~Stability} & {\sc Unif.~Stability} & {\sc Reference}      \\ \hline
GD & convex, smooth & $\boldsymbol{\Theta(\epsilon)}$ & $\Theta\brk{\ifrac{T}{n}}$  & \citet{hardt2016train}
\\
GD & convex, non-smooth & $\boldsymbol{\Theta(\epsilon+\eta\smash{\sqrt{T}})}$            & $\Theta\brk{\eta\smash{\sqrt{T}}+\ifrac{\eta T}{n}}$ & \citet{bassily2020stability}
\\
\agd & convex, quadratic & $\boldsymbol{O(T\epsilon)}$ & $\Theta\brk{\ifrac{T^2}{n}}$ & \citet{chen2018stability}
\\
\agd & convex, smooth & $\boldsymbol{\exp(\Theta(T))\epsilon}$ & $\boldsymbol{\ifrac{\exp(\Theta(T))}{n}}$ & (this paper)
\\ 
\hline
\\
\end{tabular}
\caption{\label{tbl:comp_methods_v1} Stability bounds introduced in this work ({\bfseries in bold}) compared to existing bounds. For simplicity, all bounds in the smooth case are for $\eta = \Theta(1/\beta)$.
The lower bounds for \agd are presented here in a simplified form and the actual bounds exhibit a fluctuation in the increase of stability; see also~\cref{fig:fig2} and the precise results in~\cref{sec:initialization,sec:uniform}.}
\end{table}

Finally, we remark that our focus here is on the general convex (and smooth)
case, and we do not provide formal results for the strongly convex case.
However, we argue that stability analysis in the latter case is not as
compelling as in the general case. Indeed, a strongly convex objective admits a
unique minimum, and so \agd will converge to an $\epsilon$-neighborhood of this
minimum in $O(\log(1/\epsilon))$ steps from any initialization, at which point
its stability becomes $O(\epsilon)$; thus, with strong convexity perturbations
in initialization get quickly washed away as the algorithm rapidly converges to
the unique optimum. (A similar reasoning also applies to uniform stability with strongly convex losses.)

\subsection{Overview of Main Ideas and Techniques}

We now provide some intuition to our constructions and highlight some of the key ideas leading to our results. We start by revisiting the analysis of the quadratic case which is simpler and better understood.

\paragraph{Why \agd is stable for quadratics:}

Consider a quadratic function $f$ with Hessian matrix $H \succeq 0$.
For analyzing the initialization stability of \agd, let us consider two runs of the method initialized at $x_0, \xt_0$ respectively, and let $(x_t,y_t)$, $(\xt_t,\yt_t)$ denote the corresponding \agd iterates at step $t$. 
Further, let us denote by $\dltx{t} \eqdef x_t-\xt_t$ the difference between the two sequences of iterates.
Using the update rule of \agd (see \cref{eq:agd-xt,eq:agd-yt} below) and the fact that for a quadratic $f$, differences between gradients can be expressed as $\nabla f(x) - \nabla f(x') = H (x-x')$ for any $x,x' \in \R^d$, it is straightforward to show that the distance $\dltx{t}$ evolves according to 
\begin{align*}
    \dltx{t+1}
    &= (I-\eta H) \brk!{ (1+\gamma_t) \dltx{t} - \gamma_{t} \dltx{t-1} }
    .
\end{align*}
This recursion can be naturally put in matrix form, leading to:
\begin{align*}
    \begin{pmatrix}
        \dltx{t+1} \\
        \dltx{t}
    \end{pmatrix} 
    &=
    \prod_{k=1}^t
    \begin{pmatrix}
        (1+\gamma_k)A & -\gamma_k A \\
        I & 0
    \end{pmatrix}
    \begin{pmatrix}
        \dltx{1} \\
        \dltx{0}
    \end{pmatrix}
    ,
\end{align*}
where here $A = I-\eta H$.
Thus, for a quadratic $f$, bounding the divergence $\norm{\dltx{t}}$ between the two \agd sequences reduces to controlling the operator norm of the matrix product above, namely
\begin{align*}
    \norm*{
    \prod_{k=1}^t
    \begin{pmatrix}
        (1+\gamma_k)A & -\gamma_k A \\
        I & 0
    \end{pmatrix}
    }
    \,.
\end{align*}
Remarkably, it can be shown that this norm is $O(t)$ for any $0 \preceq A \preceq I$ and any choice of $-1 \leq \gamma_1,\ldots,\gamma_t \leq 1$.
(This can be seen by writing the Schur decomposition of the involved matrices, as we show in \cref{sec:agd_quad}.%
\footnote{\citet{chen2018stability} give an alternative argument based on Chebyshev polynomials.})
As a consequence, the initialization stability of \agd for a quadratic objective $f$ is shown to grow only linearly with the number of steps $t$.

\paragraph{What breaks down in the general convex case:}

For a general convex (twice-differentiable and smooth) $f$, the Hessian matrix
is of course no longer fixed across the execution. Assuming for simplicity the
one-dimensional case, similar arguments show that the relevant operator norm is
of the form 
\begin{align*}
    \norm*{
    \prod_{k=1}^t
    \begin{pmatrix}
        (1+\gamma_k)A_k & -\gamma_k A_k \\
        I & 0
    \end{pmatrix}
    }
    \,,
\end{align*}
where $0 \leq A_1,\ldots,A_t \leq 1$ are related to Hessians of $f$ taken at
suitable points along the optimization trajectory. However, if $A_k$ are allowed
to vary arbitrarily between steps, the matrix product above might explode
exponentially fast, even in the one-dimensional case! Indeed, fix $\gamma_k =
0.9$ for all $k$, and set $A_k = 0$ whenever $k \bmod 3 = 0$ and $A_k = 1$
otherwise; then using simple linear algebra the operator norm of interest can be
shown to satisfy
\begin{align*}
    \norm3{\brk3{
    \begin{pmatrix}
        0 & 0 \\
        1 & 0
    \end{pmatrix}
    \begin{pmatrix}
        1.9 & -0.9 \\
        1 & 0
    \end{pmatrix}
    \begin{pmatrix}
        1.9 & -0.9 \\
        1 & 0
    \end{pmatrix}
    }^{t/3}}
    =
    \norm3{
    \begin{pmatrix}
        0 & 0 \\
        2.71 & -1.71
    \end{pmatrix}
    ^{t/3}}
    \geq
    1.15^t
    .
\end{align*}

\paragraph{How a hard function should look like:}

The exponential blowup we exhibited above hinged on a worst-case sequence
$A_1,\ldots,A_t$ that varies significantly between consecutive steps. It remains
unclear, however, what does this imply for the actual optimization setup we care
about, and whether such a sequence can be realized by Hessians of a convex and
smooth function $f$. Our main results essentially answer the latter question on
the affirmative and build upon a construction of such a function $f$ that
directly imitates such a bad sequence. 

\begin{figure}[ht]
    \centering
    \includegraphics[width=0.75\linewidth,trim=0.0cm 8.5cm 0.5cm 3cm,clip]{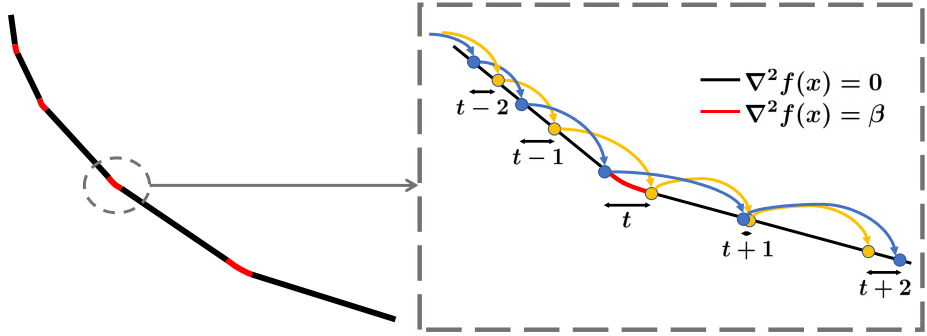}  
    \caption{A function (left) constructed of four instantiations of our ``gadget'' (right) at increasing sizes. During an interval with zero Hessian, the trajectory with the larger momentum gains distance. When reaching an interval with maximal Hessian (depicted here as iteration $t$), the ``slow'' trajectory experiences a larger gradient which gives it larger momentum and makes it become the ``faster'' one.}
    \label{fig:fig1}
\end{figure}

Concretely, we generate a hard function inductively based on a running execution of \agd, where in each step we amend the construction with a ``gadget'' function having a piecewise-constant Hessian (that equals either $0$ or the maximal $\beta$); see \cref{fig:fig1} for an illustration of this construct.
The interval pieces are carefully chosen based on the \agd iterates computed so far in a way that a slightly perturbed execution would traverse through intervals with an appropriate pattern of Hessians that induces a behaviour similar to the one exhibited by the matrix products above, leading to an exponential blowup in the stability terms.
\cref{fig:fig2} shows a simulation of the divergence between the two trajectories of \agd on the objective function we construct, illustrating 
how the divergence fluctuates between positive and negative values, with its absolute value growing exponentially with time.
More technical details on this construction can be found in \cref{sec:initialization}.

\begin{figure}[ht]
     \centering
     \begin{subfigure}[b]{0.49\linewidth}
         \centering
         \includegraphics[width=\textwidth]{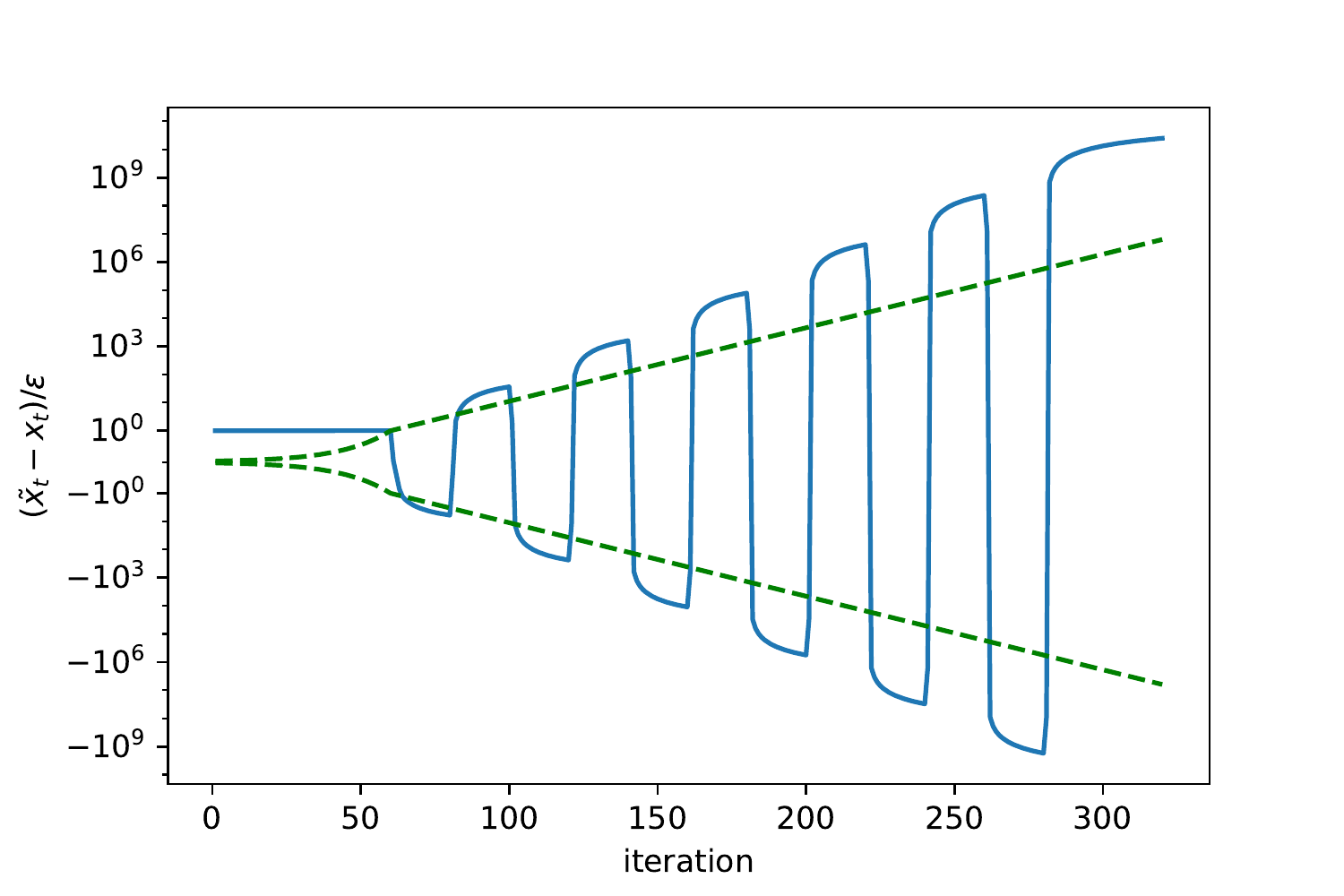}
         \caption{\small construction for $\eta=\ifrac{1}{(2\beta)}$;}
         \label{fig:eta_0.5}
     \end{subfigure}
     \begin{subfigure}[b]{0.49\linewidth}
         \centering
         \includegraphics[width=\textwidth]{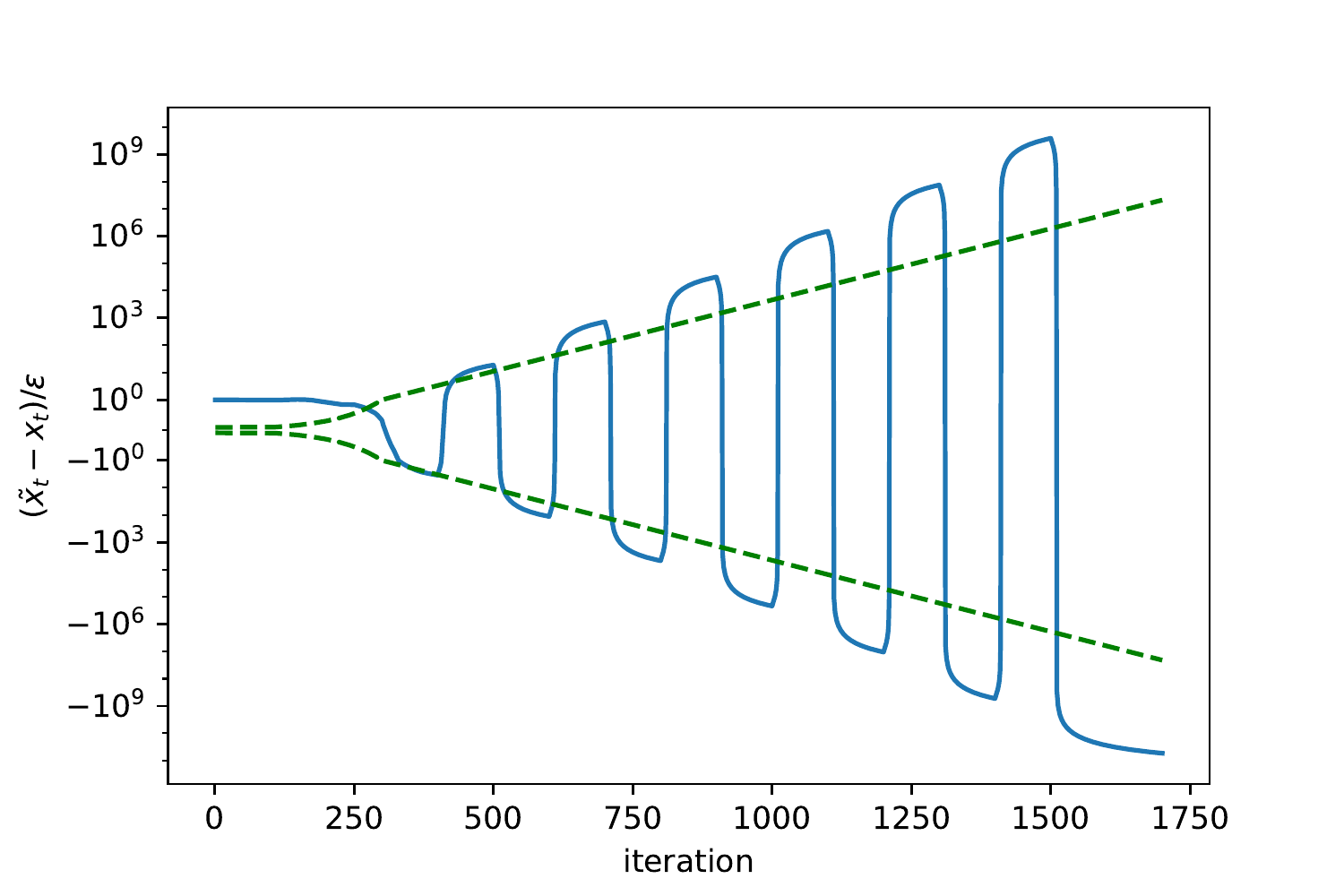}
          \caption{\small construction for $\eta=\ifrac{1}{(10\beta)}$.}
         \label{fig:eta_0.1}
     \end{subfigure}
    \caption{Divergence between trajectories (in log-scale) along the optimization process for different values of $\eta$. At steps $t = \Theta(\ifrac{i}{\eta\beta})$ ($i=1,2,\ldots$) \agd experiences an exponential growth in the divergence, as it reaches an interval with maximal Hessian. The dashed lines depict our theoretical exponential lower bound.}
    \label{fig:fig2}
\end{figure}

\paragraph{From initialization stability to uniform stability:}

Finally, we employ a simple reduction to translate our results regarding initialization stability to relate to uniform stability in the context of empirical risk optimization, where one uses (full-batch) \agd to minimize the (convex, smooth) empirical risk induced by a sample $S$ of $n$ examples.
Concretely, we show that by replacing a single example in $S$, we can arrive at a scenario where after one step of \agd on the original and modified samples the respective iterates are $\epsilon=\Theta(1/n)$ away from each other, whereas in the remaining steps both empirical risks simulate our bad function from before. Thus, we again end up with an exponential increase in the divergence between the two executions, that leads to a similar increase in the algorithmic (uniform) stability: the latter becomes as large as $\Omega(1)$ after merely $T = O(\log{n})$ steps of full-batch \agd.
The formal details of this reduction can be found in \cref{sec:uniform}.

\subsection{Discussion and Additional Related Work}

It is interesting to contrast our results with what is known for the closely
related heavy ball method~\citep{polyak1964some}. Classic results show that
while for convex quadratic objectives the (properly tuned) heavy ball method
attains the accelerated $O(1/T^2)$ convergence rate, for general convex and
smooth functions it might even fail to converge at all (see
\citealp{lessard2016analysis}). More specifically, it is known that there exists
objectives for which heavy ball assumes a cyclic trajectory that never gets
close to the optimum; it is then not hard to turn such a construction to an
instability result for heavy ball, as a slight perturbation in the cyclic
pattern can be shown to make the method converge to optimum. 

Also related to our work is \citet{devolder2014first}, that analyzed GD and \agd
with \emph{inexact first-order information}, namely, in a setting where each
gradient update is contaminated with a bounded yet arbitrary perturbation.
Interestingly, they showed that in contrast to GD, \agd suffers from an
accumulation of errors---which appears analogous to the linear increase in
initialization stability the latter experiences in the quadratic case. At the same
time, in the general convex case their results might seem to be at odds with
ours as we show that even a single perturbation at initialization suffices for
extreme instabilities. However, note that they analyze the impact of
perturbations on the \emph{convergence rate} of \agd (in terms of objective value),
whereas algorithmic stability is concerned with their effect on the actual
\emph{iterates}: specifically, initialization stability captures to what extent the
iterates of the algorithm might stray away from their original positions as a result of a small perturbation in the initialization point.

Our work leaves a few intriguing open problems for future investigation. Most
importantly, it remains unclear whether there exists a different accelerated
method (one with the optimal $O(1/T^2)$ rate for smooth and convex objectives)
that is also $\mathrm{poly}(T)$-stable. \citet{bubeck2015geometric} suggested a
geometric alternative to \agd that comes to mind, and it could be interesting to
check whether this method or a variant thereof is more stable than \agd. 
Another open question is to resolve the gap between our stability lower and upper bounds for \agd in the regime $\eta \ll 1/\beta$: while our lower bounds have an exponential dependence on $\eta$, the upper bounds do not.
Finally, it could be interesting to determine whether the $O(T\epsilon)$ initialization stability bound we
have for \agd in the quadratic case is tight (the corresponding uniform
stability result is actually tight even for linear losses, but this may not be
the case for initialization stability).

\section{Preliminaries}
\label{sec:prelims}

In this work we are interested in optimization of convex and smooth functions
over the $d$-dimensional Euclidean space $\R^d$. A function $f$ is said to be
$\beta$-smooth (for $\beta>0$) if its gradient is $\beta$-Lipschitz, namely, if
for all $u,v \in \R^d$ it holds that $\norm{\nabla f(u) - \nabla f(v)} \leq
\beta \norm{u-v}$.

\subsection{Nesterov Accelerated Gradient Method}

The Nesterov Accelerated Gradient (\agd) method \citep{nesterov1983method} we
consider in this paper takes the following form. Starting with $x_0$ and $y_0 =
x_0$, it iterates for $t=1,2,\ldots$:
\begin{align}
    \label{eq:agd-xt}
    x_{t} &= y_{t-1} - \eta \nabla f(y_{t-1});
    \\
    \label{eq:agd-yt}
    y_{t} &= x_{t} + \gamma_{t} (x_{t}-x_{t-1}),
\end{align}
where $\gamma_{t} = \smash{\frac{t-1}{t+2}}$ and $\eta>0$ is a step-size parameter. 
For a $\beta$-smooth convex objective $f$ and $0 < \eta \leq 1/\beta$, this method exhibits the convergence rate $O(1/\eta T^2)$;
for $\eta = 1/\beta$, this gives the optimal convergence rate for the class of $\beta$-smooth convex functions (see \cite{nesterov2003introductory}).
We remark that while \agd appears in several other forms in the literature, many of these are in fact equivalent to the one given in \cref{eq:agd-xt,eq:agd-yt}. For more details, see \cref{appendix:variants}.

Throughout, we use the notation $\agd(f,x_0,t,\eta)$ to refer to the iterates $(x_t,y_t)$ at step $t$ of \agd on $f$ initialized at $x_0$ with step size $\eta$. 
We sometimes drop the step size argument and use the shorter notation $\agd(f,x_0,t)$ when $\eta$ is clear from the context.

We will use the following definitions and relations throughout.
We introduce the following notation for the momentum term of \agd, for all $t > 0$:
\begin{align}
    m_t &\eqdef \gamma_{t}(x_{t}-x_{t-1}). \label{eq:defm}
\end{align}
Using this notation, we have that
\begin{align}
    x_{t} &= y_{t-1} - \eta \nabla f(y_{t-1}), \label{eq:xstep} \\
    y_{t} &= x_{t} + m_{t} = y_{t-1} -\eta\nabla f(y_{t-1}) + m_t, \label{eq:ystep} \\
    m_{t} &= \gamma_{t}(m_{t-1}-\eta\nabla f(y_{t-1})). \label{eq:mstep}
\end{align}
Here, \cref{eq:mstep} follows from \cref{eq:defm,eq:xstep,eq:ystep} via
\begin{align*}
    m_t 
    &= \gamma_t(x_t-x_{t-1}) \tag{\cref{eq:defm}} \\
    &= \gamma_t(y_{t-1}-\eta\nabla f(y_{t-1})-x_{t-1}) \tag{\cref{eq:xstep}} \\
    &= \gamma_t(x_{t-1}+m_{t-1}-\eta\nabla f(y_{t-1})-x_{t-1}) \tag{\cref{eq:ystep}} \\
    &= \gamma_{t}(m_{t-1}-\eta\nabla f(y_{t-1}))
    .
\end{align*}

\subsection{Algorithmic Stability}

We consider two forms of algorithmic stability. The first is the well-known \emph{uniform stability}~\citep{bousquet2002stability}, while the second is \emph{initialization stability} which we define here.

\paragraph{Uniform stability.}

Consider the following general setting of supervised learning. 
There is a sample space $\Z$ of examples and an unknown distribution $\D$ over $\Z$. We receive a training set $S=(z_1,\dots,z_n)$ of $n$ samples drawn i.i.d.~from $\D$.
The goal is finding a model $w$ with a small \emph{population risk}:
\begin{align*}
    R(w) 
    \eqdef 
    \E_{z \sim \D} [ \ell(w;z) ]
    ,
\end{align*}
where $\ell(w;z)$ is the loss of the model described by $w$ on an example $z$.
However, as we cannot evaluate the population risk directly, learning algorithms will be applied on the \emph{empirical risk} with respect to the sample $S$, given by
\begin{align*}
    R_S(w) 
    \eqdef 
    \frac{1}{n} \sum_{i=1}^{n} \ell(w;z_i)
    .
\end{align*}
In this paper, our algorithm of interest in this context is \emph{full-batch} \agd, namely, \agd applied to the empirical risk $R_S$.
We use the following notion of \textit{uniform stability}.%
\footnote{We give here a definition suitable for deterministic algorithms, which
suffices for the context of this paper. Similar definitions exist for
randomized algorithms; see for example
\cite{hardt2016train,feldman2018generalization}.}

\begin{definition}[uniform stability] \label{def:uni_stab}
Algorithm $A$ is $\epsilon$-uniformly stable if for all $S, S' \in \Z^n$ such that $S, S'$ differ in at most one example, the corresponding outputs $A(S)$ and $A(S')$ satisfy
\begin{align*}
    \sup_{z \in \Z} ~ \abs{\ell(A(S);z)-\ell(A(S');z)} 
    \leq 
    \epsilon.
\end{align*}
We use $\unif{A}{\ell}(n)$ to denote the infimum over all $\epsilon>0$ for which this inequality holds.
\end{definition}

\paragraph{Initialization stability.}

A second notion of algorithmic stability that we define and discuss in this paper, natural in the context of iterative optimization methods, pertains to the stability of the optimization algorithm with respect to its initialization point. \emph{Initialization stability} measures the sensitivity of the algorithm's output to a small perturbation in its initial point; formally,

\begin{definition}[initialization stability]
Let $A$ be an algorithm that when initialized at a point $x \in \R^d$, produces $A(x) \in \R^d$ as output. Then for $\epsilon>0$, the initialization stability of $A$ at $x_0 \in \R^d$ is given as
\begin{align*}
    \init{\text{A}}(x_0,\epsilon) 
    = 
    \sup\{ \norm{A(\xt_0)-A(x_0)} ~:~ \xt_0 \in \R^d,~ \norm{\xt_0-x_0} \leq \epsilon \}
    .
\end{align*}
\end{definition}

\section{Initialization Stability of \agd} \label{sec:initialization}

In this section we prove our first main result, regarding the initialization stability of \agd:

\begin{theorem} \label{thm:main_init_stab}
Let $\epsilon,G,\beta>0$ and $0<\eta \leq \ifrac{1}{\beta}$. Consider two initialization points $x_0 = 0, \tilde{x}_0 = \epsilon$.
Then, there exists a convex, $\beta$-smooth, $G$-Lipschitz function $f$ that attains a minimum over $\R$, and universal constants $c_1,c_2>0$, such that the sequences $(x_t,y_t)=\agd(f,x_0,t,\eta)$ and $(\xt_t,\yt_t)=\agd(f,\tilde x_0,t,\eta)$ satisfy
    $$
        \init{\agd_t}(x_0,\epsilon)
        \geq
        \abs{x_{t}-\xt_{t}} 
        \geq 
        \min\brk[c]!{\tfrac{G}{3\beta}, c_2 e^{c_1 \eta\beta t}\epsilon},
        \qquad
        \forall ~ t \in \brk[c]!{\ceil{\tfrac{10}{\eta\beta}}(i+2) ~:~ i = 1,2, \ldots}
        .
    $$
Furthermore, for all $t > \ceil{\frac{10}{\eta\beta}} \brk!{ \ln\frac{3G}{2\beta\epsilon}+3}$ it holds that
$
    \init{\agd_t}(x_0,\epsilon)
    \geq 
    \frac{G}{3\beta}
    .
$
\end{theorem}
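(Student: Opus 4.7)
My plan is to construct the hard function $f$ inductively, in lockstep with two simulated \agd executions started at $x_0 = 0$ and $\xt_0 = \epsilon$, in the spirit of the ``gadget'' sketched in the overview. The key quantities I would track are the per-step differences $\dltx{t} \eqdef x_t - \xt_t$, $\dlty{t} \eqdef y_t - \yt_t$, and $\dltm{t} \eqdef m_t - \mt_t$. For a function built from pieces whose second derivative is either $0$ or $\beta$, the gradient-difference $\nabla f(y_{t-1}) - \nabla f(\yt_{t-1})$ equals either $0$ or $\beta\dlty{t-1}$, depending on which piece each of $y_{t-1}, \yt_{t-1}$ falls into. Plugging this into the \agd recursions~\cref{eq:ystep,eq:mstep} reduces the analysis to a scalar recursion in $(\dlty{t}, \dltm{t})$ governed by a product of $2\times 2$ matrices whose coefficients are determined entirely by the local Hessian values -- exactly the setup highlighted in the overview.

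The construction proceeds in phases of length $\ceil{10/(\eta\beta)}$. At the start of phase $i$ I would append a new segment to $f$ consisting of a flat (linear) sub-interval followed by a short quadratic ``bump'' of curvature $\beta$, with break-points placed based on the iterates $(y_s, \yt_s)$ computed so far. On the flat sub-interval the gradient-difference vanishes and $\dlty{t}$ drifts under pure momentum, while $\dltm{t}$ is only rescaled by $\gamma_t$; on the quadratic sub-interval the trajectory currently behind sees a larger gradient than the one ahead, and a direct calculation of the $2\times 2$ product shows this flips the sign of $\dltm{t}$ and multiplies $\abs{\dltx{t}}$ by a constant factor $\kappa>1$ depending only on $\eta\beta$. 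Iterating across phases yields $\abs{x_{t_i} - \xt_{t_i}} \geq \kappa^{i}\epsilon$ at the prescribed times $t_i = \ceil{10/(\eta\beta)}(i+2)$, which I would rewrite as $c_2 e^{c_1 \eta\beta t_i}\epsilon$ for suitable universal constants $c_1,c_2$. To obtain a globally convex, $\beta$-smooth, $G$-Lipschitz function that attains a minimum, I would glue the linear and quadratic pieces with matching values and derivatives (so $f \in C^{1}$ with $0 \le f'' \le \beta$ in the weak sense, ensuring convexity and smoothness), and clip the slope at $\pm G$ by continuing $f$ linearly once $\abs{f'}$ reaches $G$.

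For the ``furthermore'' part, the slope clipping caps the amplification: once the exponential lower bound $c_2 e^{c_1 \eta\beta t}\epsilon$ crosses $G/(3\beta)$, I would verify that the divergence cannot subsequently shrink -- either the iterates continue to encounter amplifying gadget pieces (so $\abs{\dltx{t}}$ stays above $G/(3\beta)$), or both land in the clipped linear tail where $\nabla f$ is constant, in which case $\dltx{t}$ evolves by the gradient-free recursion $\dltx{t+1} = (1+\gamma_t)\dltx{t} - \gamma_t \dltx{t-1}$, whose eigenvalues are $1$ and $\gamma_t$ and which therefore does not contract. Solving $c_2 e^{c_1\eta\beta t}\epsilon \geq G/(3\beta)$ then yields the quoted threshold $t \gtrsim \ceil{10/(\eta\beta)}\ln(G/(\beta\epsilon))$.

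The main obstacle is the inductive bookkeeping that guarantees the iterates actually visit the intended pieces of $f$ in the intended order: one must verify, at each phase, that the absolute positions $y_s, \yt_s$ fall on the prescribed sides of every newly-placed break-point, so that the local Hessian pattern encountered by each run matches the amplifying pattern used in the $2\times 2$ calculation. This forces one to track not only the differences $\dltx{\cdot}, \dlty{\cdot}, \dltm{\cdot}$ but also the individual trajectories $x_s, \xt_s, m_s, \mt_s$, and to choose the length and position of each new quadratic bump as a function of these -- large enough to dominate the accumulated positional perturbations yet small enough to keep $\abs{\nabla f}$ below $G$ throughout. Once this geometric invariant is maintained across phases, the exponential lower bound and the saturation claim both reduce to the scalar recursion computations outlined above.
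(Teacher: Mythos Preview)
Your plan matches the paper's approach closely: an inductive construction where each phase appends a quadratic bump of curvature $\beta$ whose endpoints are exactly the two $y$-iterates at a designated step $n_j$, so that $\dltf{n_j} = \beta\dlty{n_j}$ while $\dltf{t}=0$ at all other steps; the exponential growth then follows from the scalar $(\dlty{t},\dltm{t})$ recursion you describe, and you correctly flag the main technical burden of ensuring both trajectories occupy the intended piece at each step (the paper handles this by showing that each $y$-iterate advances by more than $2G/\beta$ per step once $t \gtrsim 1/(\eta\beta)$, while every bump has width below $G/(2\beta)$).

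Two points need repair. First, clipping the slope at $\pm G$ does not produce a minimizer: in this construction the slope starts at $-G$ and the bumps only raise it toward $-G/2$, so $f'<0$ throughout and your clipping never triggers. The paper stops adding bumps at the first index $M$ where the slope would exceed $-G/2$, then appends a final quadratic segment of curvature $\beta$ past the last relevant iterate until the slope reaches $0$, followed by a plateau, and checks that this tail does not disturb the earlier iterates or their differences. Second, your non-contraction argument for the linear tail is not valid: the eigenvalues $1$ and $\gamma_t$ of each single-step matrix do not control a \emph{time-varying} product, and $\dltx{t}$ can drift toward $0$ if $\dltm{t}$ and $\dltx{t}$ carry opposite signs at the handoff. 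The paper instead proves the sign alignment $\sigma(\dlty{n_{M+1}}) = \sigma(\dltm{n_M+1})$---a consequence of the amplification bound $\abs{\dlty{n_{M+1}}} \geq 3\abs{\dlty{n_M}}$---which forces every subsequent $\dltm{k}$ to share the sign of $\dlty{n_{M+1}}$, so that $\abs{\dlty{t}} = \abs{\dlty{n_{M+1}}} + \sum_k \abs{\dltm{k}}$ can only grow.
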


In words, the theorem establishes an exponential blowup in the distance between the two trajectories $x_{t}$ and $\xt_{t}$ during the initial $O(1/\epsilon)$ steps, after which the (lower bound on the) distance reaches a constant and stops increasing.
Notice that in the blowup phase, an increase in distance happens roughly every  $\eta\beta$ steps; indeed, the actual behaviour of \agd on the function we construct exhibit fluctuations in the difference $x_{t}-\xt_{t}$, as illustrated in \cref{fig:fig2}.
We remark that a similar bound holds also for the $y_t$ sequence produced by \agd.

\paragraph{Construction.}

Throughout this section, we will assume without loss of generality that $0 < \epsilon < \ifrac{G}{2\beta}$. (When $\epsilon \geq \ifrac{G}{2\beta}$ our result holds simply for a constant function.)
To lower bound the initialization stability and prove the theorem, we will rely on the following construction of functions $f_0,f_1,\ldots : \R \to \R$. Let the parameters $G,\beta,\eta,\epsilon > 0$ be given, and for all $i \geq 0$ define $n_i \eqdef \ceil{10/\eta\beta}(i+2)$.
The construction proceeds as follows:
\begin{enumerate}[label=(\roman*),nosep]
    \item
    Let $f_0(x) \eqdef -Gx$;
    
    \item For $i \geq 1$:
    \begin{itemize}
        \item
        Let $(x_{n_i},y_{n_i})=\agd(f_{i-1},0,n_i,\eta)$ and $(\xt_{n_i},\yt_{n_i})=\agd(f_{i-1},\epsilon,n_i,\eta)$;
        
        \item
        Define $f_{i} : \R \to \R$ as follows:
        \begin{align*}
            f_{i}(x) &\eqdef -Gx + \beta \int_{-\infty}^x \int_{-\infty}^{y} \IND{\exists ~ j \leq i \text{ s.t. } z \in [\ymin{n_j},\ymax{n_j}]} dz dy,
        \end{align*}
        where $\ymin{n_i} = \min\{y_{n_i},\yt_{n_i}\}$, $\ymax{n_i} = \max\{y_{n_i},\yt_{n_i}\}$;
    \end{itemize}

    \item
    Let $M = \sup\brk[c]*{i \geq 0 ~:~ \max_x \nabla f_{j}(x) < -\frac12 G, ~~ \forall ~ 0 \leq j \leq i }$.
\end{enumerate}

Note that the above recursion defines an infinite sequence of functions $f_0,f_1,f_2,\ldots : \R \to \R$. Ultimately, we will be interested in the functions $\brk[c]{f_i}_{i \leq M}$ which we will analyze in order to prove the instability result.
Further, note that $\max_x \nabla f_0(x) < -\frac12 G$, thus $M$ itself is well-defined, possibly $\infty$.
The functions constructed above are not lower-bounded (and thus do not admit a minimum). Below we define a lower-bounded adaptation of $f_M$ (assuming $1 \leq M <\infty$, which is proved in \cref{lemma:finite_m} later on).
Our modified version of $f_M$, termed $\fmp$ is defined by a quadratic continuation of $f_M$ right of $\qcp \eqdef \ymax{n_{M}}$, up to a plateau. This construction is defined formally as:
\begin{align*}
    \fmp(x) &\eqdef
    \begin{cases}
    f_M(x) &\qquad x \leq\qcp; \\
    f_M(\qcp)+\nabla f_M(\qcp)(x-\qcp)+\frac{\beta}{2} (x-\qcp)^2 &\qquad\qcp < x \leq \qcp - \frac{1}{\beta} \nabla f_M(\qcp); \\
    f_M(\qcp)-\frac{1}{2\beta} \nabla f_M(\qcp)^2 &\qquad \text{otherwise}.
    \end{cases}
\end{align*}

\paragraph{Analysis.}

We start by stating a few lemmas we will use in the proof of our main theorem.
Our focus is on the functions $f_i$ for $0 \leq i \leq M$, deferring the analysis of $\fmp$ to after we establish that $M$ is finite.
First, we show that the functions we constructed are indeed convex, smooth and Lipschitz.

\begin{lemma} \label{lemma:f_i}
For all $0 \leq i \leq M$, the function $f_i$ is convex, $\beta$-smooth and $G$-Lipschitz.
\end{lemma}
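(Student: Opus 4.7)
The plan is to directly compute the gradient of $f_i$ from its integral definition and read off all three properties from the resulting expression. Writing $A_i \eqdef \bigcup_{j=1}^{i} [\ymin{n_j}, \ymax{n_j}]$ for the union of closed intervals appearing in the indicator, a straightforward application of the fundamental theorem of calculus gives
\begin{equation*}
    \nabla f_i(x) = -G + \beta \int_{-\infty}^{x} \I_{A_i}(z)\, dz = -G + \beta \cdot \mu\brk!{A_i \cap (-\infty, x]},
\end{equation*}
where $\mu$ denotes Lebesgue measure. This expression is continuous and piecewise affine in $x$, and all three required properties follow essentially immediately from it.

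For $\beta$-smoothness, observe that the integrand $\beta\I_{A_i}$ is bounded by $\beta$ pointwise. Hence, for any $x \leq x'$,
\begin{equation*}
    \abs{\nabla f_i(x') - \nabla f_i(x)} = \beta\int_{x}^{x'} \I_{A_i}(z)\, dz \leq \beta(x'-x),
\end{equation*}
so $\nabla f_i$ is $\beta$-Lipschitz. For convexity, the same expression shows that $\nabla f_i$ is non-decreasing in $x$ (since $\I_{A_i} \geq 0$), which is equivalent to convexity of $f_i$ in one dimension.

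For the $G$-Lipschitz property we must show $\abs{\nabla f_i(x)} \leq G$ for all $x$. The lower bound is immediate since the integral term is non-negative, giving $\nabla f_i(x) \geq -G$. The upper bound is where the definition of $M$ enters: by construction, for every $i \leq M$ we have $\max_x \nabla f_i(x) < -\tfrac{1}{2}G$, and in particular $\nabla f_i(x) < G$ everywhere. Combining the two bounds yields $\abs{\nabla f_i(x)} \leq G$, as required.

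The only subtle point in this argument is the invocation of the definition of $M$ for the upper bound on the gradient, and this is really a definitional check rather than an obstacle. Everything else is a direct consequence of the explicit form of $\nabla f_i$ and the fact that the added term is the double integral of a $\{0,\beta\}$-valued function.
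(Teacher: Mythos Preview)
Your proof is correct and follows essentially the same approach as the paper: compute the derivative(s) directly from the integral definition and read off convexity, $\beta$-smoothness, and the Lipschitz bound, with the upper bound on $\nabla f_i$ coming from the definition of $M$. The only cosmetic difference is that the paper appeals to the second derivative $\nabla^2 f_i \in \{0,\beta\}$ for convexity and smoothness, whereas you argue directly via monotonicity and Lipschitzness of $\nabla f_i$; your version is arguably a touch cleaner since $f_i$ is not twice differentiable at the interval endpoints.
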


\begin{proof}
The second derivative of $f_i$ is
\begin{align*}
    \nabla^2 f_i(x) = \beta\cdot\IND{\exists ~ j \leq i ~\text{s.t.}~ z \in [\ymin{n_j},\ymax{n_j}]} \in \brk[c]{0,\beta}.
\end{align*}
Thus, $f_i$ is convex and $\beta$-smooth. We lower bound the first derivative by
\begin{align*}
    \nabla f_i(x) = -G + \beta \int_{-\infty}^x \IND{\exists ~ j \leq i ~\text{s.t.}~ z \in [\ymin{n_j},\ymax{n_j}]} dz \geq -G,
\end{align*}
and by the definition of $M$, $\nabla f_i(x) < -G/2$.
Hence for all $x \in \R$ we have $\nabla f_i(x) \in [-G,-G/2)$, so $f_i$ is $G$-Lipschitz over $\R$.
\end{proof}

Next, we analyze the iterations of \agd on $f_i$ for any given $0 \leq i \leq M$. 
Fix such index $i$ and consider $(x_t,y_t)=\agd(f_i,0,t)$ and $(\xt_t,\yt_t)=\agd(f_i,\epsilon,t)$ for all $t \leq T$ for some $T \geq n_{i+1}$. 
We introduce the following compact notation for differences between the \agd terms related to the two sequences:
\begin{align*}
    \dltx{t} \eqdef x_t - \tilde{x}_t, \qquad
    \dlty{t} \eqdef y_t - \tilde{y}_t, \qquad
    \dltf{t} \eqdef \nabla f_i(x_t) - \nabla f_i(\tilde{x}_t), \qquad
    \dltm{t} \eqdef m_t - \tilde{m}_t.
\end{align*}
From the update rules of \agd (\cref{eq:xstep,eq:ystep,eq:mstep}), we have that
\begin{align}
    \dltx{t} &= \dlty{t-1} - \eta \dltf{t-1}, \label{eq:dxstep} \\
    \dlty{t} &= \dltx{t} + \dltm{t} = \dlty{t-1} - \eta \dltf{t-1} + \dltm{t}, \label{eq:dystep} \\
    \dltm{t} &= \gamma_{t}(\dltm{t-1}-\eta\dltf{t-1}). \label{eq:dmstep}
\end{align}
Our next lemma below describes the evolution of the differences $\dltf{t}$ and $\dltm{t}$ in terms of $\dlty{t}$.

\begin{lemma} \label{lemma:dltfm}
For all $t \leq T$,
\begin{align*}
    \dltf{t} =
    \begin{cases}
    \beta\dlty{t} & \mbox{if } t \in \brk[c]{n_j}_{j=1}^{i}; \\
    0 & \mbox{otherwise,}
    \end{cases}
\qquad
\text{and}
\qquad
    \dltm{t} =
    \begin{cases}
    \gamma_t(\dltm{t-1} - \eta\beta\dlty{t-1}) & \mbox{if } t \in \brk[c]{n_j+1}_{j=1}^{i}; \\
    \gamma_t\dltm{t-1} & \mbox{otherwise.}
    \end{cases}
\end{align*}
\end{lemma}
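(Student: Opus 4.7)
The strategy is to derive the $\dltm{t}$ formula as an immediate corollary of the $\dltf{t}$ formula via the recursion~\cref{eq:dmstep}, and to prove the $\dltf{t}$ formula by locating the iterates $y_t, \tilde{y}_t$ within the piecewise-affine landscape of $\nabla f_i$.

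For the momentum part, \cref{eq:dmstep} gives $\dltm{t} = \gamma_t(\dltm{t-1} - \eta \dltf{t-1})$. Once one knows that $\dltf{t-1} = \beta \dlty{t-1}$ when $t-1 = n_j$ for some $j \leq i$ (equivalently, $t = n_j + 1$) and $\dltf{t-1} = 0$ otherwise, plugging this directly into the recursion reproduces the claimed two-case expression for $\dltm{t}$. So the whole proof reduces to establishing the $\dltf{t}$ claim.

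For the gradient part, I would start from the structural observation already used in the proof of~\cref{lemma:f_i}: $\nabla f_i$ is piecewise affine, with slope $\beta$ on each interval $[\ymin{n_j}, \ymax{n_j}]$ with $j \leq i$, and constant on the complementary maximal gaps. Computing $\dltf{t} = \nabla f_i(y_t) - \nabla f_i(\tilde{y}_t)$ therefore reduces to locating the pair $(y_t, \tilde{y}_t)$ relative to these intervals. The case $t = n_j$ is immediate from the construction, since by definition $\{\ymin{n_j}, \ymax{n_j}\} = \{y_{n_j}, \tilde{y}_{n_j}\}$ places both iterates at the endpoints of the same $\beta$-slope piece; affineness on that piece then gives $\dltf{n_j} = \beta(y_{n_j} - \tilde{y}_{n_j}) = \beta \dlty{n_j}$.

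The remaining case $t \notin \{n_j\}_{j=1}^{i}$ asks that both $y_t, \tilde{y}_t$ lie in a common constant-gradient region of $f_i$, from which $\dltf{t} = 0$ follows. I expect this to be the main obstacle and plan to handle it by induction on the gadget index $j$: by construction, the trajectories of $\agd$ on $f_i$ and on $f_{j-1}$ agree up to step $n_j$ (since the intervals added for indices $\geq j$ are chosen based on those very iterates and sit strictly to the right of every iterate visited before time $n_j$), and within each inter-gadget window $n_{j-1} < t < n_j$ both $y_t$ and $\tilde{y}_t$ stay confined to the single flat region sandwiched between the $(j{-}1)$-th and $j$-th intervals. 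A bookkeeping argument on the monotone rightward drift of the iterates---driven by the dominant $-G/2$ to $-G$ component of $\nabla f_i$ guaranteed by~\cref{lemma:f_i}---then closes the induction and yields $\dltf{t} = 0$ at every non-$n_j$ time, completing the proof.
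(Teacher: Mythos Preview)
Your proposal is correct and follows essentially the same route as the paper: the $\dltm{t}$ formula is an immediate consequence of the $\dltf{t}$ formula via \cref{eq:dmstep}, and the $\dltf{t}$ formula is established by localizing $y_t,\tilde y_t$ within the piecewise-affine landscape of $\nabla f_i$, using (i) the consistency of the trajectories on $f_i$ and $f_{j-1}$ up to step $n_j$ (the paper isolates this as \cref{lemma:consistency}) and (ii) the monotone rightward drift of the iterates to show both lie in a common flat region when $t\notin\{n_j\}$ (the paper packages this as \cref{clm:eta_t_ni}, proved via \cref{clm:min_step,clm:upper_bound_diff}). One small correction: in your $t=n_j$ case you write ``by definition $\{\ymin{n_j},\ymax{n_j}\}=\{y_{n_j},\tilde y_{n_j}\}$'', but the definition uses the iterates on $f_{j-1}$, not $f_i$, so the consistency argument you invoke later is already needed here.
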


The following lemma summarise the evolution of the distance between the sequences $y_t$ and $\yt_t$ at steps $t \in \{n_j\}_{1 \leq j \leq i+1}$ and for $t>n_{i+1}$. The exponential growth is achieved by a balance between the difference in momentum terms and the difference between the sequences.

\begin{lemma} \label{lemma:evolution}
For the difference terms $\dlty{t}$, we have the following:
\begin{enumerate}[label=(\roman*),nosep]
    \item
    For all $1 \leq j \leq i$, it holds that
    $
        \frac{2}{3} \eta\beta\abs{\dlty{n_j}} \leq \abs{\dltm{n_j+1}} \leq \frac{1}{5} \eta\beta\abs{\dlty{n_{j+1}}}.
    $
    \item
    For all $0 \leq j \leq i$, it holds that $\abs{\dlty{n_{j+1}}}=\ymax{n_{j+1}}-\ymin{n_{j+1}} \geq 3^{j} \epsilon$.
    \item
    For all $t > n_{i+1}$, it holds that $\abs{\dlty{t}} \geq \abs{\dlty{n_{i+1}}}$.
\end{enumerate}
\end{lemma}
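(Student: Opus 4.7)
The plan is to prove all three parts jointly by induction on $j$, carefully tracking the \emph{signs} of $\dlty{n_j}$ and $\dltm{n_j+1}$ alongside their magnitudes. The key mechanism is an alternating-sign pattern: at each kick time $t=n_j$ the momentum update produces $\dltm{n_j+1}$ with sign opposite to $\dlty{n_j}$, and its subsequent no-kick accumulation drives $\dlty{}$ through zero to the opposite side with an amplified magnitude.

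For the base case $j=0$, since $\dltm{0}=0$ and by \cref{lemma:dltfm} there is no kick before $t=n_1$, both $\dltm{t}=0$ and $\dlty{t}=-\epsilon$ persist throughout $0\le t\le n_1$, giving (ii) at $j=0$. For the inductive step, on the no-kick interval $(n_j,n_{j+1}]$ iterating $\dltm{s}=\gamma_s\dltm{s-1}$ with $\gamma_k=(k-1)/(k+2)$ telescopes to $\dltm{s}=\tfrac{(n_j+1)(n_j+2)(n_j+3)}{s(s+1)(s+2)}\dltm{n_j+1}$, and summing via the partial-fraction identity $\tfrac{1}{s(s+1)(s+2)}=\tfrac{1}{2}\bigl(\tfrac{1}{s(s+1)}-\tfrac{1}{(s+1)(s+2)}\bigr)$ yields the closed form $\dlty{n_{j+1}}=(1-\eta\beta)\dlty{n_j}+(1+S_j)\dltm{n_j+1}$, where $S_j=\tfrac{n_j+1}{2}\bigl(1-\tfrac{(n_j+2)(n_j+3)}{(n_{j+1}+1)(n_{j+1}+2)}\bigr)$. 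Substituting $n_j=\ceil{10/(\eta\beta)}(j+2)$ and $n_{j+1}=\ceil{10/(\eta\beta)}(j+3)$, a direct calculation gives the uniform bound $\eta\beta(1+S_j)\geq 6.5$ over $j\geq 1$ and $0<\eta\beta\leq 1$; the constant $10$ in the construction is calibrated precisely for this.

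Writing $\delta=|\dltm{n_j}|/|\dlty{n_j}|$, the inductive hypothesis from (i) at level $j-1$ gives $\delta\leq\tfrac{\eta\beta}{5}$ (via $|\dltm{n_j}|\leq|\dltm{n_{j-1}+1}|\leq\tfrac15\eta\beta|\dlty{n_j}|$), and we maintain that $\dltm{n_j}$ and $\dlty{n_j}$ share a sign. Because $\delta<\eta\beta$, the kick update $\dltm{n_j+1}=\gamma_{n_j+1}(\dltm{n_j}-\eta\beta\dlty{n_j})$ has magnitude $\gamma_{n_j+1}(\eta\beta-\delta)|\dlty{n_j}|\in[\tfrac{4}{5}\gamma_{n_j+1}\eta\beta,\gamma_{n_j+1}\eta\beta]\cdot|\dlty{n_j}|$, with sign opposite to $\dlty{n_j}$; the lower bound in (i), $|\dltm{n_j+1}|\geq\tfrac{2}{3}\eta\beta|\dlty{n_j}|$, then follows since $\gamma_{n_j+1}\geq 10/11\geq 5/6$ (using $n_j\geq 30$). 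Plugging into the closed form and exploiting the opposite-sign cancellation gives $|\dlty{n_{j+1}}|\geq\gamma_{n_j+1}(\eta\beta-\delta)(1+S_j)|\dlty{n_j}|-(1-\eta\beta)|\dlty{n_j}|\geq\tfrac{8}{11}\eta\beta(1+S_j)|\dlty{n_j}|-(1-\eta\beta)|\dlty{n_j}|\geq 3|\dlty{n_j}|$, using $\gamma_{n_j+1}(\eta\beta-\delta)\geq\tfrac{8\eta\beta}{11}$ and $\eta\beta(1+S_j)\geq 6.5$, which establishes (ii); moreover $\dlty{n_{j+1}}$ inherits the sign of the dominant $(1+S_j)\dltm{n_j+1}$ term, opposite to $\dlty{n_j}$, advancing the sign invariant. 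The upper bound in (i) reduces after the same substitution to the inequality $\gamma_{n_j+1}(\eta\beta-\delta)\bigl[\eta\beta(1+S_j)-5\bigr]\geq\eta\beta(1-\eta\beta)$, which holds because the LHS is at least $\tfrac{8\eta\beta}{11}\cdot 1.5=\tfrac{12\eta\beta}{11}>\eta\beta\geq\eta\beta(1-\eta\beta)$.

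For part (iii), after $t=n_i+1$ there are no more kicks on $f_i$, so $\dltm{t}=\gamma_t\dltm{t-1}$ preserves the constant sign of $\dltm{n_i+1}$ for all $t>n_i+1$. By the established sign invariant, $\dlty{n_{i+1}}$ shares this sign, so the accumulation $\dlty{t}-\dlty{n_{i+1}}=\sum_{s=n_{i+1}+1}^{t}\dltm{s}$ preserves the sign of $\dlty{n_{i+1}}$ for every $t>n_{i+1}$, immediately yielding $|\dlty{t}|\geq|\dlty{n_{i+1}}|$. The main obstacle is the concrete algebraic calibration: verifying via the explicit telescoping that $\eta\beta(1+S_j)\geq 6.5$ holds uniformly with $n_j=\ceil{10/(\eta\beta)}(j+2)$, which requires careful bounding of the quotient $(n_j+2)(n_j+3)/\bigl((n_{j+1}+1)(n_{j+1}+2)\bigr)$ using $n_{j+1}-n_j=\ceil{10/(\eta\beta)}$; once this single inequality is in hand, the three parts of the lemma close via elementary sign and magnitude bookkeeping with comfortable numerical margins.
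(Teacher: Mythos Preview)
Your approach is correct and runs parallel to the paper's: both derive the same closed form
\[
\dlty{n_{j+1}} = (1-\eta\beta)\dlty{n_j} + \Bigl(\textstyle\sum_{k=n_j+1}^{n_{j+1}}\prod_{t=n_j+1}^{k-1}\gamma_{t+1}\Bigr)\dltm{n_j+1},
\]
both show the coefficient is at least $6.5/(\eta\beta)$, and both control the kick via the inductive bound $|\dltm{n_j}|\le\tfrac{\eta\beta}{5}|\dlty{n_j}|$. The organizational differences are: (a) you use exact partial-fraction telescoping where the paper uses an integral lower bound (its \cref{clm:cool_sum}); (b) you run a single joint induction carrying a sign invariant, whereas the paper proves parts (i)--(iii) as three separate lemmas, handling (i) and (ii) entirely with triangle inequalities (no signs needed) and only recovering the sign relation $\sigma(\dlty{n_{i+1}})=\sigma(\dltm{n_i+1})$ \emph{a posteriori} for part (iii), by arguing that the already-established bound $|\dlty{n_{i+1}}|\ge 3|\dlty{n_i}|$ forces the $(1+S_i)\dltm{n_i+1}$ term to dominate. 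Your sign bookkeeping costs a little more upfront but makes (iii) a one-liner; the paper's route is terser for (i)--(ii) but needs the extra sign-extraction step at the end. One small omission in your sketch: the equality $|\dlty{n_{j+1}}|=\ymax{n_{j+1}}-\ymin{n_{j+1}}$ in (ii) requires the consistency statement (\cref{lemma:consistency}) that iterates over $f_i$ and $f_j$ agree up to step $n_{j+1}$; the paper invokes this explicitly.
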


Finally, we can show that $\fmp$ is well-defined by proving that $M$ is finite in the following lemma. The bound of $M$ also indicate that after $O(\log\tfrac{1}{\epsilon})$ steps the two trajectories $y_t,\tilde{y}_t$ reach a constant distance.
\begin{lemma}\label{lemma:finite_m}
It holds that $1 \leq M \leq \ln \frac{3G}{2\beta\epsilon}$ (in particular, $M$ is finite), and $\ymax{n_{M+1}}-\ymin{n_{M+1}} \geq \frac{G}{3\beta}$.
\end{lemma}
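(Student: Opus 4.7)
The plan is to prove the three parts of the lemma in turn—$M\geq 1$, $M\leq\ln(3G/(2\beta\epsilon))$ (so in particular $M<\infty$), and $\ymax{n_{M+1}}-\ymin{n_{M+1}}\geq G/(3\beta)$—each via a direct appeal to \cref{lemma:evolution} combined with the defining property of $M$ and the integral form of $\nabla f_i$.

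For $M\geq 1$ I would analyze \agd on the affine $f_0(x)=-Gx$ directly: both trajectories see identical constant gradients, so $\dltf{t}\equiv 0$, and a one-step induction on \cref{eq:dmstep,eq:dystep} yields $\dltm{t}\equiv 0$ and $\dlty{t}\equiv -\epsilon$. Hence the single interval $[\ymin{n_1},\ymax{n_1}]$ of $f_1$ has width $\epsilon$, and the integral form of $\nabla f_1$ gives $\max_x\nabla f_1(x)=-G+\beta\epsilon<-G/2$ under the standing assumption $\epsilon<G/(2\beta)$. Together with the trivial $\nabla f_0\equiv -G$, this establishes $M\geq 1$.

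For the upper bound on $M$, observe that for every $1\leq i\leq M$, \cref{lemma:evolution}(ii) applied to the $\agd(f_i)$ sequences with $j=i-1$ yields $\abs{\dlty{n_i}}\geq 3^{i-1}\epsilon$, which equals the width of the interval $[\ymin{n_i},\ymax{n_i}]$ that constitutes $f_i$. Evaluating $\nabla f_i$ at the right endpoint of this interval then gives
\begin{align*}
    \max_x \nabla f_i(x) \geq -G + \beta\cdot 3^{i-1}\epsilon,
\end{align*}
and the defining property $\max_x\nabla f_i<-G/2$ forces $3^{i-1}\epsilon<G/(2\beta)$, i.e.\ $i<\log_3(3G/(2\beta\epsilon))$. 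Since this must hold for every $i\leq M$, it simultaneously rules out $M=\infty$ and, upon taking $i=M$, gives $M<\log_3(3G/(2\beta\epsilon))\leq\ln(3G/(2\beta\epsilon))$ (using $\ln 3>1$).

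The lower bound $\abs{\dlty{n_{M+1}}}=\ymax{n_{M+1}}-\ymin{n_{M+1}}\geq G/(3\beta)$ is the step I expect to be the most delicate. Here I would chain the two inequalities of \cref{lemma:evolution}(i) (applied with $i=M$) into the geometric ratio $\abs{\dlty{n_{j+1}}}\geq\tfrac{10}{3}\abs{\dlty{n_j}}$ for $1\leq j\leq M$, iterate backward from $M+1$, and sum a geometric tail to get $\sum_{j=1}^{M+1}\abs{\dlty{n_j}}\leq\tfrac{10}{7}\abs{\dlty{n_{M+1}}}$. By maximality of $M$, the defining property must fail at index $M+1$, so $\max_x\nabla f_{M+1}\geq -G/2$, which—again via the integral form of $\nabla f_{M+1}$—is equivalent to
\begin{align*}
    \mu\brk*{\bigcup_{j=1}^{M+1}[\ymin{n_j},\ymax{n_j}]} \geq G/(2\beta),
\end{align*}
with $\mu$ the Lebesgue measure. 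Subadditivity of $\mu$ bounds this union measure by $\sum_{j\leq M+1}\abs{\dlty{n_j}}$, and combining the two estimates yields $\abs{\dlty{n_{M+1}}}\geq 7G/(20\beta)>G/(3\beta)$. The main bookkeeping hazard throughout is to keep straight that the quantities $\abs{\dlty{n_j}}$ in \cref{lemma:evolution} pertain to the $\agd(f_i)$ sequences, whereas the intervals $[\ymin{n_j},\ymax{n_j}]$ were originally defined from $\agd(f_{j-1})$—but this consistency is already built into \cref{lemma:evolution}(ii) and propagates cleanly through the argument.
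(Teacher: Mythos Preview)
Your proposal is correct and follows essentially the same route as the paper's proof: all three parts (the direct computation on $f_0$ for $M\geq 1$, the contradiction between $\abs{\dlty{n_i}}\geq 3^{i-1}\epsilon$ and $\max_x\nabla f_i<-G/2$ for the upper bound, and the geometric-sum argument combined with the failure of the defining property at $M+1$ for the distance lower bound) match the paper's argument. The only cosmetic difference is that you chain \cref{lemma:evolution}(i) to the sharper ratio $10/3$ rather than the paper's $3$, giving $7G/(20\beta)$ instead of $G/(3\beta)$, which is harmless.
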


Now we can return to our $\fmp$. First, we show it indeed posses the basic properties for \cref{thm:main_init_stab}.

\begin{lemma}\label{lemma:f_props}
The function $\fmp$ is convex, $\beta$-smooth, $G$-Lipschitz and attains a minimum $x^{\star} \in \argmin_x f(x)$ s.t. 
$
    \abs{x_0-x^\star} 
    = 
    O\brk!{(\ifrac{G}{\eta\beta^2})\log\brk{\ifrac{G}{\beta \epsilon}}^2}
$
for $x_0 \in \set{0,\epsilon}$.
\end{lemma}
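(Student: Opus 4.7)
The plan is to verify each claimed property by exploiting the explicit piecewise form of $\fmp$ together with \cref{lemma:f_i,lemma:finite_m}, handling continuity, smoothness, and the Lipschitz property in turn, and only then addressing the quantitative distance bound.

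First, I would establish that $\fmp$ is $C^1$. At the left junction $x = \qcp$, both the value $f_M(\qcp)$ and the derivative $\nabla f_M(\qcp)$ agree between the $f_M$ piece and the quadratic piece by construction. At the right junction $x = \qcp - \tfrac{1}{\beta}\nabla f_M(\qcp)$, which lies strictly to the right of $\qcp$ since \cref{lemma:f_i} gives $\nabla f_M(\qcp)\in[-G,-G/2)$, a direct substitution shows that the quadratic piece evaluates to $f_M(\qcp)-\tfrac{1}{2\beta}\nabla f_M(\qcp)^2$ with derivative $\nabla f_M(\qcp)+\beta\cdot(-\tfrac{1}{\beta}\nabla f_M(\qcp))=0$, matching the constant plateau on both value and slope.

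Convexity and $\beta$-smoothness then follow by a piecewise check of the second derivative: $\nabla^2\fmp(x)\in\{0,\beta\}$ on $(-\infty,\qcp)$ by \cref{lemma:f_i}, $\nabla^2\fmp\equiv\beta$ on the quadratic piece, and $\nabla^2\fmp\equiv 0$ on the plateau; combined with the $C^1$ property this yields a convex function with $\beta$-Lipschitz gradient. For $G$-Lipschitzness I would note that $\nabla\fmp\in[-G,-G/2)$ on $(-\infty,\qcp)$ by \cref{lemma:f_i}, interpolates linearly from $\nabla f_M(\qcp)$ up to $0$ on the quadratic piece, and vanishes on the plateau, so $\abs{\nabla\fmp(x)}\leq G$ uniformly. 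The existence of a minimum is then immediate: $\nabla\fmp\le 0$ everywhere and vanishes on the non-empty plateau, so $x^\star\eqdef \qcp-\tfrac{1}{\beta}\nabla f_M(\qcp)$ is a minimizer.

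The only step requiring real work is bounding $\abs{x_0-x^\star}\le \qcp + G/\beta$ for $x_0\in\{0,\epsilon\}$, which reduces to controlling $\qcp=\ymax{n_M}$, i.e., how far the iterates of \agd on $f_{M-1}$ may drift from the origin in $n_M$ steps. My approach is to exploit the uniform gradient bound $G$ from \cref{lemma:f_i}: from \cref{eq:mstep} and $\gamma_t\leq 1$ an easy induction gives $\abs{m_t}\leq t\eta G$, and then \cref{eq:ystep} yields $\abs{y_t-y_{t-1}}\leq (t+1)\eta G$, so that $\abs{y_t}\leq \epsilon + O(\eta G t^2) = O(\eta G t^2)$. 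Plugging in the bound $n_M \leq \ceil{10/(\eta\beta)}(M+2) = O\brk{(1/(\eta\beta))\log(G/(\beta\epsilon))}$ from \cref{lemma:finite_m} gives $\qcp = O\brk{(G/(\eta\beta^2))\log^2(G/(\beta\epsilon))}$, and since $\eta\beta\leq 1$ the additive $G/\beta$ contribution from $x^\star-\qcp$ is absorbed into this bound, yielding the claim. This last quantitative estimate is the only genuine obstacle, since it is the one place where the actual behaviour of the iterates (as opposed to the shape of $f_M$) enters; the preceding properties are essentially algebraic verifications of the piecewise definition.
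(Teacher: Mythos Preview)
Your proposal is correct and follows essentially the same route as the paper: verify convexity, $\beta$-smoothness, and $G$-Lipschitzness by inspecting the piecewise first and second derivatives (relying on \cref{lemma:f_i} for the $f_M$ piece), take $x^\star=\qcp-\tfrac{1}{\beta}\nabla f_M(\qcp)$, and bound $\qcp=\ymax{n_M}$ via the crude momentum estimate $\abs{m_t}\le t\eta G$ together with the bound on $n_M$ from \cref{lemma:finite_m}. The only difference is that you make the $C^1$-matching at the two junctions explicit, whereas the paper simply writes down the piecewise derivative formulas without comment.
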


The final lemma we require shows that the distance between the two trajectories is the same for $\fmp$ and~$f_M$. This holds true since the two functions coincide for $x \leq p$, after which the iterates reach a plateau which induces similar stability dynamics as the linear part of $f_M$ at $x > p$.

\begin{lemma}\label{lemma:diff_equality}
Let $(x_t,y_t)=\agd(\fmp,0,t,\eta)$ and $(\xt_t,\yt_t)=\agd(\fmp,\epsilon,t,\eta)$ be the iterations of \agd on $\fmp$ from our initialization points. Similarly, for $f_M$, let $(\xh_t,\yh_t)=\agd(f_M,0,t,\eta)$ and $(\xb_t,\yb_t)=\agd(f_M,\epsilon,t,\eta)$. 
Then for all $t$, we have that
$
    x_t-\xt_t = \xh_t-\xb_t
    \;\text{and}\;
    y_t-\yt_t = \yh_t-\yb_t
    .
$
\end{lemma}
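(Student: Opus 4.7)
I plan to prove the invariant $\dltx{t}=\tdltx{t}$, $\dlty{t}=\tdlty{t}$, $\dltm{t}=\tdltm{t}$ by induction on $t$. By \cref{eq:dxstep,eq:dystep,eq:dmstep}, the evolution of the difference terms at step $t$ depends only on the previous differences and on the quantity $\nabla f(y_{t-1})-\nabla f(\yt_{t-1})$, so the induction reduces to checking that this gradient difference agrees between the $\fmp$-run and the $f_M$-run at every step.

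\textbf{Coincidence phase.} Since $\nabla f_M<-G/2$ everywhere (by the definition of $M$), the $y$-iterates of $\agd$ are monotonically increasing, and by $p=\ymax{n_M}$ all four iterates lie in $(-\infty,p]$ for $t\le n_M$. On $(-\infty,p]$ the functions $\fmp$ and $f_M$ coincide, so the iterates themselves agree: $y_t=\yh_t$ and $\yt_t=\yb_t$ (and analogously for $x,m$) for every $t\le n_M$. The step at $t=n_M+1$ uses only the gradients at $y_{n_M},\yt_{n_M}\le p$ where $\nabla\fmp=\nabla f_M$, so the iterates continue to agree through $t=n_M+1$.

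\textbf{Past the threshold.} The key is to show that from $t=n_M+2$ onward, both $\fmp$-iterates $y_t,\yt_t$ lie in the plateau of $\fmp$ (where $\nabla\fmp\equiv 0$) while both $f_M$-iterates $\yh_t,\yb_t$ lie in the linear tail past $p$ (where $\nabla f_M\equiv \nabla f_M(p)$). In either regime the gradient difference $\nabla f(y_t)-\nabla f(\yt_t)$ is identically zero, and consequently the recursions for $(\dlty{t},\dltm{t})$ and $(\tdlty{t},\tdltm{t})$ both collapse to the pure-momentum iteration $\Delta^y_{t+1}=\Delta^y_t+\Delta^m_{t+1}$, $\Delta^m_{t+1}=\gamma_{t+1}\Delta^m_t$. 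Combined with the matched boundary condition at $t=n_M+1$ established above, this closes the induction.

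\textbf{Main obstacle: clearing the quadratic region in a single step.} The claim of the previous paragraph requires showing that the $\agd$ update at step $n_M+1$ moves both iterates across the width-$|\nabla f_M(p)|/\beta$ quadratic region of $\fmp$ and all the way into the plateau. Two ingredients drive this: (i) from $\max_x\nabla f_M<-G/2$ the total width of the Hessian-$\beta$ intervals is below $G/(2\beta)$, so in particular $p-\ymin{n_M}<G/(2\beta)$, meaning the trailing iterate at step $n_M$ is already within $G/(2\beta)$ of $p$; and (ii) the positive momentum accumulated over the prior $n_M=\Theta(1/(\eta\beta))$ steps (each with gradient magnitude $>G/2$) together with the fresh gradient contribution $\eta(1+\gamma_{n_M+1})\abs{\nabla f_M(y_{n_M})}$ comfortably exceeds the combined distance $(p-\ymin{n_M})+|\nabla f_M(p)|/\beta\le 3G/(2\beta)$ to the plateau boundary, using $\eta\le 1/\beta$ and the fact that $\gamma_{n_M+1}$ is close to $1$ because $n_M$ is large. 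Were an iterate to land in the quadratic region instead, the local Hessian $\beta$ would generate a gradient difference proportional to $\dlty{t}$, mismatching the vanishing gradient difference on $f_M$'s tail and breaking the invariant; so this quantitative bookkeeping is precisely where the proof has to be careful.
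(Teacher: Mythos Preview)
Your proposal is correct and follows essentially the same route as the paper: agreement of the iterates themselves through $t=n_M+1$ (the paper's \cref{lemma:consistency_f_fm}), then a single-step jump past the quadratic region into the plateau (the paper's \cref{clm:post_plateau}, proved via the momentum bound in \cref{clm:min_step}), after which both gradient differences vanish and the difference recursions coincide. Two small cleanups: your ``from $t=n_M+2$ onward'' should read $t\ge n_M+1$ (you already argue this in the obstacle paragraph, and it is needed for the step at $t=n_M+2$); and $n_M=(M+2)\ceil{10/\eta\beta}$ is not $\Theta(1/(\eta\beta))$ in general, though this only helps since more steps mean more accumulated momentum.
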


We defer the proofs of \cref{lemma:dltfm,lemma:evolution,lemma:finite_m,lemma:f_props,lemma:diff_equality} to \cref{appendix:proofs}, and proceed to prove our main result.

\begin{proof}[of \cref{thm:main_init_stab}]
Based on \cref{lemma:diff_equality}, it suffices to show that the lower bound holds for the function $f_M$.
Let $c_1=\tfrac{1}{11}\ln(3),c_2=\tfrac{4}{5}3^{-3}$.
Let $t=n_i$ for some $i \geq 1$.
The first case we will deal with is when $i \leq M + 1$.
We already established with \cref{lemma:evolution} that $\abs{\dlty{n_{i}}} \geq 3^{i-1} \epsilon$. Since $t=n_i=(i+2)\ceil{10/\eta\beta}$, $i \geq \ifrac{\eta\beta t}{11} - 2$.
Hence,
\begin{align*}
    \abs{\dlty{n_{i}}}
    &\geq 3^{\eta\beta t/11 - 3} \epsilon
    \quad\implies\quad
    \abs{\dlty{n_{i}}}
    \geq 
    \frac{5}{4}c_2 e^{c_1 \eta\beta T}\epsilon.
\end{align*}
To relate to $\abs{x_t-\xt_t}$,
\begin{align*}
    \abs{x_{t}-\xt_{t}}
    &= \abs{\dltx{n_{i}}} \\
    &\geq \abs{\dlty{n_{i}}} - \abs{\dltm{n_{i}}} \tag{\cref{eq:dystep}}  \\
    &\geq \abs{\dlty{n_{i}}} - \abs{\dltm{n_{i-1}+1}} \prod_{t=n_{i-1}+2}^{n_{i}}\gamma_t \tag{\cref{lemma:dltfm} for $t=n_{i},\dots,n_{i-1}+2$} \\
    &\geq \abs{\dlty{n_{i}}} - \abs{\dltm{n_{i-1}+1}} \tag{Since $t \geq 1 \Rightarrow 0 \leq \gamma_t \leq 1$} \\
    &\overset{(*)}{\geq} \abs{\dlty{n_{i}}}\brk2{1-\frac{\eta\beta}{5}} \\
    &\geq \frac{4}{5} \abs{\dlty{n_{i}}}
    \geq c_2 e^{c_1 \eta\beta T}\epsilon. \tag{$\eta \leq \frac{1}{\beta}$}
\end{align*}
Here, $(*)$ follows from \cref{lemma:evolution} if $i>1$ and the case of $i=1$ follows by combining \cref{lemma:dltfm} and $\dltm{1}=0$ which implies that $\dltm{n_0+1}=0$.
If $t>n_{M+1}$ (includes the case of $i > M+1$ and $t>\ceil{10/\eta\beta}\brk!{\ln\frac{3G}{2\beta\epsilon}+3}$ from \cref{lemma:finite_m}), since $t-1 \geq n_{M+1}$,
\begin{align*}
    \abs{x_{t}-\xt_{t}}
    &= \abs{\dlty{t-1} - \eta \dltf{t-1}} \tag{\cref{eq:dxstep}} \\
    &= \abs{\dlty{t-1}} \tag{\cref{lemma:dltfm}} \\
    &\geq \abs{\dlty{n_{M+1}}}. \tag{\cref{lemma:evolution}}
\end{align*}
And using \cref{lemma:finite_m} we conclude that
$
    \abs{x_{t}-\xt_{t}} \geq \frac{G}{3\beta}
    .
$
Hence, with \cref{lemma:f_i}, $f_M$ holds all properties of \cref{thm:main_init_stab} beside attaining a minimum, and using \cref{lemma:diff_equality,lemma:f_props}, $\fmp$ posses all the properties needed for \cref{thm:main_init_stab}.
\end{proof}

\section{Uniform Stability of \agd} \label{sec:uniform}

In this section we present our second main result, regarding the uniform stability of (full-batch) \agd. This is given formally in the following theorem.

\begin{theorem} \label{thm:main_as}
For any $G,\beta,\eta$ and $n \geq 4$ such that $0 < \eta \leq \ifrac{1}{\beta}$, there exists a loss function $\ell(w;z)$ that is convex, $\beta$-smooth and $G$-Lipschitz in $w$ (for every $z\in \Z$) and universal constants $c_3,c_4>0$, such that the uniform stability of $T$-steps full-batch \agd with step size $\eta$ is
\begin{align*}
    \unif{\agd_T}{\ell}(n)
    \geq \min\brk[c]1{\tfrac{G^2}{3\beta}, c_4 e^{c_3 \eta\beta T} \tfrac{\beta \eta^2 G^2}{n}} ,
    \qquad\qquad
    \forall ~ T \in \brk[c]!{\ceil{\tfrac{10}{\eta\beta}\tfrac{n}{n-3}}(i+2) ~:~ i = 1,2,\ldots}
    ,
\end{align*}
Furthermore, for all $T \geq \ceil{\frac{40}{\eta\beta}} \brk!{ \ln\frac{6G}{\beta\epsilon}+3}$ it holds that $\unif{\agd_T}{\ell}(n) \geq \frac{G^2}{3\beta}$.
\end{theorem}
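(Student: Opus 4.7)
The plan is to establish \cref{thm:main_as} via a simple reduction to the initialization stability lower bound of \cref{thm:main_init_stab}. The intuition, signaled in the overview, is that differing in a single example out of $n$ perturbs the gradient of the empirical risk by a factor of order $1/n$, and a single NAG step converts this into an iterate gap of order $\epsilon = \Theta(\beta\eta^2 G/n)$. Once this gap is seeded and the two empirical risks behave like the hard function $f$ of \cref{thm:main_init_stab} from step two onwards, the remaining $T-1$ iterations become essentially an instance of initialization stability on $f$ with seed $\epsilon$, and the exponential lower bound of \cref{thm:main_init_stab} transfers up to constants.

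Concretely, I would take the per-example loss to be $\ell(w;z) = f(w) + zw$, with $z$ ranging over a bounded interval of radius $\Theta(\beta\eta G)$ and $f$ the instance of \cref{thm:main_init_stab} rescaled so that each $\ell(\cdot;z)$ is convex, $\beta$-smooth and $G$-Lipschitz. Choosing a training sample $S$ in which all $z_i=0$ and a neighbor $S'$ that flips a single $z_i$ to a value of magnitude $\Theta(\beta\eta G)$ yields empirical risks $R_S = f$ and $R_{S'} = f + cw$ with $c = \Theta(\beta\eta G/n)$; both share the piecewise-constant Hessian structure of $f$, since linear perturbations do not affect Hessians. The first NAG step from $x_0 = 0$ then produces iterates $x_1, \tilde x_1$ at distance $\epsilon = \eta|c| = \Theta(\beta\eta^2 G/n)$. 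From step two on, the difference recursion for $(\dltx{t}, \dlty{t}, \dltm{t})$ derived in \cref{sec:initialization} applies verbatim, except that each $\dltf{t}$ carries an additional constant drift of magnitude $|c| = \Theta(\beta\eta G/n)$ inherited from the residual mismatch between $R_S$ and $R_{S'}$.

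Provided the sign of the flipped example is chosen so that this drift aligns with the growing $\dlty{n_j}$ at each amplification time identified by \cref{lemma:evolution}, the drift reinforces rather than cancels the exponential amplification, and the inductive argument of that lemma carries through to yield $|x_T - \tilde x_T| \geq c_2' e^{c_1' \eta\beta T}\epsilon$ at the prescribed times $T$, for appropriate constants $c_1', c_2' > 0$. To convert iterate distance to loss distance as required by \cref{def:uni_stab}, introduce a distinguished example $z^\star \in \Z$ at which $\ell(w;z^\star) = -Gw$ (convex, $G$-Lipschitz, $0$-smooth), so that $|\ell(x_T;z^\star) - \ell(\tilde x_T;z^\star)| = G|x_T - \tilde x_T|$. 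This delivers the claimed $\min\{G^2/(3\beta),\, c_4 e^{c_3\eta\beta T}\beta\eta^2 G^2/n\}$ lower bound, with the plateau inherited from the plateau in \cref{thm:main_init_stab} and the $n/(n-3)$ factor in the time-index condition arising from the technical bookkeeping of the reduction (accommodating both the rescaling of $f$ into a shared $(G,\beta)$-budget with the linear perturbation and the one-step shift in the effective starting time).

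The main technical obstacle is the drift analysis. Unlike the pure initialization stability setting of \cref{thm:main_init_stab}, whose difference recursion is homogeneous, here an inhomogeneous forcing of magnitude comparable to the seed $\epsilon$ is present at every step, and a careful sign-tracking argument essentially redoing the proof of \cref{lemma:evolution} with this additional term is needed to confirm that the exponential amplification prevails (possibly with slightly degraded constants). All remaining components --- the budget allocation between $f$ and the linear perturbation, the choice of the auxiliary example $z^\star$, and the matching of the constants $c_3, c_4$ to those of \cref{thm:main_init_stab} --- are routine bookkeeping once this core inductive step is in place.
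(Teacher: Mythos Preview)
Your reduction is not the paper's, and the departure creates a real obstacle that you have not resolved.

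The paper's reduction avoids any drift analysis entirely. Instead of perturbing by a global linear term, it replaces the single example by a loss $\ell(w;2)$ whose gradient is $-\beta\eta G$ at $w=0$ but \emph{vanishes identically for $w \geq \eta G$}. Since both trajectories satisfy $y_t \geq y_1 > \eta G$ for all $t \geq 1$, the perturbation is active only at the very first gradient step and contributes nothing thereafter. Combined with a step-size rescaling (the empirical risks are $\frac{n-3}{n} f$, so \agd with step size $\hat\eta$ on $R_S$ equals \agd with step size $\eta=\frac{n-3}{n}\hat\eta$ on $f$; this is the actual source of the $n/(n-3)$ factor, not the budget-sharing you describe), the two \agd runs on $R_S,R_{S'}$ from the same initialization become \emph{exactly} the two runs of \cref{thm:main_init_stab} on $f$ from $0$ and $\epsilon$. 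The reduction is then a black-box invocation of \cref{thm:main_init_stab} with no new analysis.

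Your route, by contrast, keeps a constant drift $c$ in the gradient difference at \emph{every} step, and this breaks the argument in two places. First, the hard function $f$ of \cref{sec:initialization} is built recursively from the specific trajectories $\agd(f_{i-1},0,\cdot)$ and $\agd(f_{i-1},\epsilon,\cdot)$; the gadget intervals $[\ymin{n_j},\ymax{n_j}]$ are placed precisely where those iterates land. Your second trajectory is $\agd(f+cw,0,\cdot)$, which does not coincide with $\agd(f,\epsilon,\cdot)$ beyond step one, so the gadgets are not triggered as \cref{lemma:dltfm} requires --- you cannot use $f$ off the shelf and would have to rebuild it for the new pair of trajectories. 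Second, and more seriously, your claim that the drift sign can be chosen to ``align with the growing $\dlty{n_j}$ at each amplification time'' is false: the proof of \cref{lemma:ymy} shows $\mathrm{sign}(\dltm{n_j+1})=-\mathrm{sign}(\dlty{n_j})$ and then $\mathrm{sign}(\dlty{n_{j+1}})=\mathrm{sign}(\dltm{n_j+1})$, so $\dlty{n_j}$ \emph{alternates} sign across epochs (this is exactly the fluctuation visible in \cref{fig:fig2}). A fixed-sign drift cannot reinforce an alternating sequence at every epoch, so the ``sign-tracking'' step you flag as the main obstacle is not merely tedious bookkeeping --- as stated, it does not go through, and the inhomogeneous version of \cref{lemma:evolution} you would need is not a straightforward extension.
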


The comments following \cref{thm:main_init_stab} regarding the exponential blowup and the fluctuating behaviour also apply here. 
Note also the perhaps surprising inverse dependence on $\beta$ ($\beta\eta^2$ is also $O(\ifrac{1}{\beta})$). The dependence can be explained by the fact that smooth optimization over a highly non-smooth yet still $G$-Lipschitz function must have a small step size (with $\eta\leq \ifrac{1}{\beta}$) which improves stability.

\paragraph{Construction.}

We denote the given parameters for the theorem with $\hat{G},\hat{\beta},\hat{\eta},n$.
We will use the construction from \cref{sec:initialization} with the properties of \cref{thm:main_init_stab} in order to create a loss function and samples which will have the same optimization for $t \geq 1$.
For the construction we define the following setting of $G,\beta,\eta,\epsilon$:
\begin{align*}
    G = \hat{G},
    \qquad
    \beta = \hat{\beta},
    \qquad
    \eta = \frac{n-3}{n}\hat{\eta},
    \qquad
    \epsilon = \frac{\beta \eta^2 G}{n-3}.
\end{align*}
Using these parameters, we obtain $\fmp$ from the construction of \cref{sec:initialization}. As we proved in the previous section, this is the function for which \cref{thm:main_init_stab} holds.
Note that for $T<n_1$, the lower bounds already holds even for quadratics, as we show in \cref{sec:lower_agd_quad_unif}.
We also define the following functions,
\begin{align} \label{eq:g2}
    \ell(w;1) &\eqdef 0, \\
    \ell(w;2) &\eqdef -\beta\eta G w + \beta \int_{-\infty}^w \int_{-\infty}^{y} \IND{z \in \brk[s]*{0,\eta G}} dz dy, 
\end{align}
and further let $\ell(w;3) = -Gw$, $\ell(w;4) = Gw$, and $\ell(w;5) = f(w)$.
Note that all are convex, $\beta$-smooth and $G$-Lipschitz.
Let $\Z=\brk[c]{1,2,3,4,5}$ be our sample space, we consider the loss function $\ell(w;z)$ over $\R \times \Z$.
Our samples of interest are $S=(1,3,4,5,\dots,5)\in\Z^n$ and $S'=(2,3,4,5,\dots,5)\in\Z^n$. 
Thus, the empirical risks $R_S, R_{S'}$ corresponding to $S, S'$ are given by
\begin{align}
    R_{S}(w) &= \frac{1}{n}g_1(w) + \frac{n-3}{n}\fmp(w) = \frac{n-3}{n}\fmp(w), \label{eq:rs} \\
    R_{S'}(w) &= \frac{1}{n}g_2(w) + \frac{n-3}{n}\fmp(w). \label{eq:rst}
\end{align}

\paragraph{Analysis.}

The key lemma below shows that we constructed a scenario that reduces the problem of analyzing the uniform stability of full-batch \agd to analyzing its initialization stability on the function $\fmp$ we constructed in \cref{sec:initialization}.

\begin{lemma} \label{lemma:reduction}
For all $t=1,2,\ldots$, we have that
\begin{align*}
    \agd(\fmp,0,t,\eta) &= \agd(R_S,0,t,\hat{\eta}), \\
    \agd(\fmp,\epsilon,t,\eta) &= \agd(R_{S'},0,t,\hat{\eta}).
\end{align*}
\end{lemma}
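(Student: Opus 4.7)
The two equalities will be handled separately; both exploit the parameter identity $\tfrac{n-3}{n}\hat\eta=\eta$ to align the effective gradient steps of \agd on the respective objectives with the step $\eta\nabla\fmp$. For the first equality, since $\ell(w;1)\equiv 0$ we have $R_S=\tfrac{n-3}{n}\fmp$, so $\hat\eta\nabla R_S(w)=\eta\nabla\fmp(w)$ for every $w$. Because the NAG recursion (\cref{eq:xstep,eq:ystep,eq:mstep}) depends on the objective only through its scaled gradient and on the fixed momentum coefficients $\gamma_t$, and both executions start at $x_0=0$, the sequences $(x_t,y_t,m_t)$ coincide for every $t$.

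For the second equality I proceed by induction on $t$; the base step $t=1$ is the main computation. A short verification shows $\epsilon=\beta\eta^2G/(n-3)\leq\eta G$, and the NAG iterates of $f_0(x)=-Gx$ starting from $0$ grow monotonically past $\eta G$, so $\epsilon$ lies strictly to the left of every interval $[\ymin{n_j},\ymax{n_j}]$ on which the piecewise-constant Hessian of $\fmp$ is supported. Hence $\nabla\fmp(\epsilon)=\nabla\fmp(0)=-G$, and $\agd(\fmp,\epsilon,1,\eta)$ produces $x_1=\epsilon+\eta G$. On the other side, $\nabla g_2(0)=-\beta\eta G$, so
\[
    \agd(R_{S'},0,1,\hat\eta)\text{ gives }\,
    x_1
    =-\hat\eta\brk*{\tfrac{1}{n}(-\beta\eta G)+\tfrac{n-3}{n}(-G)}
    =\tfrac{\beta\eta^2 G}{n-3}+\eta G
    =\epsilon+\eta G.
\]
Since $\gamma_1=0$, we also have $m_1=0$ and $y_1=x_1$ in both executions, establishing the base case.

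For the inductive step at $t\geq 2$, assume $(x_{t-1},y_{t-1},m_{t-1})$ agree between the two executions and that $y_{t-1}\geq\eta G$. The latter forces $\nabla g_2(y_{t-1})=0$ (as $g_2$ is flat on $[\eta G,\infty)$), whence $\hat\eta\nabla R_{S'}(y_{t-1})=\eta\nabla\fmp(y_{t-1})$, exactly matching the NAG step on $\fmp$; together with the identical momentum update this propagates the agreement to $(x_t,y_t,m_t)$. To close the induction I verify $y_t\geq\eta G$: since $\nabla\fmp\leq 0$ pointwise (by \cref{lemma:f_i} on the linear piece of $f_M$ and by direct inspection of the quadratic-plus-plateau continuation), a routine induction gives $m_t\geq 0$ and $x_t\geq x_{t-1}$ for every $t\geq 1$, hence $y_t\geq x_t\geq x_1=\epsilon+\eta G\geq\eta G$. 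The main obstacle here is the base step: the parameters $\eta,\hat\eta,\epsilon$ are chosen precisely so that the extra $g_2/n$ contribution to the first NAG step on $R_{S'}$ advances the iterate by exactly $\epsilon$ beyond the first iterate on $R_S$, mimicking an $\epsilon$-perturbation of the initialization and reducing the uniform-stability lower bound to the initialization-stability bound of \cref{thm:main_init_stab}; the rest is routine bookkeeping with the flat region of $g_2$ and the monotonicity of NAG iterates on $\fmp$.
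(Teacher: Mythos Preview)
Your proof follows essentially the same route as the paper: both equalities are handled by induction on $t$, using the identity $\hat\eta\tfrac{n-3}{n}=\eta$ to match the gradient steps, with the base case of the second equality engineered by the choice $\epsilon=\beta\eta^2G/(n-3)$ and the inductive step closed via $\nabla g_2(y_{t-1})=0$ once $y_{t-1}\geq\eta G$, which in turn follows from monotonicity of the iterates since $\nabla\fmp\leq 0$. The only imprecision is your appeal to ``NAG iterates of $f_0$'' when locating the intervals $[\ymin{n_j},\ymax{n_j}]$ (they are defined via $f_{j-1}$, not $f_0$); the paper handles this through \cref{clm:nabla0eps} and \cref{clm:eta_t_ni}, but your argument goes through just as well once you note that every $f_{j-1}$ with $j\leq M$ has $\nabla f_{j-1}<-G/2$ so its iterates are monotone and exceed $\eta G/2>\epsilon$ after one step.
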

Using this key lemma, \cref{thm:main_as} follows immediately by setting $c_3=c_1/4$ and $c_4=c_2/4$, for the samples $S,S'$ we defined and $z=3$. As $\ell(w;3)=-Gw$,
\begin{align*}
    \abs({\ell(x_T;3)-\ell(\xt_T;3)}
    &= G\abs{x_T - \xt_T},
\end{align*}
and by using the definitions of $G,\beta,\eta,\epsilon$ we get the two cases of \cref{thm:main_init_stab} which lower bound $\abs{x_T - \xt_T}$ as needed.

Below we give a proof sketch for the second equality of \cref{lemma:reduction}, deferring the full proofs of the lemma and \cref{thm:main_as} to \cref{sec:uniform-proofs}.

\begin{proof}[of \cref{lemma:reduction} (sketch)]
We sketch the proof of the second equality stated in the lemma; the proof of the first equality is simpler and follows similar lines (details are given in \cref{sec:uniform-proofs}). 

The proof proceeds via induction over the iterations.
Let $(x_t,y_t)=\agd(\fmp,\epsilon,t,\eta)$ and $(\xt_t,\yt_t)=\agd(R_{S'},0,t,\hat{\eta})$ for $t \leq T$.
For $t=1$,
\begin{align*}
    \xt_1
    &= \yt_0 - \hat{\eta}\nabla R_{S'}(y_0) \tag{\cref{eq:xstep}} \\
    &= 0 - \hat{\eta} \frac{1}{n} \nabla g_2(0) - \hat{\eta} \frac{n-3}{n} \nabla \fmp(0) \tag{$\yt_0=0$ and \cref{eq:rst}} \\
    &= -\eta \frac{1}{n-3} \nabla g_2(0) - \eta \nabla \fmp(0) \tag{def of $\eta$, $\eta=\hat{\eta}\frac{n-3}{n}$} \\
    &= \epsilon - \eta \nabla \fmp(0) \tag{$\nabla g_2(0)=-\beta\eta G$ and $\epsilon=\frac{\beta\eta^2 G}{n-3}$} \\
    &= x_1 + \eta (\nabla \fmp(\epsilon) - \nabla \fmp(0)). \tag{\cref{eq:xstep} and $y_0=\epsilon$}
\end{align*}
In \cref{clm:nabla0eps} we show that $\nabla \fmp(\epsilon) = \nabla \fmp(0) = -G$, thus $\xt_1=x_1$.
Since $\gamma_1=0$, it follows from \cref{eq:ystep} that $\yt_1 = \xt_1$ and similarly $x_1=y_1$, hence $\yt_1=y_1$.
For $t>1$,
\begin{align*}
    \xt_{t}
    &= \yt_{t-1} - \hat{\eta} \nabla R_{S'}(\yt_{t-1}) \tag{\cref{eq:xstep}} \\
    &= \yt_{t-1} - \hat{\eta} \frac{n-3}{n}\nabla \fmp(\yt_{t-1}) - \hat{\eta} \frac{1}{n}\nabla g_2(\yt_{t-1}) \tag{\cref{eq:rst}} \\
    &= \yt_{t-1} - \eta \nabla \fmp(\yt_{t-1}) - \hat{\eta} \frac{1}{n}\nabla g_2(\yt_{t-1}) \tag{def of $\eta$, $\eta=\hat{\eta}\frac{n-3}{n}$} \\
    &= y_{t-1}-\eta\nabla \fmp(y_{t-1}) - \hat{\eta} \frac{1}{n}\nabla g_2(y_{t-1}) \tag{induction assumption} \\
    &= x_t - \hat{\eta} \frac{1}{n}\nabla g_2(y_{t-1}) \tag{\cref{eq:xstep}}.
\end{align*}
We need to show that $\nabla g_2(y_{t-1})=0$.
Since by our construction, $\nabla \fmp(x) \leq 0$ for all $x \in \R$, we observe only negative gradients and the iterations always move in the positive direction. Hence, $y_{t-1} \geq y_1$.
Since
$
    y_{t-1}
    \geq y_1
    = x_1 + \underset{=0}{\gamma_1}(x_1-x_0)
    = y_0 - \eta \nabla \fmp(y_0)
    = \epsilon - \eta \nabla \fmp(\epsilon)
    = \epsilon + \eta G,
$
it follows that
\begin{align*}
    \nabla g_2(y_{t-1})
    &= -\beta\eta G + \beta \int_{-\infty}^{y_{t-1}} \IND{z \in \brk[s]*{0,\eta G}} dz
    = -\beta\eta G + \beta \eta G
    =0,
\end{align*}
hence $\xt_t=x_t$.
Since by the induction assumption, $\xt_{t-1}=x_{t-1}$, it follows from \cref{eq:ystep} that $\yt_t=y_t$, and we finished our induction.
\end{proof}

\subsection*{Acknowledgements}

We thank Naman Agarwal, Yair Carmon and Roi Livni for valuable discussions.
This work was partially supported by the Israeli Science Foundation (ISF) grant no.~2549/19, by the Len Blavatnik and the Blavatnik Family foundation, and by the Yandex Initiative in Machine Learning.

\bibliographystyle{abbrvnat}
\bibliography{agd}

\appendix

\section{Proofs for \cref{sec:initialization}}
\label{appendix:proofs}

For the proofs in this section, we require the following lemma that states that
the iterates of \agd over consecutive functions $f_{i-1}$ and $f_i$ in the
construction in \cref{sec:initialization} are identical up to iteration $t=n_i$.
Hence, for $j \leq i$, the iterates of \agd over $f_i$ and $f_j$ are identical
up to $t=n_{j+1}$.

\begin{lemma} \label{lemma:consistency}
For all $1 \leq i \leq M$ and $x_0 \in \brk[c]{0,\epsilon}$ we have
\begin{align*}
    \agd(f_i,x_0,t) = \agd(f_{i-1},x_0,t),
    \qquad
    \forall ~ t \leq n_i
    .
\end{align*}
\end{lemma}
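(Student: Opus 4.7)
The plan is to prove the identity by induction on $t$ for $0 \leq t \leq n_i$ (with fixed $1 \leq i \leq M$ and $x_0 \in \{0,\epsilon\}$), showing that the triples $(x_t,y_t,m_t)$ produced by \agd on $f_i$ and on $f_{i-1}$ starting from $x_0$ are identical. The base case $t=0$ is immediate since both initializations yield $y_0 = x_0$ and $m_0 = 0$. For the inductive step, once the iterates agree through step $t-1$, \cref{eq:xstep,eq:ystep,eq:mstep} show that the next iterate depends only on $y_{t-1}$, $m_{t-1}$, $x_{t-1}$, and $\nabla f(y_{t-1})$; the induction therefore closes as soon as I verify $\nabla f_i(y_{t-1}) = \nabla f_{i-1}(y_{t-1})$. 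By the recursive definition of the $f_i$'s, the difference $\nabla f_i(x) - \nabla f_{i-1}(x)$ equals $\beta$ times the Lebesgue measure of $(-\infty,x] \cap [\ymin{n_i},\ymax{n_i}] \setminus \bigcup_{j<i}[\ymin{n_j},\ymax{n_j}]$, which in particular vanishes for all $x \leq \ymin{n_i}$. Hence the task reduces to showing that the \agd-on-$f_{i-1}$ iterate $y_s$ started from $x_0$ satisfies $y_s \leq \ymin{n_i}$ for every $s \leq n_i - 1$.

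The first ingredient is strict monotonicity of $\{y_s\}$ along \agd on $f_{i-1}$, from either point in $\{0,\epsilon\}$. Combining \cref{eq:xstep,eq:ystep} gives
\[ y_{s+1} - y_s \;=\; (1+\gamma_{s+1})\eta\bigl(-\nabla f_{i-1}(y_s)\bigr) + \gamma_{s+1} m_s, \]
and a parallel induction on $s$ via \cref{eq:mstep} starting from $m_0 = 0$, together with $-\nabla f_{i-1}(\cdot) > G/2 > 0$ (which follows from \cref{lemma:f_i} and the very definition of $M$), yields $m_s \geq 0$ throughout; hence $y_{s+1} - y_s > 0$ for every $s$.

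Given monotonicity, if the chosen $x_0$ produces the trailing endpoint at time $n_i$ (i.e.\ $y_{n_i} = \ymin{n_i}$), then $y_s < y_{n_i} = \ymin{n_i}$ for every $s < n_i$ and we are done. The main obstacle is the complementary ``leading'' case $y_{n_i} = \ymax{n_i}$, in which monotonicity alone only yields $y_{n_i-1} < \ymax{n_i}$, whereas I need
\[ y_{n_i} - y_{n_i-1} \;\geq\; \ymax{n_i} - \ymin{n_i}. \]
I would lower bound the step size by $(1+\gamma_{n_i})\eta G/2 + \gamma_{n_i} m_{n_i-1}$, where the momentum contribution is of order $\eta G \cdot n_i = \Omega(G/\beta)$ thanks to the choice $n_i \geq \lceil 10/(\eta\beta) \rceil$, and upper bound the inter-trajectory gap by unrolling the linear difference recursion $\Delta^x_{t+1} = (1+\gamma_t)\Delta^x_t - \gamma_t \Delta^x_{t-1}$ within constant-gradient stretches, together with the bounded multiplicative factor incurred at each of the at most $i-1$ prior gadget hits. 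Combining these with the finite-$M$ bound $M \leq \ln(3G/(2\beta\epsilon))$, which implies $|\ymax{n_i}-\ymin{n_i}| = O(G/\beta)$ for all $i \leq M$, and the generous spacing $n_i - n_{i-1} = \lceil 10/(\eta\beta)\rceil$, the step-size lower bound strictly dominates the gap, closing the induction.
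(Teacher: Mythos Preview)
Your overall strategy coincides with the paper's: induct on $t$, reduce to showing that the $f_{i-1}$-iterate $y_{t-1}$ satisfies $y_{t-1}\le\ymin{n_i}$, use monotonicity of $(y_s)$, and in the ``leading'' case compare the final step $y_{n_i}-y_{n_i-1}$ against the gap $\ymax{n_i}-\ymin{n_i}$. Your lower bound on that step via momentum accumulation is also the paper's argument (packaged there as a claim that $y_t>y_{t-1}+2G/\beta$ once $t>\lceil 20/(\eta\beta)\rceil$).

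The problem is your upper bound on the gap $\ymax{n_i}-\ymin{n_i}$. You propose to unroll the difference recursion through the $i-1$ prior gadgets and then invoke the bound $M\le\ln(3G/(2\beta\epsilon))$. But that bound is \cref{lemma:finite_m}, and in the paper its proof goes through \cref{lemma:exp_growth} and \cref{lemma:ymy}, which rest on \cref{lemma:dltfm} and \cref{clm:eta_t_ni}, which in turn use \cref{lemma:consistency} itself. So as written your argument is circular. Even if you restructure as a strong induction on $i$ to avoid quoting \cref{lemma:finite_m}, your route still requires a sharp \emph{upper} bound on the per-gadget multiplicative factor so that after $O(\log(G/(\beta\epsilon)))$ factors times $\epsilon$ you end up at $O(G/\beta)$; since already the \emph{lower} bound on that factor established later is $10/3>e$, the constants here do not obviously cooperate, and you have not supplied this part of the argument.

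The paper sidesteps all of this with a one-line observation that uses only the definition of $M$ and nothing about the dynamics. For $i\le M$ one has $\max_x\nabla f_i(x)<-G/2$ by the very definition of $M$, and
\[
-\tfrac{G}{2}\;>\;\nabla f_i(\ymax{n_i})\;=\;-G+\beta\int_{-\infty}^{\ymax{n_i}}\IND{\exists j\le i:\,z\in[\ymin{n_j},\ymax{n_j}]}\,dz\;\ge\;-G+\beta\bigl(\ymax{n_i}-\ymin{n_i}\bigr),
\]
so $\ymax{n_i}-\ymin{n_i}<G/(2\beta)$ immediately. Combined with the step lower bound $y_{n_i}-y_{n_i-1}>2G/\beta$, this closes the leading case (and also yields $\ymin{n_i}>G/(2\beta)>\epsilon\ge y_0$, which handles the first induction step that your sketch does not explicitly address). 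Replace your final paragraph with this direct gap bound and the proof goes through.
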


\subsection{Proof of \cref{lemma:consistency}}

We will need the following technical claims (proofs below).

\begin{claim} \label{clm:upper_bound_diff}
For all $1 \leq i \leq M$ we have $\ymax{n_i}-\ymin{n_i}<\frac{G}{2\beta}$.
\end{claim}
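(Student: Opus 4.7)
\textbf{Proof plan for Claim \ref{clm:upper_bound_diff}.}
The plan is to read off the bound directly from the definition of $M$ by relating $\max_x \nabla f_i(x)$ to the total length of the ``active'' intervals $[\ymin{n_j},\ymax{n_j}]$ that appear in the construction of $f_i$. The key observation is that $\nabla f_i$ is an antiderivative of a $\{0,\beta\}$-valued step function supported exactly on $\bigcup_{j\le i}[\ymin{n_j},\ymax{n_j}]$, so its growth from $-\infty$ to $+\infty$ is $\beta$ times the Lebesgue measure of that support.

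First, I would differentiate the construction to obtain, for every $x\in\R$,
\begin{align*}
    \nabla f_i(x)
    = -G + \beta \int_{-\infty}^{x} \IND{\exists\, j \le i:\, z\in[\ymin{n_j},\ymax{n_j}]}\,dz,
\end{align*}
which is nondecreasing in $x$ (consistent with convexity proved in \cref{lemma:f_i}). Letting $L_i$ denote the Lebesgue measure of $\bigcup_{j\le i}[\ymin{n_j},\ymax{n_j}]$, taking $x\to\infty$ gives $\sup_x \nabla f_i(x) = -G + \beta L_i$. Since $1\le i\le M$, the defining property of $M$ applied with $j=i$ yields $\sup_x \nabla f_i(x) < -G/2$, hence $L_i < G/(2\beta)$.

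Finally, the interval $[\ymin{n_i},\ymax{n_i}]$ is one of the sets in the union defining $L_i$, so its length is at most $L_i$. Combining, $\ymax{n_i}-\ymin{n_i} \le L_i < G/(2\beta)$, which is the claim.

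There is no real obstacle here: the statement is essentially a bookkeeping consequence of how $M$ was defined relative to the integrated indicator in the construction of $f_i$. The only minor point to note is that the intervals need not be disjoint, but this only helps, as the measure of the union is bounded by (and in fact equal to what determines) the supremum of $\nabla f_i$, and any single interval length is trivially at most the measure of the union.
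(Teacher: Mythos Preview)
Your proof is correct and follows essentially the same approach as the paper: both use the definition of $M$ to get $\sup_x \nabla f_i(x) < -G/2$, express $\nabla f_i$ via the integral of the indicator, and then observe that the single interval $[\ymin{n_i},\ymax{n_i}]$ contributes at most the full integral. The only cosmetic difference is that the paper evaluates $\nabla f_i$ at the finite point $x=\ymax{n_i}$ (which already captures the interval in question) rather than passing to the supremum via $x\to\infty$.
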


\begin{claim} \label{clm:min_step}
Let $f: \R \rightarrow \R$ a convex, $\beta$-smooth function such that for all $x \in R$, $\nabla f(x) < -\frac{G}{2}$. Let $(x_t,y_t)=\agd(f,x_0,t,\eta)$ for all $t \leq T$, for some $T$ and step size $\eta$. Then $y_t>y_{t-1}$ and if $t > \ceil{20/\eta\beta}$, then $y_t >y_{t-1}+ \frac{2G}{\beta}$.
\end{claim}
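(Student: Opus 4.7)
The plan is to reduce both statements to a clean lower bound on the momentum term $m_t$. Starting from the update rules \eqref{eq:xstep} and \eqref{eq:ystep}, one immediately obtains $y_t - y_{t-1} = -\eta \nabla f(y_{t-1}) + m_t$. Since the hypothesis gives $-\eta \nabla f(y_{t-1}) > \eta G/2 > 0$ at every iterate, the first part of the claim will follow from showing that $m_t \geq 0$ for all $t \geq 1$, and the second part will follow once we establish a quantitative lower bound $m_t \gtrsim \eta G t$.

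Nonnegativity of $m_t$ can be handled by induction on $t$ using the recursion \eqref{eq:mstep}, $m_t = \gamma_t(m_{t-1} - \eta \nabla f(y_{t-1}))$. The base case is $m_1 = \gamma_1(x_1 - x_0) = 0$ since $\gamma_1 = 0$; for $t \geq 2$, the factor $\gamma_t \geq 0$, the inductive hypothesis $m_{t-1} \geq 0$, and the hypothesis $-\eta\nabla f(y_{t-1}) > 0$ combine to give $m_t \geq 0$. This already establishes $y_t > y_{t-1}$.

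For the quantitative part, I would iterate the recursion \eqref{eq:mstep}, repeatedly replacing $-\eta\nabla f(y_{j-1})$ by its lower bound $\eta G/2$, to obtain $m_t \geq \tfrac{\eta G}{2}\sum_{k=1}^{t}\prod_{j=k}^{t} \gamma_j$. Substituting $\gamma_j = (j-1)/(j+2)$, the inner product telescopes into $\prod_{j=k}^{t} \gamma_j = \frac{(k-1)k(k+1)}{t(t+1)(t+2)}$ (the $k=1$ term vanishes). Applying the standard identity $\sum_{k=1}^{n} k(k+1)(k+2) = \tfrac{1}{4}n(n+1)(n+2)(n+3)$ with a shift of index then yields the clean bound $m_t \geq \eta G(t-1)/8$. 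Finally, for $t > \ceil{20/(\eta\beta)}$ we have $t-1 \geq 20/(\eta\beta)$, so $m_t \geq \tfrac{20 G}{8\beta} = \tfrac{5G}{2\beta} > \tfrac{2G}{\beta}$, and combined with $-\eta\nabla f(y_{t-1}) > 0$ this gives $y_t - y_{t-1} > 2G/\beta$ as desired.

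The main obstacle is managing the telescoping algebra cleanly: the product of the $\gamma_j$'s has to collapse into a ratio of consecutive-integer triples, and the ensuing sum needs to cancel the denominator $t(t+1)(t+2)$ exactly. Once this computation is carried out carefully, the rest of the argument is a short induction together with the sign observation on $\nabla f$, with no additional ideas required.
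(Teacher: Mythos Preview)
Your proof is correct and follows the same strategy as the paper: both reduce the two assertions to a lower bound on $m_t$, establish $m_t\ge 0$ by induction via \eqref{eq:mstep} (using $m_1=0$ from $\gamma_1=0$), and then unfold the recursion to obtain $m_t \ge \tfrac{\eta G}{2}\sum_{k=2}^{t}\prod_{j=k}^{t}\gamma_j$. The only difference is in how this sum is handled: you evaluate it \emph{exactly} as $(t-1)/4$ by telescoping $\prod_{j=k}^{t}\gamma_j = \tfrac{(k-1)k(k+1)}{t(t+1)(t+2)}$ and applying the identity $\sum_{m=1}^{n} m(m+1)(m+2)=\tfrac{1}{4}n(n+1)(n+2)(n+3)$, whereas the paper invokes the auxiliary \cref{clm:cool_sum2}, which only gives the weaker lower bound $\tfrac{(t-1)^2}{4(t+2)}$ via cruder inequalities. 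Your computation is sharper, yields $m_t\ge \eta G(t-1)/8$ directly, and eliminates the need for that separate technical claim.
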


We now proceed to prove \cref{lemma:consistency}.

\begin{proof}
Let $(x_t,y_t)=\agd(f_i,x_0,t)$ and $(\xt_t,\yt_t)=\agd(f_{i-1},x_0,t)$ for all $t \leq n_i$. We first note that for all $x \in (-\infty,\ymin{n_i})$,
\begin{align*}
    \nabla f_{i}(x) &= -G + \beta\int_{-\infty}^{x} \IND{\exists j \in [i] \text{ s.t. } z \in [\ymin{n_j},\ymax{n_j}]} dz \\
    &= -G + \beta\int_{-\infty}^{x} \IND{\exists j \in [i-1] \text{ s.t. } z \in [\ymin{n_j},\ymax{n_j}]} dz
    = \nabla f_{i-1}(x).
\end{align*}
We will use induction over $t$. For $t=0$, $y_0=x_0=\xt_0=\yt_0$ is the starting point.
For $t=1$,
\begin{align*}
    x_1
    &= y_0-\eta\nabla f_i(y_0) \tag{\cref{eq:xstep}} \\
    &= \yt_0-\eta\nabla f_i(\yt_0) \tag{since $y_0=\yt_0$} \\
    &= \xt_1-\eta(\nabla f_i(\yt_0)-\nabla f_{i-1}(\yt_0)). \tag{\cref{eq:xstep}}
\end{align*}
From \cref{clm:min_step}, $\ymin{n_i} > \frac{G}{2\beta}$, and given the assumption that $\epsilon < \frac{G}{2\beta}$, $\yt_0 \in (-\infty,\ymin{n_i})$, thus,
\begin{align*}
    \nabla f_i(\yt_0)=\nabla f_{i-1}(\yt_0).
\end{align*}
This means that $x_1=\xt_1$, and since $\gamma_1=0$,
\begin{align*}
    y_1=x_1+\gamma_1(x_1-x_0)=x_1=\xt_1=\yt_1.
\end{align*}
For $t > 1$,
\begin{align*}
    x_t
    &= y_{t-1}-\eta\nabla f_i(y_{t-1}) \tag{\cref{eq:xstep}} \\
    &= \yt_{t-1}-\eta\nabla f_i(\yt_{t-1}) \tag{induction assumption} \\
    &= \xt_t-\eta(\nabla f_i(\yt_{t-1})-\nabla f_{i-1}(\yt_{t-1})). \tag{\cref{eq:xstep}}
\end{align*}
Again, we want to show that $\yt_{t-1} \in (-\infty,\ymin{n_i})$.
Combining \cref{clm:min_step} with the fact that $\yt_{n_i} \in \brk[c]{\ymin{n_i},\ymax{n_i}}$,
\begin{align*}
    \yt_{t-1}
    &\leq \yt_{n_i-1} \\
    &< \yt_{n_i} - \frac{G}{2\beta} \\
    &\leq \ymax{n_i} - \frac{G}{2\beta},
\end{align*}
and from \cref{clm:upper_bound_diff},
\begin{align*}
    \ymax{n_i} - \frac{G}{2\beta}
    &< \ymin{n_i}.
\end{align*}
Thus, $\yt_t < \ymin{n_i}$ and as before, $\nabla f_i(\yt_{t-1})=\nabla f_{i-1}(\yt_{t-1})$, hence, $x_t=\xt_t$. Lastly, $y_t=\yt_t$ follows directly from the induction assumption and $x_t=\xt_t$ by \cref{eq:ystep}.
\end{proof}

Below we prove the claims used in the proof above.

\begin{proof}[of \cref{clm:upper_bound_diff}]
From the definition of $M$, $\max_x \nabla f_i(x) <-G/2$. 
Specifically,
\begin{align*}
    -G/2 
    >
    \nabla f_{i}(\ymax{n_i})
    &= -G + \beta\int_{-\infty}^{\ymax{n_i}} \IND{\exists j \in [i] \text{ s.t. } z \in [\ymin{n_j},\ymax{n_j}]} dz
    .
\end{align*}
Thus,
\begin{align*}
    \frac{G}{2\beta}
    &> \int_{-\infty}^{\ymax{n_i}} \IND{\exists j \in [i] \text{ s.t. } z \in [\ymin{n_j},\ymax{n_j}]} dz \\
    &\geq \int_{-\infty}^{\ymax{n_i}} \I\brk[s]{z \in [\ymin{n_i},\ymax{n_i}]} dz \\
    &= \ymax{n_i}-\ymin{n_i}.
    \qedhere
\end{align*}
\end{proof}

In order to prove \cref{clm:min_step} we need another technical result (proof in \cref{appendix:technical}).

\begin{claim} \label{clm:cool_sum2}
Let $\gamma_t=\frac{t-1}{t+2}$ for $t \geq 1$. Then $\sum_{k=2}^{t}\prod_{j=k}^{t} \gamma_j \geq \frac{(t-1)^2}{4(t+2)}$.
\end{claim}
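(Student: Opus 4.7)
The plan is to compute the sum exactly (which turns out to be much cleaner than the lower bound suggests), using two successive telescoping identities, and then observe that the resulting closed form trivially dominates the target bound.

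First I would evaluate the inner product $\prod_{j=k}^{t}\gamma_j$ in closed form. Writing $\gamma_j = \tfrac{j-1}{j+2}$, the product becomes
\begin{align*}
    \prod_{j=k}^{t}\frac{j-1}{j+2}
    = \frac{(k-1)\cdot k\cdot (k+1)\cdots (t-1)}{(k+2)(k+3)\cdots (t+2)}
    = \frac{(k-1)k(k+1)}{t(t+1)(t+2)},
\end{align*}
since all intermediate factors $k+2, k+3, \ldots, t-1$ cancel between numerator and denominator. (One can double-check the boundary cases $k=t$ and $k=t-1$ directly.) This reduces the statement to evaluating a polynomial sum in $k$.

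Next I would sum over $k$. The key observation is that $(k-1)k(k+1)$ is itself a telescoping difference: setting $b_k \eqdef \tfrac{1}{4}(k-1)k(k+1)(k+2)$, one checks
\begin{align*}
    b_k - b_{k-1}
    = \tfrac{1}{4}(k-1)k(k+1)\brk!{(k+2)-(k-2)}
    = (k-1)k(k+1).
\end{align*}
Therefore $\sum_{k=2}^{t}(k-1)k(k+1) = b_t - b_1 = \tfrac{1}{4}(t-1)t(t+1)(t+2)$. Plugging this back yields the exact identity
\begin{align*}
    \sum_{k=2}^{t}\prod_{j=k}^{t}\gamma_j
    = \frac{1}{t(t+1)(t+2)}\cdot \frac{(t-1)t(t+1)(t+2)}{4}
    = \frac{t-1}{4}.
\end{align*}
Finally, since $t+2 \geq t-1$ for all $t \geq 1$, we have $\tfrac{t-1}{4} \geq \tfrac{(t-1)^2}{4(t+2)}$, which establishes the claim.

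There is no real obstacle here; the only subtle point is recognizing the two telescopings. The stated bound $\tfrac{(t-1)^2}{4(t+2)}$ is in fact quite loose---the sum has exact value $\tfrac{t-1}{4}$---so any minor arithmetic slip would still leave room to recover the inequality.
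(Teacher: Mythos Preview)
Your proof is correct and in fact cleaner than the paper's. The paper also begins by recognizing the telescoping in the product to obtain $\prod_{j=k}^{t}\gamma_j = \frac{(k-1)k(k+1)}{t(t+1)(t+2)}$ (after peeling off the top two terms $k=t$ and $k=t-1$ separately), but then takes a different route for the sum: it lower-bounds each summand by $\frac{(k-1)^3}{t^2(t+2)}$ and applies the closed form $\sum_{k=1}^{t-1}k^3 = \tfrac{(t-1)^2 t^2}{4}$, arriving at exactly the stated bound $\tfrac{(t-1)^2}{4(t+2)}$. Your second telescoping via $b_k = \tfrac14(k-1)k(k+1)(k+2)$ instead yields the \emph{exact} value $\tfrac{t-1}{4}$, which is strictly stronger and shows the paper's bound is loose by a factor of roughly $\tfrac{t-1}{t+2}$. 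The paper's method buys nothing extra here; your approach is simply sharper and no harder.
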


\begin{proof}[of \cref{clm:min_step}]
Since for all $x \in \R$, $\nabla f(x) < -G/2$, from \cref{eq:mstep},
\begin{align*}
    m_t
    &= \gamma_t(m_{t-1}-\eta\nabla f(y_{t-1}))
    \geq \gamma_t\brk*{m_{t-1}+\frac{\eta G}{2}}.
\end{align*}
Unfolding the recursion until reaching $m_1=\gamma_1(x_1-x_0)=0\cdot(x_1-x_0)=0$, we obtain
\begin{align*}
    m_t
    &\geq \frac{\eta G}{2} \sum_{k=2}^t \prod_{j=k}^{t} \gamma_j \geq 0.
\end{align*}
Hence,
\begin{align*}
    y_{t}
    &= y_{t-1}-\eta\nabla f(y_{t-1}) + m_{t} \tag{\cref{eq:ystep}} \\
    &\geq y_{t-1} + \frac{\eta G}{2} > y_{t-1}.
\end{align*}
If $t>\ceil{10/\eta\beta}$, using \cref{clm:cool_sum2},
\begin{align*}
    m_t &\geq \frac{\eta G}{2} \sum_{k=2}^t \prod_{j=k}^{t} \gamma_j \\
    &\geq \frac{\eta G(t-1)^2}{8(t+2)} \\
    &\geq \frac{\eta G(20/\eta\beta)^2}{8(20/\eta\beta+3)} \\
    &= \frac{G}{\beta} \frac{2.5}{1+0.15\eta\beta}
    > \frac{2G}{\beta},
\end{align*}
and we finish with
\begin{align*}
    y_{t}
    &= y_{t-1}-\eta\nabla f(y_{t-1}) + m_{t} \tag{\cref{eq:ystep}} \\
    &> y_{t-1} + \frac{2G}{\beta}.
    \qedhere
\end{align*}
\end{proof}

\subsection{Proof of \cref{lemma:dltfm}}

In order to prove \cref{lemma:dltfm} we need the following claim (proved below).

\begin{claim} \label{clm:eta_t_ni}
For all $i \in [M]$, for all $x_0 \in \brk[c]{0,\epsilon}$, let $(x_t,y_t)=\agd(f_i,x_0,t)$ for all $t \leq T$ for some $T$. Then for all $j \in [i]$, $t<n_j \Rightarrow y_t<\ymin{n_j}$ and $t>n_j \Rightarrow y_t>\ymax{n_j}$.
\end{claim}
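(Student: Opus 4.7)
The plan is to show that the iterate $y_t$ makes such a large jump at step $n_j$ that it cannot land in—or even near—the narrow interval $[\ymin{n_j},\ymax{n_j}]$ at any other step. Fix $i \in [M]$, $x_0 \in \{0,\epsilon\}$, and $j \in [i]$, and write $(x_t,y_t) = \agd(f_i,x_0,t)$. I would first invoke \cref{lemma:consistency} iteratively ($i-j+1$ applications) to conclude $\agd(f_i,x_0,t) = \agd(f_{j-1},x_0,t)$ for every $t \leq n_j$. In particular, the iterate $y_{n_j}$ that the claim refers to coincides with the one used in the construction of \cref{sec:initialization} to define $\ymin{n_j}$ and $\ymax{n_j}$, so $y_{n_j} \in \{\ymin{n_j},\ymax{n_j}\}$.

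Next I would exploit the fact that $i \leq M$ forces $\nabla f_i(x) < -G/2$ everywhere, which is exactly the hypothesis needed to apply \cref{clm:min_step} to $f_i$. This yields (i) strict monotonicity $y_t > y_{t-1}$ for every $t \geq 1$, and (ii) once $t$ exceeds the threshold of \cref{clm:min_step}, the one-step advance satisfies $y_t - y_{t-1} > 2G/\beta$. Because $n_j = (j+2)\ceil{10/\eta\beta} \geq 3\ceil{10/\eta\beta}$, both $n_j$ and $n_j+1$ comfortably lie above that threshold, so $y_{n_j} - y_{n_j-1} > 2G/\beta$ and $y_{n_j+1} - y_{n_j} > 2G/\beta$. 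Combined with the width bound $\ymax{n_j} - \ymin{n_j} < G/(2\beta)$ from \cref{clm:upper_bound_diff}, the per-step advance dwarfs the interval width by a factor of four.

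Assembling these pieces: for $t < n_j$, monotonicity and the jump bound give
\[
y_t \;\leq\; y_{n_j-1} \;<\; y_{n_j} - \tfrac{2G}{\beta} \;\leq\; \ymax{n_j} - \tfrac{2G}{\beta} \;<\; \ymin{n_j},
\]
and for $t > n_j$, symmetrically,
\[
y_t \;\geq\; y_{n_j+1} \;>\; y_{n_j} + \tfrac{2G}{\beta} \;\geq\; \ymin{n_j} + \tfrac{2G}{\beta} \;>\; \ymax{n_j}.
\]
The only real bookkeeping I expect to need care with is the first step—carrying the identification of the $y_{n_j}$-iterate across the sequence of runs on $f_{j-1},f_j,\ldots,f_i$ so that it matches the endpoints defining $\ymin{n_j},\ymax{n_j}$. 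Once that identification is in place, the inequality chain is essentially automatic from the ``big step beats narrow interval'' phenomenon, and covers both directions simultaneously.
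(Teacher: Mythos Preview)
Your proposal is correct and follows essentially the same approach as the paper: invoke \cref{lemma:consistency} to identify $y_{n_j}$ with one of the endpoints $\ymin{n_j},\ymax{n_j}$, use \cref{clm:min_step} for monotonicity and the large per-step advance near $n_j$, and compare against the interval width bound of \cref{clm:upper_bound_diff}. The only cosmetic difference is that you keep the full $2G/\beta$ jump from \cref{clm:min_step}, whereas the paper immediately weakens it to $G/(2\beta)$, which is all that the width bound requires.
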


\begin{proof}[of \cref{lemma:dltfm}]
The proof of the first property considers two different cases:
\begin{itemize}
    \item
    If $t=n_k$ for some $k \in [i]$, we know from \cref{lemma:consistency} that
    \begin{align*}
        \ymin{n_k}&=\min(y_{n_k},\yt_{n_k}), \\
        \ymax{n_k}&=\max(y_{n_k},\yt_{n_k}).
    \end{align*}
   Using this property,
    \begin{align*}
        \dltf{n_k}
        = \beta \int_{\yt_{n_k}}^{y_{n_k}} \IND{\exists j \in [i] \text{ s.t. } z \in [\ymin{n_j},\ymax{n_j}]} dz
        = \beta\dlty{n_k}.
    \end{align*}
    
    \item
    If $t \not\in \brk[c]{n_j}_{j=1}^{i}$, then from \cref{clm:eta_t_ni} we know that they are both inside an interval of the form $(\ymax{n_k},\ymin{n_{k+1}})$ (or $(-\infty,\ymin{n_{1}})$ or $(\ymax{n_i},\infty)$), in which case they are inside an interval with second derivative of $0$, and using the mean value theorem, there exists some $y^{mid} \in (y_t,\yt_t)$, for which
    \begin{align*}
        \dltf{t}=\dlty{t}\cdot\nabla^2 f_i(y^{mid})=\dlty{t}\cdot 0=0.
    \end{align*}
\end{itemize}
The second property comes from the first together with \cref{eq:dmstep},
\begin{align*}
    \dltm{t}
    &= \gamma_t(\dltm{t-1}-\eta\dltf{t-1}),
\end{align*}
which completes the proof.
\end{proof}

\begin{proof}[of \cref{clm:eta_t_ni}]
For some $j \in [i]$ and $t \in [T]\cup\brk[c]{0}$:
\begin{itemize}
    \item
    If $t<n_j$, then from \cref{clm:min_step},
    \begin{align*}
        y_{t} &\leq y_{n_j-1} < y_{n_j} - \frac{G}{2\beta}.
    \end{align*}
    From \cref{lemma:consistency}, $y_{n_j} \in \brk[c]{\ymin{n_j},\ymax{n_j}}$, and combining with \cref{clm:upper_bound_diff},
    \begin{align*}
        y_{n_j} - \frac{G}{2\beta} \leq \ymax{n_j}-\frac{G}{2\beta} < \ymin{n_j}.
    \end{align*}
    
    \item
    If $t > n_j$, then similarly,
    \begin{align*}
        y_t \geq y_{n_j+1} > y_{n_j} + \frac{G}{2\beta} \geq \ymin{n_j}+\frac{G}{2\beta} > \ymax{n_i}.
    \end{align*}
\end{itemize}
This concludes the proof.
\end{proof}

\subsection{Proof of \cref{lemma:evolution}}

The three parts of the lemma are defined separately in the following lemmas (proved separately afterwards).
The first analyzes the evolution of the distance between the iterates $y_t$ and $\tilde{y}_t$ at steps $t \in \{n_j\}_{1 \leq j \leq i+1}$ and relates it to the difference in momentum terms.

\begin{lemma} \label{lemma:ymy}
For all $1 \leq j \leq i$,
\begin{align*}
    \frac{2}{3} \eta\beta\abs{\dlty{n_j}} \leq \abs{\dltm{n_j+1}} \leq \frac{1}{5} \eta\beta\abs{\dlty{n_{j+1}}}.
\end{align*}
\end{lemma}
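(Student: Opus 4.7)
The plan is to prove both inequalities simultaneously by induction on $j$, with the dependency structure: the upper bound at index $j-1$ feeds into the lower bound at index $j$, and the lower bound at index $j$ feeds into the upper bound at index $j$. Throughout, I will repeatedly invoke \cref{lemma:dltfm}, which tells me that inside the flat intervals $[n_{j}+1,n_{j+1}]$ the updates collapse to $\dltm{t} = \gamma_t \dltm{t-1}$ and $\dlty{t} = \dlty{t-1} + \dltm{t}$, and that at the gadget step $t=n_j$ the gradient kick is exactly $\dltf{n_j}=\beta\dlty{n_j}$.

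For the base case $j=1$, since $\gamma_1=0$ we have $m_1=\tilde m_1=0$, so $\dltm{1}=0$; iterating the ``otherwise'' branch of \cref{lemma:dltfm} propagates this to $\dltm{t}=0$ for all $t\leq n_1$. The recursion at $t=n_1+1$ then gives $\dltm{n_1+1} = -\gamma_{n_1+1}\eta\beta\dlty{n_1}$; since $\eta\beta\leq 1$ forces $n_1 \geq 30$ and hence $\gamma_{n_1+1}=n_1/(n_1+3)\geq 5/6$, the lower bound follows at once. For the inductive lower bound at $j\geq 2$, chaining the ``otherwise'' branch gives $\dltm{n_j} = \dltm{n_{j-1}+1}\prod_{k=n_{j-1}+2}^{n_j}\gamma_k$, and since every $\gamma_k\in[0,1]$ the IH upper bound at $j-1$ yields $|\dltm{n_j}|\leq|\dltm{n_{j-1}+1}|\leq \tfrac{1}{5}\eta\beta|\dlty{n_j}|$. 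Plugging into $\dltm{n_j+1}=\gamma_{n_j+1}(\dltm{n_j}-\eta\beta\dlty{n_j})$ and applying the reverse triangle inequality gives $|\dltm{n_j+1}| \geq \tfrac{5}{6}\cdot\tfrac{4}{5}\eta\beta|\dlty{n_j}| = \tfrac{2}{3}\eta\beta|\dlty{n_j}|$.

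For the upper bound, I would unroll the flat-interval dynamics over $[n_j+1,n_{j+1}]$: since $\dltm{t}=\dltm{n_j+1}\prod_{k=n_j+2}^{t}\gamma_k$ and $\dlty{t}=\dlty{t-1}+\dltm{t}$ throughout, summing and combining with the one-step update $\dlty{n_j+1}=(1-\eta\beta)\dlty{n_j}+\dltm{n_j+1}$ produces
\begin{align*}
    \dlty{n_{j+1}} = (1-\eta\beta)\dlty{n_j} + \dltm{n_j+1}(1+S_{j+1}),
    \qquad
    S_{j+1} \eqdef \sum_{t=n_j+2}^{n_{j+1}}\prod_{k=n_j+2}^{t}\gamma_k.
\end{align*}
The estimate $|\dltm{n_j}|<\eta\beta|\dlty{n_j}|$ established in the previous paragraph forces $\dltm{n_j+1}$ to have sign opposite to $\dlty{n_j}$, so the two summands above pull in opposite directions. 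Applying the reverse triangle inequality and then the just-proved lower bound (in the rearranged form $|\dlty{n_j}|\leq \tfrac{3}{2\eta\beta}|\dltm{n_j+1}|$) yields $|\dlty{n_{j+1}}| \geq |\dltm{n_j+1}|\brk1{1+S_{j+1}-\tfrac{3(1-\eta\beta)}{2\eta\beta}}$, which rearranges to the desired bound $|\dltm{n_j+1}|\leq\tfrac{1}{5}\eta\beta|\dlty{n_{j+1}}|$ provided $S_{j+1}\geq \tfrac{13-5\eta\beta}{2\eta\beta}$.

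The only real work is therefore the explicit lower bound on $S_{j+1}$, and this is where I expect the calibration of the constants $2/3$ and $1/5$ (and of the spacing $10/(\eta\beta)$ in $n_j$) to become tight. The clean route is the closed-form product identity $\prod_{k=a}^{b}\gamma_k=\tfrac{(a-1)a(a+1)}{b(b+1)(b+2)}$ together with the partial-fraction telescoping $\tfrac{1}{t(t+1)(t+2)}=\tfrac{1}{2}\brk!{\tfrac{1}{t(t+1)}-\tfrac{1}{(t+1)(t+2)}}$, which collapse $S_{j+1}$ to
\begin{align*}
    S_{j+1} = \tfrac{n_j+1}{2}\brk1{1-\tfrac{(n_j+2)(n_j+3)}{(n_{j+1}+1)(n_{j+1}+2)}}.
\end{align*}
Substituting $n_j=(j+2)\ceil{10/(\eta\beta)}$ and $n_{j+1}=(j+3)\ceil{10/(\eta\beta)}$ reduces the required inequality to an elementary rational inequality in $j$ and $\eta\beta$; its minimum over $j$ is attained at $j=1$, and one checks that $S_2\gtrsim 6.5/(\eta\beta)$ there, just above the needed threshold. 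This confirms that the spacing $10/(\eta\beta)$ in the definition of the $n_j$'s is precisely what is required to drive the exponential blowup.
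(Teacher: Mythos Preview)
Your proof is correct and follows essentially the same route as the paper's: the same two-sided induction (upper bound at $j-1$ feeds the lower bound at $j$, which in turn feeds the upper bound at $j$), the same unrolling of the flat-interval dynamics to express $\dlty{n_{j+1}}$ in terms of $\dlty{n_j}$ and $\dltm{n_j+1}$, and the same use of $|\dlty{n_j}|\leq\tfrac{3}{2\eta\beta}|\dltm{n_j+1}|$ to close the loop.

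The only substantive difference is in how you control the sum. The paper isolates this as a standalone technical claim (\cref{clm:cool_sum}), bounding $\prod\gamma_t$ crudely by $(n/k)^3$ and then comparing to the integral $\int x^{-3}\,dx$ to get $1+S_{j+1}\geq\tfrac{n_j+1}{2}\brk!{1-\tfrac{(n_j+1)^2}{(n_{j+1}+1)^2}}$. Your partial-fraction telescoping gives the exact value $S_{j+1}=\tfrac{n_j+1}{2}\brk!{1-\tfrac{(n_j+2)(n_j+3)}{(n_{j+1}+1)(n_{j+1}+2)}}$ directly, which is cleaner and avoids the extra lemma. Both reduce to the same numerical check at $j=1$. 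Your sign-tracking argument (that $\dltm{n_j+1}$ and $\dlty{n_j}$ have opposite signs) is correct but not needed: the paper simply applies $|A+B|\geq |B|-|A|$, which holds regardless of signs. Also, for the inductive lower bound at $j\geq 2$ the paper uses $\gamma_{n_j+1}\geq\tfrac{9}{10}$ and then subtracts $\tfrac15$ to get $\tfrac{7}{10}>\tfrac{2}{3}$; your version with $\gamma_{n_j+1}\geq\tfrac{5}{6}$ multiplied by $\tfrac{4}{5}$ hits $\tfrac{2}{3}$ on the nose, which also works.
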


The second lemma, derived from the first, shows that the difference $\dlty{t}$ exhibits an exponential blowup between steps $t=n_j$ and $t=n_{j+1}$.

\begin{lemma} \label{lemma:exp_growth}
For all $0 \leq j \leq i$, it holds that $\abs{\dlty{n_{j+1}}}=\ymax{n_{j+1}}-\ymin{n_{j+1}} \geq 3^{j} \epsilon$.
\end{lemma}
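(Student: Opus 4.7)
The plan is to proceed by induction on $j$, with the base case showing that no divergence between the two trajectories develops before the first ``gadget'' interval, and the inductive step leveraging \cref{lemma:ymy} to amplify the separation by a factor strictly larger than $3$ each time the trajectories cross a gadget.

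For the base case $j=0$, I would argue that $\dlty{t}$ remains constant at $\dlty{0}=-\epsilon$ throughout $t\le n_1$. By \cref{lemma:dltfm}, no index in $\{0,1,\ldots,n_1-1\}$ belongs to $\{n_k\}_{k=1}^{i}$, so every $\dltf{t}$ in that range vanishes, and the momentum recursion collapses to $\dltm{t}=\gamma_t\dltm{t-1}$. Combined with $\gamma_1=0$ (which gives $\dltm{1}=0$), this forces $\dltm{t}=0$ for all $t\le n_1$. Plugging into \cref{eq:dystep} yields $\dlty{t}=\dlty{t-1}$, so $|\dlty{n_1}|=|\dlty{0}|=\epsilon=3^{0}\epsilon$.

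For the inductive step, suppose $|\dlty{n_j}|\ge 3^{j-1}\epsilon$ for some $j\ge 1$. The two inequalities of \cref{lemma:ymy} chain together cleanly: the lower bound $|\dltm{n_j+1}|\ge \tfrac{2}{3}\eta\beta\,|\dlty{n_j}|$ converts the current separation into momentum-difference, and the upper bound $|\dltm{n_j+1}|\le \tfrac{1}{5}\eta\beta\,|\dlty{n_{j+1}}|$ then forces $|\dlty{n_{j+1}}|$ to grow in order to absorb that accumulated momentum. Multiplying the two inequalities eliminates the $\eta\beta$ factor and gives $|\dlty{n_{j+1}}|\ge \tfrac{10}{3}|\dlty{n_j}|\ge \tfrac{10}{3}\cdot 3^{j-1}\epsilon\ge 3^{j}\epsilon$. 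The stated identity $|\dlty{n_{j+1}}|=\ymax{n_{j+1}}-\ymin{n_{j+1}}$ follows directly from the definitions of $\ymax{\cdot},\ymin{\cdot}$ in the construction of $f_{j+1}$, once one invokes \cref{lemma:consistency} to identify the iterates tracked on $f_i$ with those used when the $(j{+}1)$-th gadget was laid down.

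The main obstacle is not this lemma itself---the induction is essentially one line once \cref{lemma:ymy} is in hand---but rather \cref{lemma:ymy}, which packages the genuinely delicate dynamics. Conceptually, the lower bound on $|\dltm{n_j+1}|$ captures the ``kick'' that the $j$-th gadget imparts to the momentum difference via its nonzero Hessian, while the upper bound encodes how that momentum must eventually manifest as an enlarged $|\dlty{n_{j+1}}|$ by the time the trajectories enter the next gadget: the trajectory with larger momentum overshoots, producing the factor-of-$\tfrac{10}{3}$ amplification per gadget that drives the exponential blowup illustrated in \cref{fig:fig1}.
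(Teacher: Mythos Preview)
Your proposal is correct and matches the paper's proof essentially line for line: the paper likewise chains the two inequalities of \cref{lemma:ymy} to obtain $|\dlty{n_{j+1}}|\ge\tfrac{10}{3}|\dlty{n_j}|$, establishes the base case $|\dlty{n_1}|=\epsilon$ via \cref{lemma:dltfm} together with $\dltm{1}=0$, and handles the identity $|\dlty{n_{j+1}}|=\ymax{n_{j+1}}-\ymin{n_{j+1}}$ through \cref{lemma:consistency}. (Minor quibble: ``multiplying the two inequalities'' should read ``chaining'' or ``combining''---you are using transitivity, not a product.)
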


In order to bound the iterations after $t=n_{i+1}$, the following lemma shows that the distance between the iterates $y_t$ and $\tilde{y}_t$ after the exponential growth phase (when $t > n_{i+1}$) does not decrease.

\begin{lemma} \label{lemma:afterwards}
For all $t > n_{i+1}$, it holds that $\abs{\dlty{t}} \geq \abs{\dlty{n_{i+1}}}$.
\end{lemma}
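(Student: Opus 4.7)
The plan is to notice that after step $n_i$ all of $f_i$'s active intervals are behind the iterates, so by \cref{lemma:dltfm} we have $\dltf{t}=0$ for every $t>n_i$, and the recursions \cref{eq:dxstep,eq:dystep,eq:dmstep} collapse on this range to
\begin{align*}
\dltm{t+1}=\gamma_{t+1}\dltm{t},\qquad \dlty{t+1}=\dlty{t}+\gamma_{t+1}\dltm{t}.
\end{align*}
Since $\gamma_k=(k-1)/(k+2)>0$ for every $k\geq 2$, the first relation gives $\dltm{t}=\dltm{n_i+1}\prod_{k=n_i+2}^{t}\gamma_k$, so for all $t>n_i$ the sign of $\dltm{t}$ equals a fixed $s\eqdef\operatorname{sign}(\dltm{n_i+1})$. (When $i=0$ we are running \agd on $f_0(x)=-Gx$, where both trajectories see the same gradient $-G$ at every step and a direct unfolding gives $\dlty{t}=-\epsilon$ for every $t\geq 1$, so the lemma is immediate; henceforth I assume $i\geq 1$, in which case $\dltm{n_i+1}\neq 0$ by the lower bound in \cref{lemma:evolution}(i) together with $\abs{\dlty{n_i}}\geq 3^{i-1}\epsilon>0$ from \cref{lemma:evolution}(ii).)

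From the second recursion $\dlty{t+1}-\dlty{t}=\gamma_{t+1}\dltm{t}$ has sign $s$, and hence $\dlty{t}$ is strictly monotonic in direction $s$ throughout $\{t\geq n_i+1\}$. The conclusion then reduces to showing that $\operatorname{sign}(\dlty{n_{i+1}})=s$ as well: once this is known, for any $t>n_{i+1}$ the identity $\dlty{t}=\dlty{n_{i+1}}+\sum_{k=n_{i+1}}^{t-1}\gamma_{k+1}\dltm{k}$ exhibits $\dlty{t}$ as the sum of two quantities sharing sign $s$, whence $\abs{\dlty{t}}\geq\abs{\dlty{n_{i+1}}}$. Matching this sign is the step I expect to be the main obstacle, since a priori the drift may not have had enough ``time'' to push $\dlty$ to the sign-$s$ side.

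The matching is accomplished via a magnitude comparison. Chaining the two halves of \cref{lemma:evolution}(i) gives $\tfrac{2}{3}\eta\beta\abs{\dlty{n_i}}\leq\abs{\dltm{n_i+1}}\leq\tfrac{1}{5}\eta\beta\abs{\dlty{n_{i+1}}}$, hence $\abs{\dlty{n_i}}\leq\tfrac{3}{10}\abs{\dlty{n_{i+1}}}$. A one-step unfolding of \cref{eq:dxstep,eq:dystep} at $t=n_i+1$, together with $\dltf{n_i}=\beta\dlty{n_i}$ from \cref{lemma:dltfm}, yields $\dlty{n_i+1}=(1-\eta\beta)\dlty{n_i}+\dltm{n_i+1}$, so the triangle inequality and $\eta\beta\in[0,1]$ give
\begin{align*}
\abs{\dlty{n_i+1}}\leq(1-\eta\beta)\cdot\tfrac{3}{10}\abs{\dlty{n_{i+1}}}+\tfrac{\eta\beta}{5}\abs{\dlty{n_{i+1}}}\leq\tfrac{3}{10}\abs{\dlty{n_{i+1}}},
\end{align*}
and in particular $\abs{\dlty{n_{i+1}}}>\abs{\dlty{n_i+1}}$. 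Now I split on $\operatorname{sign}(\dlty{n_i+1})$: if it equals $s$, then the monotonic drift in direction $s$ only grows $\abs{\dlty{t}}$ throughout $(n_i,n_{i+1}]$ and so $\operatorname{sign}(\dlty{n_{i+1}})=s$ automatically; if instead it equals $-s$, then as long as $\dlty$ has not crossed zero its magnitude is non-increasing, so assuming $\operatorname{sign}(\dlty{n_{i+1}})=-s$ would force $\abs{\dlty{n_{i+1}}}\leq\abs{\dlty{n_i+1}}$, contradicting the displayed bound; hence $\dlty$ must have crossed zero by step $n_{i+1}$, giving $\operatorname{sign}(\dlty{n_{i+1}})=s$ in this case too. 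This closes the plan.
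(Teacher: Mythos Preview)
Your proof is correct and follows essentially the same route as the paper: both arguments observe that $\dltf{t}=0$ for $t>n_i$, deduce that $\dltm{t}$ keeps the fixed sign $s=\operatorname{sign}(\dltm{n_i+1})$, establish $\operatorname{sign}(\dlty{n_{i+1}})=s$ using the quantitative bounds of \cref{lemma:ymy}, and conclude by writing $\dlty{t}$ as a sum of same-signed terms. The only minor variation is in the sign-matching step: the paper unrolls $\dlty{n_{i+1}}=(1-\eta\beta)\dlty{n_i}+\dltm{n_i+1}\sum_k\prod_j\gamma_j$ and reads off the sign directly from $\abs{\dlty{n_{i+1}}}\geq 3\abs{\dlty{n_i}}$, whereas you bound $\abs{\dlty{n_i+1}}\leq\tfrac{3}{10}\abs{\dlty{n_{i+1}}}$ and use the monotone drift to force a contradiction---a slightly more geometric but equivalent tactic.
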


The three lemmas are proved in the following sections.

\subsection{Proof of \cref{lemma:ymy}}

We will need the following claim (proof in \cref{appendix:technical}).

\begin{claim} \label{clm:cool_sum}
Let $\gamma_t=\frac{t-1}{t+2}$. Then for $n,m \geq 1$,
\begin{align*}
    \sum_{k=n}^{m}\prod_{t=n}^{k-1} \gamma_{t+1} \geq \frac{n}{2} \brk*{1-\frac{n^2}{(m+1)^2}}.
\end{align*}
\end{claim}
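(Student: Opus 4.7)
The plan is to obtain a closed form for the sum on the left-hand side via two successive telescoping tricks, and then verify the resulting inequality by elementary algebra. The key observation is that $\gamma_{t+1} = \frac{t}{t+3}$ has a shifted ``ratio'' structure that makes the inner product collapse nicely.

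First I would evaluate the product $\prod_{t=n}^{k-1}\gamma_{t+1} = \prod_{t=n}^{k-1}\frac{t}{t+3}$. Writing numerator and denominator as ratios of factorials and cancelling, all but three factors cancel and one obtains
\begin{align*}
    \prod_{t=n}^{k-1}\gamma_{t+1}
    = \frac{(k-1)!/(n-1)!}{(k+2)!/(n+2)!}
    = \frac{n(n+1)(n+2)}{k(k+1)(k+2)},
\end{align*}
with the empty-product case $k=n$ giving $1$, consistent with the formula.

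Next, using the partial-fraction identity
\begin{align*}
    \frac{1}{k(k+1)(k+2)} = \frac{1}{2}\brk*{\frac{1}{k(k+1)} - \frac{1}{(k+1)(k+2)}},
\end{align*}
the sum telescopes, yielding the exact value
\begin{align*}
    \sum_{k=n}^{m}\prod_{t=n}^{k-1}\gamma_{t+1}
    = \frac{n(n+1)(n+2)}{2}\brk*{\frac{1}{n(n+1)} - \frac{1}{(m+1)(m+2)}}
    = \frac{n+2}{2} - \frac{n(n+1)(n+2)}{2(m+1)(m+2)}.
\end{align*}

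Finally, I would show that this exact value is at least $\frac{n}{2}(1-\frac{n^2}{(m+1)^2})$. Subtracting the right-hand side and clearing denominators, the claim reduces to the polynomial inequality $2(m+1)^2(m+2) \geq n[3nm + 3n + 2m + 2 - n^2]$. For $m < n$ the right-hand side of the original claim is non-positive while the left-hand side sum is non-negative, so the inequality is immediate. For $m \geq n$, I would treat the difference $g(m)$ of the two sides as a polynomial in $m$ with $n$ fixed; a direct evaluation gives $g(n) = 3n^2 + 8n + 4 > 0$, and differentiation gives $g'(m) = 2(m+1)(3m+5) - (3n^2 + 2n)$, which is clearly positive for $m \geq n$, so $g$ is increasing and remains positive. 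The main obstacle is purely bookkeeping in this last algebraic step; once the exact closed form is in hand, the verification is routine.
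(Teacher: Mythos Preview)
Your argument is correct and in fact sharper than the paper's: you compute the sum exactly via the telescoping identity $\frac{1}{k(k+1)(k+2)}=\tfrac12\bigl(\frac{1}{k(k+1)}-\frac{1}{(k+1)(k+2)}\bigr)$, obtaining the closed form $\tfrac{n+2}{2}-\tfrac{n(n+1)(n+2)}{2(m+1)(m+2)}$, and then verify the desired lower bound by a direct polynomial comparison. The paper instead takes a lossier route: it observes the same product formula $\prod_{t=n}^{k-1}\gamma_{t+1}=\tfrac{n(n+1)(n+2)}{k(k+1)(k+2)}$, lower-bounds each term by $(n/k)^3$, and then uses the integral comparison $\sum_{k=n}^{m}(n/k)^3\ge n^3\int_n^{m+1}x^{-3}\,dx$, which evaluates directly to the target $\tfrac{n}{2}\bigl(1-\tfrac{n^2}{(m+1)^2}\bigr)$.

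The trade-off is that the paper's integral trick lands exactly on the claimed bound with no further algebra, whereas your exact evaluation requires the final polynomial check $g(m)\ge 0$; on the other hand, your approach yields a strictly stronger closed-form identity and avoids the somewhat ad hoc termwise estimate $(n+1)(n+2)/k(k+1)(k+2)\ge n^2/k^3$. Both are valid; yours is arguably more transparent.
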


\begin{proof}[of \cref{lemma:ymy}]
First let us assume that $\frac{2}{3}\eta\beta\abs{\dlty{n_j}} \leq \abs{\dltm{n_j+1}}$ for some $j \in [i]$. Thus,
\begin{align*}
    \dlty{n_{j+1}}
    &= \dlty{n_{j+1}-1} - \eta\dltf{n_{j+1}-1} + \dltm{n_{j+1}} \tag{\cref{eq:dystep}} \\
    &= \dlty{n_j} - \eta \sum_{k=n_{j}}^{n_{j+1}-1} \dltf{k} + \sum_{k=n_{j}+1}^{n_{j+1}} \dltm{k} \tag{\cref{eq:dystep} multiple times} \\
    &= \dlty{n_j} - \eta\beta\dlty{n_j} + \sum_{k=n_{j}+1}^{n_{j+1}} \dltm{k} \tag{\cref{lemma:dltfm}} \\
    &= \dlty{n_j}(1-\eta\beta) + \sum_{k=n_{j}+1}^{n_{j+1}} \dltm{k}=(*).
\end{align*}
Using \cref{lemma:dltfm} ($t \in [T]/\brk[c]{n_j+1}_{j=1}^{i} \Rightarrow \dltm{t}=\gamma_t\dltm{t-1}$) recursively on $\dltm{k}$,
\begin{align*}
    (*) = \dlty{n_j}(1-\eta\beta) + \dltm{n_{j}+1}\sum_{k=n_{j}+1}^{n_{j+1}}\prod_{t=n_j+1}^{k-1} \gamma_{t+1}.
\end{align*}
Thus, since $\eta\beta \leq 1$ and for $t \geq 0$, $\gamma_{t+1}\geq 0$,
\begin{align*}
    \abs{\dlty{n_{j+1}}}
    &\geq \abs{\dltm{n_{j}+1}} \sum_{k=n_{j}+1}^{n_{j+1}}\prod_{t=n_j+1}^{k-1} \gamma_{t+1} - (1-\eta\beta)\abs{\dlty{n_j}}.
\end{align*}
Using our assumption that $\frac{2}{3}\eta\beta\abs{\dlty{n_j}} \leq \abs{\dltm{n_j+1}}$,
\begin{align*}
    \abs{\dlty{n_{j+1}}}
    &\geq \brk*{\sum_{k=n_{j}+1}^{n_{j+1}}\prod_{t=n_j+1}^{k-1} \gamma_{t+1}-\frac{3(1-\eta\beta)}{2\eta\beta}}\abs{\dltm{n_j+1}} \\
    &\geq \brk*{\sum_{k=n_{1}+1}^{n_{2}}\prod_{t=n_1+1}^{k-1} \gamma_{t+1}-\frac{3(1-\eta\beta)}{2\eta\beta}}\abs{\dltm{n_j+1}}.
\end{align*}
Using \cref{clm:cool_sum},
\begin{align*}
    \sum_{k=n_{1}+1}^{n_{2}}\prod_{t=n_1+1}^{k-1} \gamma_{t+1}
    &\geq \frac{n_1+1}{2}\brk*{1-\frac{(n_1+1)^2}{(n_2+1)^2}} \\
    &\geq \frac{30}{2\eta\beta}\brk*{1-\frac{3^2}{4^2}} \tag{$n_i=\ceil{10/\eta\beta}(i+2)$} \\
    &\geq \frac{13}{2\eta\beta}.
\end{align*}
Plugging it gives us
\begin{align*}
    \abs{\dlty{n_{j+1}}}
    &\geq \brk*{13-3(1-\eta\beta)}\frac{\abs{\dltm{n_j+1}}}{2\eta\beta} \\
    &\geq \frac{5}{\eta\beta}\abs{\dltm{n_j+1}}. \tag{$\eta\beta \geq 0$}
\end{align*}
So if $\frac{2}{3}\eta\beta\abs{\dlty{n_j}} \leq \abs{\dltm{n_j+1}}$ is true for some $j$, then $\abs{\dltm{n_j+1}} \leq \frac{1}{5} \eta\beta\abs{\dlty{n_{j+1}}}$ is also true.
We will show $\frac{2}{3}\eta\beta\abs{\dlty{n_j}} \leq \abs{\dltm{n_j+1}}$ by induction, which will conclude our proof.
\begin{align*}
    \abs{\dltm{n_1+1}}
    &= \gamma_{n_{1}+1}\abs{\dltm{n_{1}}-\eta\dltf{n_{1}}} \tag{\cref{eq:dmstep}} \\
    &= \gamma_{n_{1}+1}\abs{\dltm{n_{1}}-\eta\beta\dlty{n_{1}}} \tag{\cref{lemma:dltfm}} \\
    &= \gamma_{n_{1}+1}\abs2{\dltm{1}\prod_{k=2}^{n_1}\gamma_k-\eta\beta\dlty{n_{1}}} \tag{\cref{lemma:dltfm}} \\
    &= \gamma_{n_{1}+1}\abs{\eta\beta\dlty{n_{1}}} \tag{$\dltm{1}=0$} \\
    &\geq \frac{2}{3}\eta\beta\abs{\dlty{n_{1}}}. \tag{$t \geq 7 \Rightarrow \gamma_{t}\geq \frac{2}{3}$}
\end{align*}
Now for a given $j>1$,
\begin{align*}
    \dltm{n_{j}+1}
    &= \gamma_{n_{j}+1}(\dltm{n_{j}}-\eta\dltf{n_{j}}) \tag{\cref{eq:dmstep}} \\
    &= \gamma_{n_{j}+1}\dltm{n_{j}}-\gamma_{n_{j}+1}\eta\beta\dlty{n_{j}}, \tag{\cref{lemma:dltfm}}
\end{align*}
and again using \cref{lemma:dltfm} ($t \in [T]/\brk[c]{n_j+1}_{j=1}^{i} \Rightarrow \dltm{t}=\gamma_t\dltm{t-1}$),
\begin{align*}
    \dltm{n_{j}+1}
    &= \dltm{n_{j-1}+1} \prod_{k=n_{j-1}+2}^{n_j+1} \gamma_{k}-\gamma_{n_{j}+1}\eta\beta\dlty{n_j}.
\end{align*}
Thus, since $\gamma_t \geq 0$ for $t \geq 1$,
\begin{align*}
    \abs{\dltm{n_{j}+1}}
    &\geq \gamma_{n_{j}+1}\eta\beta\abs{\dlty{n_i}} - \abs{\dltm{n_{j-1}+1}}\prod_{k=n_{j-1}+2}^{n_j+1}\gamma_{k}.
\end{align*}
By the induction assumption, $\frac{2}{3}\eta\beta\abs{\dlty{n_{j-1}}} \leq \abs{\dltm{n_{j-1}+1}}$. Hence, $\abs{\dltm{n_{j-1}+1}} \leq \frac{1}{5} \eta\beta\abs{\dlty{n_{j}}}$ as we showed in the beginning of the proof. Thus,
\begin{align*}
    \abs{\dltm{n_{j}+1}}
    &\geq \gamma_{n_{j}+1}\eta\beta\abs{\dlty{n_j}} - \frac{1}{5}\eta\beta\abs{\dlty{n_{j}}}\prod_{k=n_{j-1}+2}^{n_j+1}\gamma_{k}
    \\
    &= \brk*{\gamma_{n_{j}+1}-\frac{1}{5}\prod_{k=n_{j-1}+2}^{n_j+1}\gamma_{k}}\eta\beta\abs{\dlty{n_j}} \\
    &\geq \brk*{\frac{9}{10}-\frac{1}{5}}\eta\beta\abs{\dlty{n_j}} \tag{$t \geq 28 \Rightarrow \gamma_{t}\geq \frac{9}{10}$} \\
    &\geq \frac{2}{3}\eta\beta\abs{\dlty{n_j}},
\end{align*}
and we concluded our inductive argument.
\end{proof}

\subsection{Proof of \cref{lemma:exp_growth}}

\begin{proof} \label{proof:minf}
Let $i \leq M$.
Let $(x_t,y_t)=\agd(f_i,0,t)$ and $(\xt_t,\yt_t)=\agd(f_i,\epsilon,t)$ for all $t \leq n_{i+1}$.
From \cref{lemma:ymy} we know that for all $j \in [i]$,
\begin{align*}
    \abs{\dlty{n_{j+1}}} \geq \frac{10}{3} \abs{\dlty{n_j}}
    \implies \abs{\dlty{n_{j+1}}} \geq 3^{j} \abs{\dlty{n_1}}.
\end{align*}
Using \cref{lemma:dltfm},
\begin{align*}
    \dlty{n_1}
    &= \dlty{n_1-1} - \eta\dltf{n_1-1} + \dltm{n_1} \tag{\cref{eq:dystep}} \\
    &= \dlty{n_1-1} + \dltm{1} \prod_{k=2}^{n_1} \gamma_k \tag{\cref{lemma:dltfm}} \\
    &= \dlty{n_1-1}. \tag{$m_1=\tilde{m}_1=0 \Rightarrow \dltm{1}=0$}
\end{align*}
Repeating this argument leads to $\dlty{n_1}=\dlty{0}=-\epsilon$.
Thus, $\abs{\dlty{n_{j+1}}} \geq 3^{j} \epsilon$. Since $\ymax{n_{i+1}}-\ymin{n_{i+1}}=\abs{\dlty{n_{i+1}}}$ as $\ymax{n_{i+1}}$ and $\ymin{n_{i+1}}$ are defined as the max and min of $\brk[c]{y_{n_{i+1}},\yt_{n_{i+1}}}$,
\begin{align*}
    \abs{\ymax{n_{i+1}}-\ymin{n_{i+1}}} \geq 3^{i} \epsilon,
\end{align*}
and since this is true for all $i \in [M]$, we showed our exponential growth.
All is left to show is that for all $j \leq i$, $\abs{\dlty{n_{j+1}}}=\ymax{n_{j+1}}-\ymin{n_{j+1}}$.
If $j=i$, then this is immediate since $\ymax{n_{j+1}},\ymin{n_{j+1}}$ are defined as the max and min of $y_{n_{j+1}},\yt_{n_{j+1}}$. If $j<i$, then based on \cref{lemma:consistency}, the iterations over $f_j$ and over $f_{j+1}$ are the same up to $n_{j+1}$. Invoking \cref{lemma:consistency} up to $i$ means that the iterations over $f_j$ and $f_i$ are the same up to $n_{j+1}$. Hence, for $j \leq i$, $\abs{\dlty{n_{j+1}}}=\ymax{n_{j+1}}-\ymin{n_{j+1}}$.
\end{proof}

\subsection{Proof of \cref{lemma:afterwards}}

\begin{proof}
We start by showing that
\begin{align*}
    \sigma(\dlty{n_{i+1}})=\sigma(\dltm{n_i+1}),
\end{align*}
where $\sigma$ is the sign function.
Using \cref{eq:dystep} recursively from $n_{i+1}$ to $n_{i}$,
\begin{align*}
    \dlty{n_{i+1}}
    &= \dlty{n_{i+1}-1} - \eta\dltf{n_{i+1}-1} + \dltm{n_{i+1}} \tag{\cref{eq:dystep}} \\
    &= \dlty{n_i} - \eta \sum_{k=n_{i}}^{n_{i+1}-1} \dltf{k} + \sum_{k=n_{i}+1}^{n_{i+1}} \dltm{k} \tag{\cref{eq:dystep} multiple times} \\
    &= \dlty{n_i} - \eta\beta\dlty{n_i} + \sum_{k=n_{i}+1}^{n_{i+1}} \dltm{k} \tag{\cref{lemma:dltfm}} \\
    &= \dlty{n_i}(1-\eta\beta) + \sum_{k=n_{i}+1}^{n_{i+1}} \dltm{k}.
\end{align*}
Using \cref{lemma:dltfm} ($t \in [T]/\brk[c]{n_j+1}_{j=1}^{i} \Rightarrow \dltm{t}=\gamma_t\dltm{t-1}$) recursively on $\dltm{k}$,
\begin{align*}
    \dlty{n_{i+1}} = \dlty{n_i}(1-\eta\beta) + \dltm{n_{i}+1}\sum_{k=n_{i}+1}^{n_{i+1}}\prod_{t=n_i+1}^{k-1} \gamma_{t+1}.
\end{align*}
Now we use the fact that $\abs{\dlty{n_{i+1}}} \geq 3\abs{\dlty{n_{i}}}$ from \cref{lemma:ymy},
\begin{align*}
    3\abs{\dlty{n_{i}}}
    &\leq \abs{\dlty{n_{i+1}}}
    = \sigma(\dlty{n_{i+1}})\dlty{n_{i+1}} \\
    &= \sigma(\dlty{n_{i+1}}) \brk*{\dlty{n_i}(1-\eta\beta) + \dltm{n_{i}+1}\sum_{k=n_{i}+1}^{n_{i+1}}\prod_{t=n_i+1}^{k-1} \gamma_{t+1}} \\
    &\leq \abs{\dlty{n_i}(1-\eta\beta)} + \sigma(\dlty{n_{i+1}}) \brk*{\dltm{n_{i}+1}\sum_{k=n_{i}+1}^{n_{i+1}}\prod_{t=n_i+1}^{k-1} \gamma_{t+1}},
\end{align*}
hence, since $0 \leq \eta\beta \leq 1$,
\begin{align*}
    \sigma(\dlty{n_{i+1}}) \brk*{\dltm{n_{i}+1}\sum_{k=n_{i}+1}^{n_{i+1}}\prod_{t=n_i+1}^{k-1} \gamma_{t+1}}
    &\geq (2+\eta\beta) \abs{\dlty{n_{i}}}
    \geq 2 \abs{\dlty{n_{i}}} \geq 0 \\
    \implies \sigma(\dlty{n_{i+1}}) = \sigma(\dltm{n_{i}+1}),
\end{align*}
where the last equality follows from the fact that for $t\geq 2, \gamma_t > 0$.
Thus, for $t>n_{i+1}$, again using \cref{eq:dystep} recursively,
\begin{align*}
    \dlty{t}
    &= \dlty{t-1} - \eta\dltf{t-1} + \dltm{t} \tag{\cref{eq:dystep}} \\
    &= \dlty{n_{i+1}} - \eta \sum_{k=n_{i+1}}^{t-1} \dltf{k} + \sum_{k=n_{i+1}+1}^{t} \dltm{k} \tag{\cref{eq:dystep} multiple times} \\
    &= \dlty{n_{i+1}} + \sum_{k=n_{i+1}+1}^{t} \dltm{k} \tag{\cref{lemma:dltfm}, $k>n_i \implies \dltf{k}=0$}.
\end{align*}
From \cref{lemma:dltfm}, we know that for all $k > n_{i}+1$,
\begin{align*}
    \sigma(\dltm{k})=\sigma(\dltm{k-1})=\dots=\sigma(\dltm{n_{i}+1}).
\end{align*}
Hence, since also $\sigma(\dlty{n_{i+1}})=\sigma(\dltm{n_{i}+1})$,
\begin{align*}
    \abs{\dlty{t}}
    &= \abs{\dlty{n_{i+1}} + \sum_{k=n_{i}+1}^{n_{i+1}} \dltm{k}} \\
    &= \abs{\dlty{n_{i+1}}} + \sum_{k=n_{i}+1}^{n_{i+1}}\abs{ \dltm{k}} \tag{all terms share sign} \\
    &\geq \abs{\dlty{n_{i+1}}},
\end{align*}
and we conclude the proof.
\end{proof}

\subsection{Proof of \cref{lemma:finite_m}}
\begin{proof}
First we will show that $M \geq 1$.
We already know that $\max_x \nabla f_0(x)=-G < -\frac12 G$.
From the definition of $f_1(x)$, $\max_x \nabla f_1(x)=-G+\beta(\ymax{n_1}-\ymin{n_1})$.
Let $(x_{t},y_{t})=\agd(f_0,0,t,\eta)$ and $(\xt_{t},\yt_{t})=\agd(f_0,\epsilon,0,\eta)$ for $t \leq n_1$.
Hence,
\begin{align*}
    \ymax{n_1}-\ymin{n_1}
    &= \abs{\dlty{n_1}} \tag{definition of $\ymax{n_1},\ymin{n_1}$} \\
    &= \abs{\dlty{n_1-1} - \eta \dltf{n_1-1} + \dltm{n_1}} \tag{\cref{eq:dystep}} \\
    &= \abs{\dlty{n_1-1}} \tag{\cref{lemma:dltfm}} \\
    &= \abs{\dlty{0}} \tag{repeating the two steps above} \\
    &= \epsilon < \frac{G}{2\beta}. \tag{assumption about $\epsilon$}
\end{align*}
Therefore,
\begin{align*}
    \max_x \nabla f_1(x)
    &< -G+\frac12 G
    = -\frac12 G,
\end{align*}
and $M \geq 1$.
Now we move to upper-bounding $M$.
Fix some finite $I \leq M$. From the definition of $M$ we know that $\max_x \nabla f_I(x) < -\frac{G}{2}$. Hence,
\begin{align*}
    -\frac{G}{2}
    &> \max_x \nabla f_{I}(x) 
    \\
    &= \max_x \left\{-G + \beta \int_{-\infty}^x \IND{\exists j \in [I] \text{ s.t. } z \in [\ymin{n_j},\ymax{n_j}]} dz \right\} 
    \\
    &\geq -G + \beta \brk{ \ymax{n_I}-\ymin{n_I} }
    ,
\end{align*}
which implies that $\ymax{n_I}-\ymin{n_I} < \frac{G}{2\beta}$.
But from \cref{lemma:exp_growth}, $\ymax{n_I}-\ymin{n_I} \geq \epsilon 3^{I-1}$, thus, $I \leq \log_3 \frac{3 G}{2\beta\epsilon}$ and since we fixed an arbitrary $I \leq M$, $M \leq \log_3 \frac{3 G}{2\beta\epsilon} \leq \ln \frac{3 G}{2\beta\epsilon}$.
In order to lower bound $\ymax{n_{M+1}}-\ymin{n_{M+1}}$ we will exploit the definition of $M$. We know that $\max_x \nabla f_{M+1}(x) \geq -\frac{G}{2}$ (this is the first function that violates the condition in the definition of $M$). Thus,
\begin{align*}
    -\frac{G}{2}
    &\leq \max_x \nabla f_{M+1}(x) 
    \\
    &= \max_x \left\{-G + \beta \int_{-\infty}^x \IND{\exists j \in [M+1] \text{ s.t. } z \in [\ymin{n_j},\ymax{n_j}]} dz \right\} 
    \\
    &\leq -G + \beta\sum_{j=1}^{M+1} \brk{ \ymax{n_j}-\ymin{n_j} }
    ,
\end{align*}
which implies, using \cref{lemma:exp_growth}, that
\begin{align*}
     \sum_{j=1}^{M+1} \abs{\dlty{n_j}} 
     =
     \sum_{j=1}^{M+1} \brk{ \ymax{n_j}-\ymin{n_j} }
     \geq 
     \frac{G}{2\beta}.
\end{align*}
On the other hand, since $\abs{\dlty{n_{j+1}}} \geq 3 \abs{\dlty{n_j}}$ due to \cref{lemma:ymy}, we have
\begin{align*}
    \frac{G}{2\beta}
    &\leq \sum_{j=1}^{M+1} \abs{\dlty{n_j}}
    \leq \sum_{j=1}^{M+1} \abs{\dlty{n_{M+1}}}\cdot 3^{j-(M+1)}
    \leq \frac{3}{2}\abs{\dlty{n_{M+1}}} \\
    &\implies \abs{\dlty{n_{I+1}}}
    = \abs{\dlty{n_{M+1}}}
    \geq \frac{G}{3\beta}.
    \qedhere
\end{align*}
\end{proof}

\subsection{Proof of \cref{lemma:f_props}}
\begin{proof}
The first derivative of $\fmp$ is
\begin{align*}
    \nabla \fmp(x) &\eqdef
    \begin{cases}
    \nabla f_M(x) &\qquad x \leq \qcp; \\
    \nabla f_M(\qcp)+\beta (x-\qcp) &\qquad \qcp < x \leq \qcp - \frac{1}{\beta}\nabla f_M(\qcp); \\
    0 &\qquad \text{otherwise}.
    \end{cases}
\end{align*}
Hence, for all $x \in \R$, $\abs{\nabla \fmp(x)} \leq \max\brk[c]{\abs{\nabla f_M(x)},\abs{\nabla f_M(\qcp)}} \leq G$, where the last inequality comes from \cref{lemma:f_i}. Thus, $\fmp$ is $G$-Lipschitz. The second derivative of $\fmp$ is
\begin{align*}
    \nabla^2 \fmp(x) &\eqdef
    \begin{cases}
    \nabla^2 f_M(x) &\qquad x \leq p; \\
    \beta &\qquad p < x \leq p - \frac{1}{\beta}\nabla f_M(\qcp); \\
    0 &\qquad \text{otherwise}.
    \end{cases}
\end{align*}
Thus, from \cref{lemma:f_i}, for all $x \in \R$, $0 \leq \nabla^2 \fmp(x) \leq \beta$. Hence, $\fmp$ is convex and $\beta$-smooth.
Let $x^{\star} \eqdef p-\frac{\nabla f_M(p)}{\beta}$.
Since $f$ is convex and $\nabla f(x^{\star})=0$,
$x^{\star} \in \argmin_x f(x)$.
In order to bound the distance $\abs{x_0-x^{\star}}$ we will first bound $p=\ymax{n_M}$.

Since $M \leq \ln \frac{3G}{2\beta\epsilon}$ (\cref{lemma:finite_m}),
\begin{align*}
    n_M \leq \ceil*{\frac{10}{\eta\beta}}\brk*{\ln \frac{3G}{2\beta\epsilon}+2}.
\end{align*}
Since for \agd, $\abs{m_t} \leq \abs{m_{t-1}}+\eta\abs{\nabla f(y_{t-1})}$ (\cref{eq:mstep}), for a $G$-Lipschitz function, $\abs{m_t} \leq \eta G (t-1)$.
Hence, for a sequence $(x_t,y_t)_{t=0}^T$ of \agd on a $G$-Lipschitz function,
\begin{align*}
    \abs{y_t}
    &\leq \abs{y_{t-1}} + \eta G (t-1) + \eta G \tag{\cref{eq:ystep} and Lipschitz condition} \\
    &= \abs{y_{t-1}} + \eta G t \\
    &= \abs{y_0} + \eta G \frac{t(t+1)}{2}.
\end{align*}
Therefore,
\begin{align*}
    \ymax{n_M} < \epsilon + \frac{\eta G}{2} \brk*{\ceil*{\frac{10}{\eta\beta}}\brk*{\ln \frac{3G}{2\beta\epsilon}+2}+1}^2.
\end{align*}
Hence,
\begin{align*}
    \abs{x_0-x^{\star}}
    &\leq \abs{x_0} + \abs{p} + \abs*{\frac{\nabla f_M(p)}{\beta}} \\
    &\leq \abs{x_0} + \frac{G}{\beta} + \abs{p} \tag{Lipschitz} \\
    &\leq \epsilon + \frac{G}{\beta} + \abs{p} \tag{$x_0\in\set{0,\epsilon}$} \\
    &< \epsilon + \frac{G}{\beta} + \epsilon + \frac{\eta G}{2} \brk*{\ceil*{\frac{10}{\eta\beta}}\brk!{\ln \frac{3G}{2\beta\epsilon}+2}+1}^2 \tag{showed above} \\
    &= O\brk2{\frac{G}{\eta\beta^2}\ln\brk2{\frac{G}{\beta \epsilon}}^2},
\end{align*}
where the last transition is due to $\epsilon < \frac{G}{2\beta}$ and $\eta \leq \frac{1}{\beta}$.
\end{proof}

\subsection{Proof of \cref{lemma:diff_equality}}
In order to prove the lemma, we will use the following lemma which tie the behaviour of \agd on $\fmp$ and $f_M$ for the first $n_M+1$ steps given our starting points. The lemma is proved in the following section.

\begin{lemma} \label{lemma:consistency_f_fm}
For all $x_0 \in \brk[c]{0,\epsilon}$ and $t \leq n_{M}+1$, we have 
$
    \agd(\fmp,x_0,t) = \agd(f_{M},x_0,t)
    .
$
\end{lemma}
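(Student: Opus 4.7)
The plan is a straightforward induction on $t$, exploiting the fact that $\fmp$ and $f_M$ agree on the half-line $(-\infty, p]$ where $p = \ymax{n_M}$, and showing that all relevant query points $y_t$ stay inside this half-line for $t \le n_M$. Write $(x_t,y_t) = \agd(\fmp, x_0, t)$ and $(\xh_t,\yh_t) = \agd(f_M, x_0, t)$. The goal is $(x_t,y_t) = (\xh_t,\yh_t)$ for every $t \le n_M + 1$.

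The base case $t=0$ is trivial since both algorithms start at $x_0$ with $y_0 = x_0$. For the inductive step, assume agreement holds up to time $t$, with $t \le n_M$. By the construction of $\fmp$ from $f_M$, we have $\nabla \fmp(x) = \nabla f_M(x)$ whenever $x \le p$. Thus if we can argue $y_t \le p$, then using the inductive hypothesis $y_t = \yh_t$ we obtain
\begin{align*}
    x_{t+1} \;=\; y_t - \eta \nabla \fmp(y_t) \;=\; \yh_t - \eta \nabla f_M(\yh_t) \;=\; \xh_{t+1},
\end{align*}
and then \cref{eq:ystep} together with $x_t = \xh_t$ yields $y_{t+1} = \yh_{t+1}$, closing the induction up through $t+1 \le n_M+1$.

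It remains to verify the hypothesis $y_t \le p$ for every $t \le n_M$. By the inductive hypothesis, $y_t = \yh_t$ for all such $t$, so it suffices to bound $\yh_t$. By the definition of $M$, the function $f_M$ is convex, $\beta$-smooth, and satisfies $\nabla f_M(x) < -G/2$ everywhere, so \cref{clm:min_step} applies to the sequence $(\xh_s,\yh_s)$ and yields $\yh_0 < \yh_1 < \cdots < \yh_{n_M}$; in particular $\yh_t \le \yh_{n_M}$ for $t \le n_M$. Moreover, by iterating \cref{lemma:consistency} down from index $M$ to $M-1$, the iterates $\agd(f_M, x_0, \cdot)$ coincide with $\agd(f_{M-1}, x_0, \cdot)$ up to step $n_M$, and hence $\yh_{n_M}$ equals precisely the quantity ($y_{n_M}$ or $\yt_{n_M}$ depending on whether $x_0 = 0$ or $x_0 = \epsilon$) that enters the definition $p = \ymax{n_M} = \max\{y_{n_M},\yt_{n_M}\}$. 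In either case $\yh_{n_M} \le p$, and the induction closes.

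The only conceptual pitfall is making sure one is entitled to invoke \cref{clm:min_step} and \cref{lemma:consistency} on the correct sequences — the claim refers to iterates computed against $f_M$ (or $f_{M-1}$), not against $\fmp$, but this is exactly what the induction provides. Routine checking of the $t=n_M+1$ step is then immediate: only $y_{n_M}$ is queried in forming $x_{n_M+1}$, and we have just shown $y_{n_M} \le p$, so the two gradient evaluations coincide and the update rules produce identical iterates.
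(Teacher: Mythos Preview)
Your proof is correct and follows essentially the same approach as the paper: induction on $t$, using monotonicity of the $y_t$ sequence (you via \cref{clm:min_step}, the paper via \cref{clm:neg_der}) to confine all query points to $(-\infty,p]$ where $\fmp$ and $f_M$ agree. You are slightly more careful than the paper in explicitly invoking \cref{lemma:consistency} to justify that $\yh_{n_M}$ (computed on $f_M$) equals one of the defining values of $\ymax{n_M}$ (computed on $f_{M-1}$); the paper's proof asserts $y_{n_M} \le \ymax{n_M}$ without comment, implicitly relying on the same lemma.
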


Secondly we will need the following claim, proved below.
\begin{claim}\label{clm:post_plateau}
Let $(x_t,y_t) = \agd(\fmp,x_0,t,\eta)$ where $x_0 \in \brk[c]*{0,\epsilon}$. Then for all $t > n_{M}$, $\nabla \fmp(y_t)=0$.
\end{claim}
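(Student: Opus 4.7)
The plan is to first show that by step $n_M+1$ the iterate $y_{n_M+1}$ has already passed the quadratic transition region of $\fmp$ and entered the plateau where $\nabla\fmp=0$, and then to argue inductively that all subsequent iterates stay in the plateau.

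\textbf{Step 1 (Reaching the plateau).} By \cref{lemma:consistency_f_fm}, the iterates of \agd on $\fmp$ coincide with those on $f_M$ through step $n_M+1$. Since $\nabla f_M(x)<-G/2$ for every $x$ by the definition of $M$, \cref{clm:min_step} applies on $f_M$ and yields $y_{n_M+1}>y_{n_M}+2G/\beta$, provided $n_M+1>\lceil 20/\eta\beta\rceil$; the latter is immediate from $M\geq 1$ (\cref{lemma:finite_m}) and $n_M=\lceil 10/\eta\beta\rceil(M+2)$. Combining with $y_{n_M}\geq\ymin{n_M}>p-G/(2\beta)$ (from \cref{clm:upper_bound_diff} and the definition $p=\ymax{n_M}$), one gets
\begin{align*}
    y_{n_M+1} \;>\; p+\tfrac{3G}{2\beta}.
\end{align*}
Since $\lvert\nabla f_M(p)\rvert\leq G$ by $G$-Lipschitzness of $f_M$ (\cref{lemma:f_i}), the plateau of $\fmp$ begins at $p-\tfrac{1}{\beta}\nabla f_M(p)\leq p+G/\beta<p+3G/(2\beta)$, so $y_{n_M+1}$ lies in the plateau and $\nabla\fmp(y_{n_M+1})=0$.

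\textbf{Step 2 (Staying in the plateau).} I proceed by induction on $t\geq n_M+1$ over the joint statement that $\nabla\fmp(y_t)=0$ and $m_t\geq 0$. Step 1 delivers the first part of the base case; for $m_{n_M+1}\geq 0$, since $\nabla f_M<0$ everywhere, the recursion $m_t=\gamma_t(m_{t-1}-\eta\nabla f_M(y_{t-1}))$ from \cref{eq:mstep}, combined with $\gamma_t\geq 0$ for $t\geq 1$ and $m_1=0$, preserves non-negativity. For the inductive step, if $\nabla\fmp(y_t)=0$ and $m_t\geq 0$, then \cref{eq:xstep,eq:mstep,eq:ystep} give $x_{t+1}=y_t$, $m_{t+1}=\gamma_{t+1}m_t\geq 0$, and $y_{t+1}=x_{t+1}+m_{t+1}\geq y_t$, so $y_{t+1}$ is still in the plateau and $\nabla\fmp(y_{t+1})=0$.

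The only delicate quantitative point is in Step 1: the single \agd step from $y_{n_M}$ must clear the quadratic transition region of $\fmp$, whose width is at most $G/\beta$. The bounds above supply a margin of $3G/(2\beta)$ past $p$, which comfortably exceeds $G/\beta$, so the argument goes through without further care. The induction of Step 2 is then essentially mechanical because once the gradient vanishes, the momentum only decays by $\gamma_{t+1}\in[0,1)$ and the iterate can only move forward.
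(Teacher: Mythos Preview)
Your proof is correct and follows essentially the same route as the paper. Step~1 matches the paper's argument verbatim (combining \cref{lemma:consistency_f_fm}, \cref{clm:min_step}, \cref{clm:upper_bound_diff}, and the Lipschitz bound on $f_M$ to show $y_{n_M+1}>p-\tfrac{1}{\beta}\nabla f_M(p)$); for Step~2 the paper simply invokes \cref{clm:neg_der} on $\fmp$ (whose gradient is everywhere $\leq 0$) to conclude $y_t\geq y_{n_M+1}$ for all $t>n_M$, whereas you unroll that same monotonicity argument inline as an induction---the content is identical.
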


\begin{proof}[of \cref{lemma:diff_equality}]
For $t \leq n_M + 1$ the lemma follows from \cref{lemma:consistency_f_fm}.
We will show by induction that the lemma also follows for $t > n_M + 1$.
\begin{align*}
    x_t - \xt_t
    &= y_{t-1} - \eta \nabla f_M(y_{t-1}) - \yt_{t-1} + \eta \nabla f_M(\yt_{t-1}) \tag{\cref{eq:xstep}} \\
    &= y_{t-1} - \yt_{t-1} \tag{\cref{lemma:dltfm}} \\
    &= \yh_{t-1} - \yb_{t-1} \tag{induction assumption} \\
    &= \yh_{t-1} -\eta\nabla \fmp(\yh_{t-1}) - \yb_{t-1} + \eta\nabla \fmp(\yb_{t-1}) \tag{\cref{clm:post_plateau}} \\
    &= \xh_t - \xb_t. \tag{\cref{eq:xstep}}
\end{align*}
The second equality follows immediately from the first with \cref{eq:xstep}.
\end{proof}

In order to prove \cref{clm:post_plateau} we use the following claim.
\begin{claim} \label{clm:neg_der}
Let $f: \R \rightarrow \R$ be a convex, $\beta$-smooth function such that for all $x \in R$, $\nabla f(x) \leq 0$ ($\nabla f(x) < 0$). Let $(x_t,y_t)=\agd(f,x_0,t,\eta)$ for all $t \leq T$, for some $T$ and step size $\eta$. Then $y_t \geq y_{t-1}$ ($y_t > y_{t-1}$).
\end{claim}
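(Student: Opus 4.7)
\textbf{Proof plan for Claim \ref{clm:neg_der}.} The plan is to show by induction on $t$ that the momentum term $m_t$ of \agd stays non-negative throughout the run, after which the monotonicity of $y_t$ falls out immediately from the update rule (\cref{eq:ystep}).

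First, I would observe that since $\gamma_1 = 0$, we have $m_1 = \gamma_1(x_1-x_0) = 0$, which serves as the base case. For the inductive step, I would invoke the recursion \cref{eq:mstep},
\begin{align*}
    m_t = \gamma_t\brk!{m_{t-1} - \eta \nabla f(y_{t-1})},
\end{align*}
and note that for every $t \geq 1$ we have $\gamma_t = (t-1)/(t+2) \geq 0$. Combining the inductive hypothesis $m_{t-1} \geq 0$ with the assumption $\nabla f(y_{t-1}) \leq 0$ (so that $-\eta\nabla f(y_{t-1}) \geq 0$) then gives $m_t \geq 0$.

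Having established $m_t \geq 0$ for all $t \geq 1$, I would conclude by plugging into \cref{eq:ystep}:
\begin{align*}
    y_t = y_{t-1} - \eta \nabla f(y_{t-1}) + m_t \geq y_{t-1},
\end{align*}
since both $-\eta\nabla f(y_{t-1})$ and $m_t$ are non-negative. For the strict case (when $\nabla f(x) < 0$ for all $x$), the term $-\eta\nabla f(y_{t-1})$ is strictly positive while $m_t \geq 0$ still holds, so the same display yields $y_t > y_{t-1}$.

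There is no real obstacle here; the only subtlety is checking that $\gamma_t \geq 0$ for all $t \geq 1$, which is immediate from the definition $\gamma_t = (t-1)/(t+2)$, and verifying that the base $m_1 = 0$ so the induction starts cleanly. Note that the slight asymmetry that $m_1 = 0$ rather than strictly positive is harmless for the strict version, because the forcing term $-\eta\nabla f(y_{t-1}) > 0$ already appears additively in \cref{eq:ystep} regardless of the value of $m_t$.
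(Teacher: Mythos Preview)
Your proposal is correct and follows essentially the same argument as the paper: prove by induction that $m_t \geq 0$ (base case $m_1=0$ from $\gamma_1=0$, inductive step via \cref{eq:mstep} using $\gamma_t \geq 0$ and $\nabla f \leq 0$), then read off monotonicity of $y_t$ from \cref{eq:ystep}, with the strict version following because $-\eta\nabla f(y_{t-1})>0$ regardless of $m_t$.
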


\begin{proof}[of \cref{clm:neg_der}]
First we will show by induction that for all $1 \leq t \leq T$, $m_t \geq 0$. For $t=1$, $m_1=\gamma_1(x_1-x_0)=0\cdot(x_1-x_0)=0$. From \cref{eq:mstep}, $1 < t \leq T$, $m_t=\gamma_t(m_{t-1}-\eta \nabla f(y_{t-1}))$, and since $m_{t-1}$ is non-negative and $\nabla f(y_{t-1})$ is non-positive, $m_t \geq 0$.
In order to finish the claim we use \cref{eq:ystep},
\begin{align*}
    y_t
    &= y_{t-1} - \eta \nabla f(y_{y-1}) + m_t
    \geq y_{t-1} - \eta \nabla  f(y_{y-1}),
\end{align*}
where the last inequality follows from $m_t$ being non-negative. The claim follows in the respective case according to $\nabla f(x)$ being negative or non-positive.
\end{proof}

\begin{proof}[of \cref{clm:post_plateau}]
We will prove the claim by showing that $y_t > \qcp - \frac{1}{\beta}\nabla f_M(\qcp)$. Using \cref{clm:neg_der} it is enough to show that $y_{n_{M}+1} > \qcp - \frac{1}{\beta}\nabla f_M(\qcp)$.
\begin{align*}
    \qcp - \frac{1}{\beta}\nabla f_M(\qcp)
    &\leq \qcp + \frac{G}{\beta} \tag{Lipschitz condition of $f_M$} \\
    &\leq \ymin{n_{M}} + \frac{G}{2\beta} + \frac{G}{\beta} \tag{$p=\ymax{n_M}$ and \cref{clm:upper_bound_diff}} \\
    &\leq \ymin{n_{M}} + \frac{3G}{2\beta} \\
    &\leq y_{n_{M}} + \frac{3G}{2\beta} \\
    &\leq y_{n_{M}+1}. \tag{\cref{clm:min_step} with \cref{lemma:consistency_f_fm}}
\end{align*}
Hence, $\nabla f(y_t)=0$
\end{proof}

\subsection{Proof of \cref{lemma:consistency_f_fm}}

\begin{proof}[of \cref{lemma:consistency_f_fm}]
Let $(x_t,y_t) = \agd(f_{M},x_0,t)$ and $(\xt_t,\yt_t) = \agd(\fmp,x_0,t)$ for $t \leq n_{M}$. From \cref{clm:neg_der} we know that for all $t \leq n_{M}$,
\begin{align*}
    y_t \leq y_{n_{M}} \leq \ymax{n_{M}} = \qcp \implies \nabla f_M(y_t)=\nabla \fmp(y_t).
\end{align*}
We will continue by induction. At initialization, $y_0=x_0=\xt_0=\yt_0$. At step $t \leq n_{M}+1$,
\begin{align*}
    x_t
    &= y_{t-1} - \eta \nabla f_M(y_{t-1}) \\
    &= y_{t-1} - \eta \nabla \fmp(y_{t-1}) \tag{showed above} \\
    &= \yt_{t-1} - \eta \nabla \fmp(\yt_{t-1}) \tag{induction assumption} \\
    &= \xt_t. \tag{\cref{eq:xstep}}
\end{align*}
And using the induction assumption again, $y_t=\yt_t$, using \cref{eq:ystep}.
\end{proof}

\subsection{Proofs of Technical Claims} \label{appendix:technical}

Here we prove \cref{clm:cool_sum,clm:cool_sum2}.

\begin{proof}[of \cref{clm:cool_sum}]
We observe that
\begin{align*}
    (*) \eqdef \sum_{k=n}^{m}\prod_{t=n}^{k-1}\frac{t}{t+3}
    &= \brk*{1+\frac{n}{n+3}+\frac{n(n+1)}{(n+3)(n+4)} + \sum_{k=n+3}^{m} \frac{n(n+1)(n+2)}{k(k+1)(k+2)}} \\
    &\geq \brk*{1+\frac{n^3}{(n+1)^3}+\frac{n^3}{(n+2)^3} + \sum_{k=n+3}^{m} \frac{n^3}{k^3}} \\
    &= \sum_{k=n}^{m} \brk*{\frac{n}{k}}^3.
\end{align*}
We will lower bound the latter by integration,
\begin{align*}
    (*)
    &\geq \sum_{k=n}^{m} \brk*{\frac{n}{k}}^3 \\
    &\geq n^3 \int_{n}^{m+1} x^{-3}dx \\
    &= \frac{n^3}{2} \brk*{\frac{1}{n^2}-\frac{1}{(m+1)^2}} \\
    &= \frac{n}{2} \brk*{1-\frac{n^2}{(m+1)^2}}.
\end{align*}
\end{proof}

\begin{proof}[of \cref{clm:cool_sum2}]
In the case of $t=1$ both sides are $0$. For $t>1$,
\begin{align*}
    \sum_{k=2}^{t}\prod_{j=k}^{t} \gamma_j
    &= \sum_{k=2}^{t}\prod_{j=k}^{t}\frac{j-1}{j+2} \\
    &= \frac{t-1}{t+2} + \frac{(t-2)(t-1)}{(t+1)(t+2)} + \sum_{k=2}^{t-2}\frac{(k-1)k(k+1)}{t(t+1)(t+2)} \\
    &\geq \frac{(t-1)^3}{t^2(t+2)} + \frac{(t-2)^3}{t^2(t+2)} + \sum_{k=2}^{t-2}\frac{(k-1)^3}{t^2(t+2)} \\
    &= \frac{1}{t^2(t+2)}\sum_{k=2}^{t} (k-1)^3 \\
    &= \frac{1}{t^2(t+2)}\sum_{k=1}^{t-1} k^3.
\end{align*}
Using the formula for $\sum_{k=1}^{n} k^3=\frac{n^2(n+1)^2}{4}$,
\begin{align*}
    \sum_{k=2}^{t}\prod_{j=k}^{t} \gamma_j
    &\geq \frac{(t-1)^2 t^2}{4(t+2) t^2}
    = \frac{(t-1)^2}{4(t+2)}.
\end{align*}
\end{proof}

\section{Proofs for \cref{sec:uniform}}
\label{sec:uniform-proofs}

Here we prove \cref{lemma:reduction} and \cref{thm:main_as}.

\subsection{Proof of \cref{lemma:reduction}}
First we will need the following simple claim.
\begin{claim} \label{clm:nabla0eps}
Given $f_M$ and $\fmp$ from the construction in \cref{sec:initialization} with parameters $G,\beta,\eta,\epsilon$,
\begin{align*}
    \nabla f_M(0) = \nabla f_M(\epsilon) = \nabla \fmp(0) = \nabla \fmp(\epsilon) = -G.
\end{align*}
\end{claim}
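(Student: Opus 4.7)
The plan is to read off $\nabla f_M$ from the construction, argue that the indicator inside the integral vanishes on $(-\infty,\epsilon]$, and then transfer the conclusion to $\fmp$ using the fact that $\fmp$ agrees with $f_M$ to the left of the pasting point $\qcp = \ymax{n_M}$.

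Differentiating the explicit formula for $f_i$ gives
\begin{align*}
    \nabla f_M(x) = -G + \beta \int_{-\infty}^x \IND{\exists j \leq M \text{ s.t. } z \in [\ymin{n_j},\ymax{n_j}]}\, dz,
\end{align*}
so to conclude $\nabla f_M(0) = \nabla f_M(\epsilon) = -G$ it suffices to show that $\ymin{n_j} > \epsilon$ for every $j \in [M]$. Here I would invoke \cref{clm:min_step} on each $f_{j-1}$: by definition of $M$ we have $\max_x \nabla f_{j-1}(x) < -G/2$, so \cref{clm:min_step} applies to the iterates $(x_t,y_t) = \agd(f_{j-1},x_0,t)$ for both $x_0 \in \{0,\epsilon\}$. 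Since $n_j = \ceil{10/\eta\beta}(j+2) \geq 3\ceil{10/\eta\beta} > \ceil{20/\eta\beta}$ whenever $j \geq 1$, the claim gives at least one step $t^\star \leq n_j$ on which $y_{t^\star} > y_{t^\star-1} + 2G/\beta$, while $y_t > y_{t-1}$ at every other step. Chaining these inequalities and using $y_0 \in \{0,\epsilon\} \geq 0$ yields $y_{n_j} > 2G/\beta > G/(2\beta) > \epsilon$, where the last inequality is the standing WLOG assumption $\epsilon < G/(2\beta)$ from \cref{sec:initialization}. Finally, \cref{lemma:consistency} ensures that the iterates of \agd on $f_{j-1}$ up to time $n_j$ coincide with those on $f_M$ up to that time (for $j \leq M$), so the value $\ymin{n_j}$ used in $f_M$'s construction is exactly the minimum of these two trajectories, and hence exceeds $\epsilon$.

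For the statements about $\fmp$, I would simply recall that $\fmp(x) = f_M(x)$ for all $x \leq \qcp = \ymax{n_M}$, and in particular $\nabla \fmp(x) = \nabla f_M(x)$ there. Since $\qcp \geq \ymin{n_M} > \epsilon > 0$, both $0$ and $\epsilon$ lie in this region, so $\nabla \fmp(0) = \nabla f_M(0) = -G$ and $\nabla \fmp(\epsilon) = \nabla f_M(\epsilon) = -G$.

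The proof is essentially bookkeeping: no step presents a real obstacle. The only point that demands minor care is verifying that $n_j$ is large enough to activate the "$+2G/\beta$" increment in \cref{clm:min_step} (this is where the specific constant $10$ in $n_j$'s definition pays off), and tracking the interplay between the iterates of $f_{j-1}$ used to define $\ymin{n_j}$ and those of $f_M$ via \cref{lemma:consistency}.
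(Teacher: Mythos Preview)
Your proposal is correct and takes essentially the same approach as the paper: the paper invokes \cref{clm:eta_t_ni} with $t=0$ to obtain $y_0,\yt_0 < \ymin{n_j}$ for all $j\in[M]$ directly, whereas you unpack that claim and argue from \cref{clm:min_step}. Your appeal to \cref{lemma:consistency} is unnecessary (by construction $\ymin{n_j}$ is already the minimum of the time-$n_j$ iterates of \agd on $f_{j-1}$) but harmless.
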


\begin{proof}
Let $(x_t,y_t)=\agd(f_M,0,t)$ and $(\xt_t,\yt_t)=\agd(f_M,\epsilon,t)$ for $t \leq T$.
From \cref{clm:eta_t_ni}, for all $i \in [M]$, $y_0 < \ymin{n_i}$ and $\yt_0 < \ymin{n_i}$.
Thus,
\begin{align*}
    \nabla f_M(0)
    &= \nabla f_M(y_0)
    = -G + \beta\int_{-\infty}^{y_0} \I\brk[s]{\exists j \in [M] \text{ s.t. } z \in [y_{n_j}^{min},y_{n_j}^{max}]} dz = -G,
\end{align*}
and similarly, $\nabla f_M(\epsilon)=\nabla f_M(\yt_0) = -G$.
Since $p=\ymax{n_M}>\max\brk[c]{y_0,\yt_0}$ as we stated above, and from the definition of $\fmp$, for all $x \in (-\infty,p]$, $\nabla \fmp(x)=\nabla f_M(x)$, then also $\nabla \fmp(0)=\nabla \fmp(\epsilon)=-G$.
\end{proof}

\begin{proof}[of \cref{lemma:reduction}]
Both parts are similar. We start with the first equality and prove by induction.
Let $(x_t,y_t)=\agd(\fmp,0,t,\eta)$ and $(\xt_t,\yt_t)=\agd(R_{S},0,t,\hat{\eta})$ for $t \leq T$.
For $t=1$,
\begin{align*}
    \xt_1
    &= \yt_0 - \hat{\eta}\nabla R_{S}(\yt_0) \tag{\cref{eq:xstep}} \\
    &= y_0 - \hat{\eta} \frac{n-3}{n} \nabla \fmp(y_0) \tag{$\yt_0=0=y_0$ and \cref{eq:rs}} \\
    &= y_0 - \eta \nabla \fmp(0) \tag{def of $\eta$, $\eta=\hat{\eta}\frac{n-3}{n}$} \\
    &= x_1. \tag{\cref{eq:xstep}}
\end{align*}
Since $\gamma_1=0$,
\begin{align*}
    \yt_1
    &= \xt_1 + \gamma_1 (\xt_1-\xt_0) = \xt_1,
\end{align*}
and similarly $x_1=y_1$, hence $\yt_1=y_1$.
For $t>1$,
\begin{align*}
    \xt_{t}
    &= \yt_{t-1} - \hat{\eta} \nabla R_S(\yt_{t-1}) \tag{\cref{eq:xstep}} \\
    &= \yt_{t-1} - \hat{\eta} \frac{n-3}{n}\nabla \fmp(\yt_{t-1}) \tag{\cref{eq:rs}} \\
    &= \yt_{t-1} - \eta \nabla \fmp(\yt_{t-1}) \tag{def of $\eta$, $\eta=\hat{\eta}\frac{n-3}{n}$} \\
    &= y_{t-1}-\eta\nabla h(y_{t-1}) \tag{induction assumption} \\
    &= x_t \tag{\cref{eq:xstep}}.
\end{align*}
We conclude with
\begin{align*}
    y_t
    &= x_t + \gamma_t (x_t-x_{t-1}) \tag{\cref{eq:ystep}} \\
    &= \xt_t + \gamma_t (\xt_t-x_{t-1}) \tag{from above} \\
    &= \xt_t + \gamma_t (\xt_t-\xt_{t-1}) \tag{induction assumption} \\
    &= \yt_t. \tag{\cref{eq:ystep}}
\end{align*}
Now we move to prove the second equality, again by induction.
Let $(x_t,y_t)=\agd(\fmp,\epsilon,t,\eta)$ and $(\xt_t,\yt_t)=\agd(R_{S'},0,t,\hat{\eta})$ for $t \leq T$.
For $t=1$,
\begin{align*}
    \xt_1
    &= \yt_0 - \hat{\eta}\nabla R_{S'}(y_0) \tag{\cref{eq:xstep}} \\
    &= 0 - \hat{\eta} \frac{1}{n} \nabla g_2(0) - \hat{\eta} \frac{n-3}{n} \nabla \fmp(0) \tag{$\yt_0=0$ and \cref{eq:rst}} \\
    &= -\eta \frac{1}{n-3} \nabla g_2(0) - \eta \nabla \fmp(0) \tag{def of $\eta$, $\eta=\hat{\eta}\frac{n-3}{n}$} \\
    &= \frac{\beta\eta^2 G}{n-3} - \eta \nabla \fmp(0) \tag{$\nabla g_2(x)=-\beta\eta G + \beta\cdot \int_{-\infty}^{x} \IND{z \in \brk[s]*{0,\eta G}} dz$} \\
    &= \epsilon - \eta \nabla \fmp(0) \tag{$\epsilon=\frac{\beta\eta^2 G}{n-3}$} \\
    &= \epsilon - \eta \nabla \fmp(\epsilon) \tag{\cref{clm:nabla0eps}} \\
    &= x_1. \tag{\cref{eq:xstep} and $y_0=\epsilon$}
\end{align*}
Since $\gamma_1=0$,
\begin{align*}
    \yt_1
    &= \xt_1 + \gamma_1 (\xt_1-\xt_0) = \xt_1,
\end{align*}
and similarly $x_1=y_1$, hence $\yt_1=y_1$.
Now for $t>1$.
Repeating the steps we did for $(1)$,
\begin{align*}
    \xt_{t}
    &= \yt_{t-1} - \hat{\eta} \nabla R_{S'}(\yt_{t-1}) \tag{\cref{eq:xstep}} \\
    &= \yt_{t-1} - \hat{\eta} \frac{n-3}{n}\nabla \fmp(\yt_{t-1}) - \hat{\eta} \frac{1}{n}\nabla g_2(\yt_{t-1}) \tag{\cref{eq:rst}} \\
    &= \yt_{t-1} - \eta \nabla \fmp(\yt_{t-1}) - \hat{\eta} \frac{1}{n}\nabla g_2(\yt_{t-1}) \tag{def of $\eta$, $\eta=\hat{\eta}\frac{n-3}{n}$} \\
    &= y_{t-1}-\eta\nabla \fmp(y_{t-1}) - \hat{\eta} \frac{1}{n}\nabla g_2(y_{t-1}) \tag{induction assumption} \\
    &= x_t - \hat{\eta} \frac{1}{n}\nabla g_2(y_{t-1}) \tag{\cref{eq:xstep}}.
\end{align*}
We need to show that $\nabla g_2(y_{t-1})=0$.
Since
\begin{align*}
    \nabla g_2(y_{t-1})
    &= -\beta\eta G + \beta\cdot \int_{-\infty}^{y_{t-1}} \IND{z \in \brk[s]*{0,\eta G}} dz,
\end{align*}
it is enough to show that $y_{t-1}\geq\eta G$.
From \cref{clm:neg_der}, for $t>1$, $y_{t-1} \geq y_1$.
Thus,
\begin{align*}
    y_{t-1}
    &\geq y_1 \\
    &= x_1 + \underset{=0}{\gamma_1}(x_1-x_0) \tag{\cref{eq:ystep}} \\
    &= y_0 - \eta \nabla \fmp(y_0) \tag{\cref{eq:xstep}} \\
    &= \epsilon + \eta G. \tag{\cref{clm:nabla0eps}}
\end{align*}
So indeed $y_{t-1} \geq \eta G$, hence, $\xt_t=x_t$.
We conclude with
\begin{align*}
    y_t
    &= x_t + \gamma_t (x_t-x_{t-1}) \tag{\cref{eq:ystep}} \\
    &= \xt_t + \gamma_t (\xt_t-x_{t-1}) \tag{from above} \\
    &= \xt_t + \gamma_t (\xt_t-\xt_{t-1}) \tag{induction assumption} \\
    &= \yt_t. \tag{\cref{eq:ystep}}
\end{align*}
\end{proof}

\subsection{Proof of \cref{thm:main_as}}
\begin{proof}
Let $c_3=c_1/4$ and $c_4=c_2/4$. In order to lower bound the uniform stability of algorithm $A$ we need to pick $S,S'\in \Z^n$ and $z \in \Z$ and lower bound
\begin{aligni*}
    \abs{\ell(A(S),z)-\ell(A(S'),z)}.
\end{aligni*}
We use $S,S'$ which we defined at \cref{sec:uniform}, with $z=3 \in \Z$.
We showed at \cref{lemma:reduction} that we match the iterations on $\fmp$ of our construction.
For $z=3$, since $f(w;3)=-Gw$,
\begin{align*}
    \abs{\ell(x_T,z)-\ell(\xt_T,z)} = G \abs{x_T-\xt_T}.
\end{align*}
The first case is when $T \in \brk[c]!{\ceil{10n/\hat{\eta}\hat{\beta}(n-3)}(i+2) ~:~ i = 1,2,3 \dots}$, for which
\begin{align*}
    T \in \brk[c]!{\ceil{10/\eta\beta}(i+2) ~:~ i = 1,2,3 \dots}. \tag{$\eta=\frac{n-3}{n}\hat{\eta}$,$\beta=\hat{\beta}$}
\end{align*}
So using \cref{thm:main_init_stab}, based on $S,S',z$ and our definitions for $G,\beta,\eta \text{ and } \epsilon$,
\begin{align*}
    \unif{\agd_T}{\ell}(n)
    &\geq G \min\brk[c]1{\tfrac{G}{3\beta}, c_2 e^{c_1 \eta\beta T}\epsilon}
    \geq\min\brk[c]1{\tfrac{\hat{G}^2}{3\hat{\beta}}, c_4 e^{c_3\hat{\eta}\hat{\beta} T} \tfrac{\hat{\beta} \hat{\eta}^2 \hat{G}^2}{n}}.
\end{align*}
Above we used the fact that $\eta = \frac{n-3}{n} \hat{\eta} \geq \frac{\hat{\eta}}{4}$ since $n \geq 4$.
Now for the second case of $T>\ceil{40/\hat{\eta}\hat{\beta}}\brk*{\ln \frac{6n}{\hat{\eta}^2\hat{\beta}^2}+3}$, in which,
\begin{align*}
    T
    &> \ceil{40/\hat{\eta}\hat{\beta}}\brk*{\ln \frac{6n}{\hat{\eta}^2\hat{\beta}^2}+3} \\
    &\geq \ceil{10n/\hat{\eta}\hat{\beta}(n-3)}\brk*{\ln \frac{3n^2}{2\hat{\eta}^2\hat{\beta}^2(n-3)}+3} \tag{$n \geq 4$} \\
    &= \ceil{10/\eta\beta}\brk*{\ln \frac{3(n-3)}{2\eta^2\beta^2}+3} \tag{$\eta=\frac{n-3}{n}\hat{\eta}$,$\beta=\hat{\beta}$} \\
    &= \ceil{10/\eta\beta}\brk*{\ln \frac{3G}{2\beta\epsilon}+3} \tag{$\epsilon=\frac{\beta\eta^2 G}{n-3}$}.
\end{align*}
Hence, using \cref{thm:main_init_stab},
\begin{align*}
    \abs{\ell(x_T,z)-\ell(\xt_T,z)}
    &= \frac{G^2}{3\beta}
    = \frac{\hat{G}^2}{3\hat{\beta}}
    \implies \unif{\agd_T}{\ell}(n) \geq \frac{\hat{G}^2}{3\hat{\beta}}.
    \qedhere
\end{align*}
\end{proof}

\section{Other Variants of \agd} \label{appendix:variants}

Here we show the equivalence between the version of \agd we analyze in this paper, and other versions that are common in the literature.

\subsection{Variant I}

First we consider a common variant that appears for example in \citep{allen2017linear}.
Starting at $\zt_0 = \yt_0=\xt_0$, this variant proceeds for $t=1,2,\ldots$ as:
\begin{align}
    \xt_{t+1} &= \tau_t \zt_t + (1-\tau_t) \yt_t; \label{eq:xallen} \\
    \yt_{t+1} &= \xt_{t+1} - \tfrac{1}{\beta}\nabla f(\xt_{t+1}); \label{eq:yallen} \\
    \zt_{t+1} &= \zt_{t} - \alpha_{t+1}\nabla f(\xt_{t+1}), \label{eq:zallen}
\end{align}
where $\alpha_{t+1} = \frac{t+2}{2\beta}$ and $\tau_t=\frac{2}{t+2}$.

Our next claim establishes that this variant is precisely equivalent to the \agd iterations considered in the paper (\cref{eq:agd-xt,eq:agd-yt}) with step size $\eta=\ifrac{1}{\beta}$.

\begin{claim}
For all $t \in [T]$, $\xt_t=y_{t-1}$ and $\yt_t = x_t$.
\end{claim}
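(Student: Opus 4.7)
The plan is to prove both equalities simultaneously by induction on $t$. The base case $t=1$ is immediate: since $\tilde{z}_0 = \tilde{y}_0 = \tilde{x}_0 = x_0 = y_0$, we get $\tilde{x}_1 = \tau_0 \tilde{z}_0 + (1-\tau_0)\tilde{y}_0 = \tilde{y}_0 = y_0$, and then $\tilde{y}_1 = \tilde{x}_1 - \tfrac{1}{\beta} \nabla f(\tilde{x}_1) = y_0 - \tfrac{1}{\beta}\nabla f(y_0) = x_1$ by \cref{eq:agd-xt} with $\eta=1/\beta$.

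For the inductive step, assume $\tilde{x}_t = y_{t-1}$ and $\tilde{y}_t = x_t$ (and, from the previous step of the induction, $\tilde{y}_{t-1} = x_{t-1}$). The key observation is that once we establish $\tilde{x}_{t+1} = y_t$, the second equality follows for free: $\tilde{y}_{t+1} = \tilde{x}_{t+1} - \tfrac{1}{\beta}\nabla f(\tilde{x}_{t+1}) = y_t - \tfrac{1}{\beta}\nabla f(y_t) = x_{t+1}$ by \cref{eq:agd-xt}. So the whole task reduces to computing $\tilde{x}_{t+1} = \tau_t \tilde{z}_t + (1-\tau_t)\tilde{y}_t$ in terms of $x_t, x_{t-1}$ and matching it to $y_t = (1+\gamma_t) x_t - \gamma_t x_{t-1}$.

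The cleanest way to do this, avoiding an unwieldy unrolling of $\tilde{z}_t$ back to $\tilde{z}_0$, is to solve \cref{eq:xallen} at step $t$ for $\tilde{z}_{t-1}$, obtaining $\tilde{z}_{t-1} = \tfrac{1}{\tau_{t-1}}\tilde{x}_t - \tfrac{1-\tau_{t-1}}{\tau_{t-1}}\tilde{y}_{t-1}$, and then use \cref{eq:zallen} together with the identity $\nabla f(\tilde{x}_t) = \beta(\tilde{x}_t-\tilde{y}_t)$ (which is a rewriting of \cref{eq:yallen}) to express
\[
    \tilde{z}_t = \brk2{\tfrac{1}{\tau_{t-1}} - \alpha_t \beta} \tilde{x}_t - \tfrac{1-\tau_{t-1}}{\tau_{t-1}} \tilde{y}_{t-1} + \alpha_t \beta\, \tilde{y}_t.
\]
Substituting into $\tilde{x}_{t+1} = \tau_t \tilde{z}_t + (1-\tau_t)\tilde{y}_t$ and applying the induction hypothesis yields a linear combination of $y_{t-1}$, $x_{t-1}$, and $x_t$ with explicit coefficients in terms of $\tau_t, \tau_{t-1}, \alpha_t \beta$.

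The main (and really only) obstacle is then the arithmetic verification, which hinges on the specific values $\tau_t = \tfrac{2}{t+2}$ and $\alpha_t \beta = \tfrac{t+1}{2}$. The crucial identity is $\tfrac{1}{\tau_{t-1}} = \tfrac{t+1}{2} = \alpha_t \beta$, which makes the coefficient of $y_{t-1}$ vanish exactly—this is what allows the three-term recursion of the variant to collapse into the two-term momentum recursion of our \agd. One then checks that the remaining coefficient of $x_t$ equals $\tau_t \alpha_t \beta + (1-\tau_t) = \tfrac{2t+1}{t+2} = 1+\gamma_t$ and the coefficient of $x_{t-1}$ equals $-\tau_t \tfrac{1-\tau_{t-1}}{\tau_{t-1}} = -\tfrac{t-1}{t+2} = -\gamma_t$, giving $\tilde{x}_{t+1} = (1+\gamma_t)x_t - \gamma_t x_{t-1} = y_t$ as required by \cref{eq:agd-yt}. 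This closes the induction.
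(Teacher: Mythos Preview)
Your proof is correct and follows essentially the same approach as the paper: induction on $t$, the observation that $\tilde{y}_t = x_t$ follows immediately from $\tilde{x}_t = y_{t-1}$ via \cref{eq:yallen} and \cref{eq:agd-xt}, and then an algebraic elimination of $\tilde{z}$ to reduce $\tilde{x}_{t+1}$ to a combination of $\tilde{y}_t$ and $\tilde{y}_{t-1}$. The only difference is in how the algebra is organized---you solve \cref{eq:xallen} for $\tilde{z}_{t-1}$ first and then push forward, whereas the paper factors out $\gamma_t$ early and substitutes backward---but the key cancellation $\tfrac{1}{\tau_{t-1}} = \alpha_t\beta$ and the final identification of the coefficients with $1+\gamma_t$ and $-\gamma_t$ are the same.
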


\begin{proof}
We will prove using induction. For $t=1$, $\xt_1=y_0$ follows from
\begin{align*}
    \xt_1
    &= \tau_0 \zt_0 + (1-\tau_0) \yt_0 \tag{\cref{eq:xallen}} \\
    &= \tau_0 \xt_0 + (1-\tau_0) \xt_0 \tag{initialization} \\
    &= \xt_0 \\
    &= x_0 \tag{initialization} \\
    &= y_0. \tag{initialization}
\end{align*}
The second equality, $\yt_t=x_t$, is immediate for all $t \in [T]$ given $\xt_t=y_{t-1}$ since
\begin{align*}
    \yt_t
    &= \xt_t - \tfrac{1}{\beta}\nabla f(\xt_t) \tag{\cref{eq:yallen}} \\
    &= y_{t-1} - \tfrac{1}{\beta}\nabla f(y_{t-1}) \tag{from $(1)$} \\
    &= x_t. \tag{\cref{eq:xstep}}
\end{align*}
We finish by showing that $\xt_{t+1}=y_t$,
\begin{align*}
    \xt_{t+1}
    &= \yt_t + \tau_t (\zt_t-\yt_t) \tag{\cref{eq:xallen}} \\
    &= \yt_t + \frac{2}{t+2} (\zt_t-\yt_t) \tag{$\tau_t=\frac{2}{t+2}$} \\
    &= \yt_t + \gamma_t \frac{2}{t-1} (\zt_t-\yt_t) \tag{$\gamma_t=\frac{t-1}{t+2}$} \\
    &= \yt_t + \gamma_t \brk3{\frac{2}{t-1} \zt_t-\frac{2}{t-1} \yt_t} \\
    &= \yt_t + \gamma_t \brk3{\frac{2}{t-1} \zt_t+\yt_t-\frac{t+1}{t-1} \yt_t} \\
    &= \yt_t + \gamma_t \brk3{\yt_t + \frac{2}{t-1} \zt_{t-1}-\frac{t+1}{t-1} \xt_t + \underbrace{\brk3{\frac{t+1}{\beta(t-1)}-\frac{2\alpha_t}{t-1}}}_{=0} \nabla f(\xt_t)} \tag{\cref{eq:yallen},\cref{eq:zallen}}
    \\
    &= \yt_t + \gamma_t \brk3{\yt_t + \frac{2}{t-1} \zt_{t-1}-\frac{t+1}{t-1} ((1-\tau_{t-1})\yt_{t-1}+\tau_{t-1}\zt_{t-1})} \tag{\cref{eq:xallen}} \\
    &= \yt_t + \gamma_t \brk3{\yt_t -\underbrace{\frac{(t+1)(1-\tau_{t-1})}{t-1}}_{=1} \yt_{t-1} + \underbrace{\brk*{\frac{2}{t-1}-\frac{(t+1)\tau_{t-1}}{t-1}}}_{=0} \zt_{t-1}} \\
    &= \yt_t + \gamma_t (\yt_t - \yt_{t-1}) \\
    &= x_t + \gamma_t (x_t - x_{t-1}) \tag{induction assumption} \\
    &= y_t. \tag{\cref{eq:ystep}}
\end{align*}
\end{proof}

\subsection{Variant II}

\newcommand{\xag}[0]{\tilde{x}^{\text{ag}}}
\newcommand{\xmd}[0]{\tilde{x}^{\text{md}}}
\newcommand{\btild}[0]{\tilde{\beta}}
\newcommand{\gtild}[0]{\tilde{\gamma}}

Next, we consider a second variant of \agd that appears in, e.g., \citet{lan2012optimal}. 
Starting at $\xag_1 = \xt_1$, this version takes the form
\begin{align}
    \xmd_{t} &= \btild_t^{-1} \xt_t + (1-\btild_t^{-1}) \xag_t; \label{eq:xmdcom} \\
    \xt_{t+1} &= \xt_{t} - \gtild_t\nabla f(\xmd_{t+1}); \label{eq:xcom} \\
    \xag_{t+1} &= \btild_t^{-1} \xt_{t+1} + (1-\btild_t^{-1}) \xag_t, \label{eq:xagcom}
\end{align}
where $\btild_t = (t+1)/2$ and $\gtild_t = (t+1)/4\beta$.

The claim below establishes the equivalence between this variant and the version of \agd given in \cref{eq:agd-xt,eq:agd-yt} with step size $\eta=\ifrac{1}{2\beta}$. 

\begin{claim}
For all $t \in [T]$, $\xag_t\overset{(1)}{=}x_{t-1}$ and $\xmd_t \overset{(2)}{=} y_{t-1}$.
\end{claim}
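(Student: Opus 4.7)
The plan is to proceed by induction on $t$, simultaneously establishing $\xag_t = x_{t-1}$ and $\xmd_t = y_{t-1}$, mirroring the proof of the previous variant. The base case $t=1$ is essentially immediate: the initial condition $\xag_1 = \xt_1$ combined with $\btild_1 = 1$ collapses \cref{eq:xmdcom} to $\xmd_1 = \xt_1 = x_0 = y_0$, so both equalities hold by initialization.

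For the inductive step, the main work is an algebraic verification that the Lan-style coefficients $\btild_t = (t+1)/2$ and $\gtild_t = (t+1)/(4\beta)$ encode exactly the NAG coefficients $\gamma_t = (t-1)/(t+2)$ and $\eta = 1/(2\beta)$ under the correspondence $\xag_t \leftrightarrow x_{t-1}$, $\xmd_t \leftrightarrow y_{t-1}$. My first step would be to invert \cref{eq:xmdcom} to express $\xt_t = \btild_t \xmd_t - (\btild_t-1) \xag_t = \btild_t y_{t-1} - (\btild_t-1)x_{t-1}$ via the inductive hypothesis. Substituting this into \cref{eq:xcom,eq:xagcom}, the $x_{t-1}$ terms cancel cleanly, leaving $\xag_{t+1} = y_{t-1} - \btild_t^{-1}\gtild_t \nabla f(y_{t-1})$. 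The key numerical check $\btild_t^{-1}\gtild_t = \tfrac{2}{t+1}\cdot\tfrac{t+1}{4\beta} = \tfrac{1}{2\beta}$ then identifies this with $x_t$ via \cref{eq:xstep} at step size $\eta = 1/(2\beta)$.

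For the second equality $\xmd_{t+1} = y_t$, rather than substituting the gradient-laden expression for $\xt_{t+1}$ directly, the cleaner route is to use the freshly-proved $\xag_{t+1} = x_t$ to invert \cref{eq:xagcom}, which yields $\xt_{t+1} = \btild_t(x_t - (1-\btild_t^{-1})x_{t-1}) = \tfrac{t+1}{2} x_t - \tfrac{t-1}{2} x_{t-1}$. Then \cref{eq:xmdcom} at index $t+1$ with $\btild_{t+1}^{-1} = 2/(t+2)$ gives $\xmd_{t+1} = \tfrac{2t+1}{t+2} x_t - \tfrac{t-1}{t+2} x_{t-1}$, and this coincides with $x_t + \gamma_t(x_t - x_{t-1}) = y_t$ by the definition of $\gamma_t$ and \cref{eq:ystep}.

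The proof is entirely elementary; the only real obstacle is careful index bookkeeping---being attentive to whether $\btild$ and $\gtild$ carry index $t$ or $t+1$ at each appearance, and noting that $\xmd_{t+1}$ should match $\gamma_t$ (not $\gamma_{t+1}$) since the NAG $y_t$-update uses momentum coefficient $\gamma_t$. No deeper technique is needed beyond the two numerical identities $\btild_t^{-1}\gtild_t = 1/(2\beta)$ and the coefficient-matching in the last display.
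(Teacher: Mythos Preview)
Your proposal is correct and follows the same inductive strategy as the paper's proof: both establish the $\xag$-equality first via the identity $\btild_t^{-1}\gtild_t = 1/(2\beta)$, and then the $\xmd$-equality by eliminating $\xt$ through \cref{eq:xagcom}. Your presentation of the second step---solving explicitly for $\xt_{t+1}$ from \cref{eq:xagcom} using the freshly-proved $\xag_{t+1}=x_t$ and then substituting into \cref{eq:xmdcom}---is a bit more streamlined than the paper's longer chain of coefficient manipulations, but the underlying computation is the same.
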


\begin{proof}
We will prove by induction.
For $t=1$, the first equality follows from the initialization,
\begin{align*}
    \xag_1
    &= \xt_1
    = x_0.
\end{align*}
The second equality follows from
\begin{align*}
    \xmd_1
    &= \btild_1^{-1}\xt_1+(1-\btild_1^{-1})\xag_1 \tag{\cref{eq:xmdcom}} \\
    &= \btild_1^{-1}\xt_1+(1-\btild_1^{-1})\xt_1 \tag{initialization} \\
    &= \xt_1 \\
    &= x_0 \tag{initialization} \\
    &= y_0. \tag{initialization}
\end{align*}
For $t>1$, we first show that $\xag_t=x_{t-1}$,
\begin{align*}
    \xag_t
    &= \btild_{t-1}^{-1}\xt_{t}+(1-\btild_{t-1}^{-1})\xag_{t-1} \tag{\cref{eq:xagcom}} \\
    &= \btild_{t-1}^{-1}(\xt_{t-1}-\gtild_{t-1}\nabla f(\xmd_{t-1}))+(1-\btild_{t-1}^{-1})\xag_{t-1} \tag{\cref{eq:xcom}} \\
    &= \xmd_{t-1} -
    \btild^{-1}_{t-1}\gtild_{t-1}\nabla f(\xmd_{t-1}) \tag{\cref{eq:xmdcom}} \\
    &= \xmd_{t-1} -
    \eta\nabla f(\xmd_{t-1}) \tag{$\eta=\frac{1}{2\beta}=\btild_{t-1}^{-1}\gtild_{t-1}$} \\
    &= y_{t-2} -
    \eta\nabla f(y_{t-2}) \tag{induction assumption} \\
    &= x_{t-1}. \tag{\cref{eq:xstep}} 
\end{align*}
We conclude by showing that $\xmd_t=y_{t-1}$ using the induction assumption and the equality $\xag_t=x_{t-1}$ we showed above,
\begin{align*}
    \xmd_{t}
    &= \btild_t^{-1}\xt_t + (1-\btild_t^{-1}) \xag_t \tag{\cref{eq:xmdcom}} \\
    &= \frac{\btild_{t-1}}{\btild_t}\btild_{t-1}^{-1}\xt_t + (1-\btild_t^{-1}) \xag_t \\
    &= \frac{t}{t+1}\btild_{t-1}^{-1}\xt_t + (1-\btild_t^{-1}) \xag_t \tag{$\btild_t=\frac{t+1}{2}$} \\
    &= \frac{t}{t+1}(\xag_t+(1-\btild_{t-1}^{-1})\xag_{t-1}) + (1-\btild_t^{-1}) \xag_t \tag{\cref{eq:xagcom}} \\
    &= \xag_t \brk*{\frac{t}{t+1}+1-\btild_t^{-1}}+\frac{t(1-\btild_{t-1}^{-1})}{t+1}\xag_{t-1} \\
    &= \xag_t \brk*{\frac{t}{t+1}+1-\frac{2}{t+1}}+\frac{t-2}{t+1}\xag_{t-1} \tag{$\btild_t=\frac{t+1}{2}$} \\
    &= \xag_t + \gamma_{t-1}(\xag_t-\xag_{t-1}) \tag{$\gamma_{t-1}=\frac{t-2}{t+1}$} \\
    &= x_{t-1} + \gamma_{t-1}(x_{t-1}-x_{t-2}) \tag{induction assumption} \\
    &= y_{t-1}. \tag{\cref{eq:ystep}}
\end{align*}
\end{proof}

\section{Initialization Stability Upper Bounds} 
\label{sec:initi_stab}

In this section we prove initialization bounds for GD in the convex and smooth setting and NAG in the setting of a quadratic objective.

\subsection{Gradient Descent, Smooth Objectives}
\label{sec:upper-gd}

In this section we consider fixed step-size GD in the convex and $\beta$-smooth setting. The update rule of this version of GD is $x_{t+1}=x_t-\eta \nabla f(x_t)$, where $0 < \eta \leq \frac{1}{\beta}$.

\begin{claim} \label{clm:gd_init}
    Let $f$ be a convex, $\beta$-smooth function. Then for all $x_0 \in \R^d$, $\epsilon>0$, and $T \geq 1$,
\begin{align*}
    \init{\text{GD}_T}(x_0,\epsilon) 
    \leq 
    \epsilon.
\end{align*}
\end{claim}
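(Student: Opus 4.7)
The plan is to establish the classical \emph{non-expansivity} of the gradient descent update operator $\Phi(x) \eqdef x - \eta \nabla f(x)$ under the stated hypotheses, and then iterate. Concretely, I will show that for any two points $x, y \in \R^d$, $\norm{\Phi(x) - \Phi(y)} \leq \norm{x - y}$, after which the claim follows immediately by induction on the step index: writing $x_{t+1} = \Phi(x_t)$ and $\xt_{t+1} = \Phi(\xt_t)$, one obtains $\norm{x_{T} - \xt_T} \leq \norm{x_{T-1} - \xt_{T-1}} \leq \cdots \leq \norm{x_0 - \xt_0} \leq \epsilon$, and taking the supremum over admissible $\xt_0$ yields the stated bound on $\init{\text{GD}_T}(x_0,\epsilon)$.

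The heart of the argument is the non-expansivity estimate, which I would derive from \emph{co-coercivity} of the gradient: for any convex, $\beta$-smooth $f$ and any $x, y \in \R^d$,
\begin{align*}
    \inner*{\nabla f(x) - \nabla f(y), x - y} \geq \tfrac{1}{\beta} \norm{\nabla f(x) - \nabla f(y)}^2.
\end{align*}
Expanding the squared Euclidean norm,
\begin{align*}
    \norm{\Phi(x) - \Phi(y)}^2
    &= \norm{x-y}^2 - 2\eta\inner{\nabla f(x) - \nabla f(y), x-y} + \eta^2 \norm{\nabla f(x) - \nabla f(y)}^2,
\end{align*}
and substituting the co-coercivity bound gives
\begin{align*}
    \norm{\Phi(x) - \Phi(y)}^2
    &\leq \norm{x-y}^2 - \eta\brk!{\tfrac{2}{\beta} - \eta}\norm{\nabla f(x) - \nabla f(y)}^2
    \leq \norm{x-y}^2,
\end{align*}
since $\eta \leq 1/\beta \leq 2/\beta$ makes the coefficient non-negative. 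This is the single estimate doing all the work.

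There is no real obstacle here, the only thing to be careful about is invoking co-coercivity cleanly (it is a standard consequence of convexity plus $\beta$-smoothness, e.g.\ via the Baillon--Haddad theorem, or equivalently by noting that $\tfrac{\beta}{2}\norm{x}^2 - f(x)$ is convex and applying a standard argument). If a self-contained proof is preferred, I would derive co-coercivity directly from the smoothness inequality $f(y) \leq f(x) + \inner{\nabla f(x), y-x} + \tfrac{\beta}{2}\norm{y-x}^2$ applied at the minimizer $y^\star = x - \tfrac{1}{\beta}\nabla f(x)$ of the upper quadratic, giving $f(y^\star) \leq f(x) - \tfrac{1}{2\beta}\norm{\nabla f(x)}^2$, and then symmetrizing. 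Everything else is one-line bookkeeping, so the proof will be short.
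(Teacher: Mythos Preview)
Your proposal is correct and follows essentially the same approach as the paper: the paper also invokes co-coercivity of the gradient (stated as a lemma), derives the non-expansiveness of the gradient-descent map from it (stated as a corollary, valid for $\eta \leq 2/\beta$), and then iterates to conclude $\norm{x_t - \xt_t} \leq \norm{x_0 - \xt_0} \leq \epsilon$. The only cosmetic difference is that the paper cites co-coercivity as a standard fact rather than sketching a derivation.
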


This bound is tight since for $f \eqdef 0$, we have trivially that $\init{\text{GD}_T}(x_0,\epsilon)=\epsilon$ for all $T$.
The proof of the above claim mostly follow arguments of \cite{hardt2016train} and is given here for completeness.
First we state the well-known co-coercivity property of the gradient operator over smooth functions (e.g., \citealp{nesterov2003introductory}).

\begin{lemma} \label{lemma:coer}
Let $f$ be a convex and $\beta$-smooth function on $\R^d$. Then for any $u,v\in\R^d$, we have
\begin{align*}
    (\nabla f(u) - \nabla f(v))^T (u-v) \geq \frac{1}{\beta}\norm{\nabla f(u)-\nabla f(v)}_2^2.
\end{align*}
\end{lemma}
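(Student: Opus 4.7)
The plan is to prove the co-coercivity inequality via the standard two-auxiliary-functions argument. First I would define, for fixed $u, v \in \R^d$,
$$\phi_u(x) \eqdef f(x) - \langle \nabla f(u), x \rangle, \qquad \phi_v(x) \eqdef f(x) - \langle \nabla f(v), x \rangle.$$
Both $\phi_u$ and $\phi_v$ are convex (the subtracted linear term preserves convexity) and $\beta$-smooth (their gradients differ from $\nabla f$ by a constant vector, so are still $\beta$-Lipschitz). Moreover, $\nabla \phi_u(u) = 0$ and $\nabla \phi_v(v) = 0$, so by convexity $u \in \argmin \phi_u$ and $v \in \argmin \phi_v$.

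The key auxiliary fact I would establish is that for any convex $\beta$-smooth $g : \R^d \to \R$ with minimizer $x^\star$ and any $y \in \R^d$,
$$g(y) - g(x^\star) \geq \tfrac{1}{2\beta}\norm{\nabla g(y)}^2.$$
This follows from one application of the descent lemma: by $\beta$-smoothness, plugging $z = y - \tfrac{1}{\beta}\nabla g(y)$ into the quadratic upper bound $g(z) \leq g(y) + \langle \nabla g(y), z - y\rangle + \tfrac{\beta}{2}\norm{z - y}^2$ yields $g(z) \leq g(y) - \tfrac{1}{2\beta}\norm{\nabla g(y)}^2$, and then optimality gives $g(x^\star) \leq g(z)$.

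Next I would apply this inequality to $\phi_u$ at the point $y = v$ and to $\phi_v$ at the point $y = u$, noting that $\nabla \phi_u(v) = \nabla f(v) - \nabla f(u)$ and $\nabla \phi_v(u) = \nabla f(u) - \nabla f(v)$ both have norm $\norm{\nabla f(u) - \nabla f(v)}$. Summing the two inequalities gives
$$\brk!{\phi_u(v) - \phi_u(u)} + \brk!{\phi_v(u) - \phi_v(v)} \geq \tfrac{1}{\beta}\norm{\nabla f(u) - \nabla f(v)}^2.$$
The final step is routine algebraic simplification: expanding the left-hand side, the $f(u)$ and $f(v)$ terms cancel in pairs, and the remaining linear terms combine to exactly $\langle \nabla f(u) - \nabla f(v),\, u - v\rangle$, proving the lemma.

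I do not anticipate any real obstacle; the only delicate point is the auxiliary bound $g(y) - g(x^\star) \geq \tfrac{1}{2\beta}\norm{\nabla g(y)}^2$, which is a one-line consequence of smoothness plus optimality. The rest is bookkeeping.
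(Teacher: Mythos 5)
Your proof is correct. The paper does not actually prove \cref{lemma:coer} — it only states it as a well-known fact with a citation to \citet{nesterov2003introductory} — and your argument is precisely the standard one from that reference: the two auxiliary functions $\phi_u,\phi_v$, the bound $g(y)-g(x^\star)\ge\frac{1}{2\beta}\norm{\nabla g(y)}^2$ obtained by one gradient step inside the descent lemma, and the cancellation upon summing. All steps check out (in particular, convexity is genuinely needed to conclude that $\nabla\phi_u(u)=0$ makes $u$ a global minimizer of $\phi_u$, which you correctly note), so this is a valid and complete filling-in of the omitted proof.
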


Below is a simple contractive property of GD based on this lemma.
\begin{corollary} \label{cor:contractive}
    Let $f$ be a convex and $\beta$-smooth function on $\R^d$. Then for any $u,v\in\R^d$ and $\eta \leq \frac{2}{\beta}$, we have
\begin{align*}
    \norm{ (u-\eta\nabla f(u)) - (v-\eta\nabla f(v))) }_2
    &\leq 
    \norm{u-v}_2
    .
\end{align*}
\end{corollary}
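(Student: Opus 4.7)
The plan is to expand the squared distance, isolate the cross-term, and apply co-coercivity. Concretely, I would first write
\begin{align*}
    \norm{(u-\eta\nabla f(u)) - (v-\eta\nabla f(v))}_2^2
    = \norm{u-v}_2^2 - 2\eta (u-v)^T(\nabla f(u)-\nabla f(v)) + \eta^2 \norm{\nabla f(u)-\nabla f(v)}_2^2,
\end{align*}
which is just expanding $\norm{a-\eta b}_2^2$ with $a = u-v$ and $b = \nabla f(u)-\nabla f(v)$.

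Next I would invoke \cref{lemma:coer} to lower bound the cross-term by $\tfrac{1}{\beta}\norm{\nabla f(u)-\nabla f(v)}_2^2$, so that the right-hand side is at most
\begin{align*}
    \norm{u-v}_2^2 + \eta\brk*{\eta - \tfrac{2}{\beta}}\norm{\nabla f(u)-\nabla f(v)}_2^2.
\end{align*}
Finally, the assumption $\eta \leq 2/\beta$ makes the coefficient of $\norm{\nabla f(u)-\nabla f(v)}_2^2$ non-positive, so the second term may be dropped, yielding $\norm{u-v}_2^2$. Taking square roots gives the claim.

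There is no real obstacle here — the whole argument is a two-line expansion followed by one application of co-coercivity. The only thing to be careful about is the sign bookkeeping on the coefficient $\eta - 2/\beta$, and making sure the lemma is cited in the direction needed (lower bound on the inner product, which becomes an upper bound after the sign flip from $-2\eta$). Once $\cref{cor:contractive}$ is in hand, the subsequent proof of $\cref{clm:gd_init}$ should just iterate this non-expansiveness along the trajectories of GD initialized at $x_0$ and $\tilde{x}_0$ to propagate the initial distance $\norm{x_0-\tilde{x}_0}_2 \leq \epsilon$ through all $T$ steps.
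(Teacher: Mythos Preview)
Your proposal is correct and matches the paper's proof essentially line for line: expand the squared norm, apply \cref{lemma:coer} to the cross term, and use $\eta \leq 2/\beta$ to make the residual term non-positive. Your remark about iterating non-expansiveness for \cref{clm:gd_init} is also exactly what the paper does.
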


\begin{proof}
Write:
\begin{align*}
    &\norm{ (u-\eta\nabla f(u)) - (v-\eta\nabla f(v))) }_2^2 \\
    &\qquad= 
    \norm{u-v}_2^2 + \eta^2\norm{\nabla f(u)-\nabla f(v))}_2^2 - 2\eta(u-v)^T(\nabla f(u)-\nabla f(v))) \\
    &\qquad\leq 
    \norm{u-v}_2^2 + \eta^2\norm{\nabla f(u)-\nabla f(v))}_2^2 - \frac{2\eta}{\beta}\norm{\nabla f(u)-\nabla f(v))}_2^2 \\
    &\qquad\leq 
    \norm{u-v}_2^2
    .\qedhere
\end{align*}
\end{proof}

We can now prove our claim.
\begin{proof}[of \cref{clm:gd_init}]
Let $x_0,\xt_0$ be our starting points such that $\norm{x_0-\xt_0}\leq \epsilon$.
Let $(x_t)_{t=0}^{T-1}$ and $(\xt_t)_{t=0}^{T-1}$ be the iterations of GD over $f$ starting at $x_0$ and $\xt_0$ respectively.
Thus, by \cref{cor:contractive}, 
\begin{align*}
    \norm{x_t-\xt_t}
    = 
    \norm{(x_{t-1}-\eta\nabla f(x_{t-1})) - (\xt_{t-1}-\eta\nabla f(\xt_{t-1}))} 
    \leq \norm{x_{t-1}-\xt_{t-1}}.
\end{align*}
Invoking the same argument recursively,
\begin{align*}
    \norm{x_t-\xt_t}
    &\leq \norm{x_0-\xt_0}
    \leq \epsilon.
\end{align*}
Hence, the initialization stability of GD is at most $\epsilon$.
\end{proof}

\newcommand{\proj}[1]{\Pi_{\Omega}\brk[s]*{#1}}

\subsection{Gradient Descent, Non-smooth Objectives}
\label{sec:upper-gd-nonsmooth}

In this section we consider GD with a constant step size in the convex and non-smooth setting. The update rule of this version of GD is
\begin{align*}
    x_{t+1}=\proj{x_t-\eta \nabla f(x_t)},
\end{align*}
where $\proj{\cdot}$ is the Euclidean projection onto a compact convex set $\Omega \subseteq \R^d$. 
Often, the final output of the algorithm is the average of the iterates.
The claim below holds for both final and average versions.

\begin{claim}
Let $f$ be a convex, $G$-Lipschitz function. Then for GD with $T$ steps, for all $x_0 \in \R^d$ and $\epsilon>0$,
\begin{align*}
    \init{\text{GD}_T}(x_0,\epsilon) \leq \epsilon + 2G \eta \sqrt{T}.
\end{align*}
\end{claim}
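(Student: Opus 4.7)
The plan is to mimic the smooth-case argument from the previous subsection, but replace the co-coercivity bound (\cref{lemma:coer}) — which fails in the non-smooth setting — by the weaker monotonicity of the subdifferential operator together with the $G$-Lipschitz bound on subgradients. The projection step is benign because Euclidean projection onto a convex set is $1$-Lipschitz.

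Concretely, I would let $(x_t)_{t=0}^T$ and $(\tilde{x}_t)_{t=0}^T$ denote the two trajectories of projected GD on $f$ from $x_0,\tilde{x}_0$ with $\|x_0-\tilde{x}_0\|\leq\epsilon$, and establish the one-step recurrence
\begin{align*}
    \|x_{t+1}-\tilde{x}_{t+1}\|^2 \;\leq\; \|x_t-\tilde{x}_t\|^2 + 4\eta^2 G^2.
\end{align*}
The key step is to expand $\|(x_t-\eta\nabla f(x_t))-(\tilde{x}_t-\eta\nabla f(\tilde{x}_t))\|^2$ and drop the cross term using monotonicity of the (sub)gradient of a convex $f$, namely $(\nabla f(u)-\nabla f(v))^\top(u-v)\geq 0$; then bound the remaining $\eta^2\|\nabla f(x_t)-\nabla f(\tilde{x}_t)\|^2$ term by $4\eta^2 G^2$ using the $G$-Lipschitz assumption. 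Applying non-expansiveness of $\Pi_\Omega$ preserves the bound after the projection.

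Iterating the recurrence $T$ times and using $\|x_0-\tilde{x}_0\|^2\leq\epsilon^2$ gives $\|x_T-\tilde{x}_T\|^2\leq \epsilon^2+4\eta^2 G^2 T$, and then $\sqrt{a+b}\leq\sqrt{a}+\sqrt{b}$ yields $\|x_T-\tilde{x}_T\|\leq \epsilon+2G\eta\sqrt{T}$, as required. For the averaged-iterate output, a simple triangle inequality gives $\|\tfrac{1}{T}\sum_t x_t - \tfrac{1}{T}\sum_t \tilde{x}_t\| \leq \max_{1\le t\le T}\|x_t-\tilde{x}_t\|$, so the same bound applies.

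I do not expect a real obstacle here; the main subtlety is simply that, unlike in \cref{sec:upper-gd}, we only get a $\sqrt{T}$-type growth rather than pure contraction, which is tight in light of the matching lower bound in \cref{tbl:comp_methods_v1} for non-smooth GD. If $f$ is only subdifferentiable and $\nabla f$ denotes a selected subgradient, the argument is identical since monotonicity and the $G$-bound hold for arbitrary subgradients of a convex $G$-Lipschitz function.
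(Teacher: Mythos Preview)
Your proposal is correct and follows essentially the same argument as the paper: non-expansiveness of the projection, expansion of the squared distance, dropping the cross term via monotonicity of the (sub)gradient, bounding the gradient-difference term by $4\eta^2 G^2$ via the $G$-Lipschitz assumption, iterating the recurrence, and finishing with $\sqrt{a+b}\le\sqrt a+\sqrt b$ and a triangle inequality for the averaged output.
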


The proof is similar to the one of \citet{bassily2020stability}; we give it here for completeness.

\begin{proof}
Let $\xt_0 \in \R^d$ such that $\norm{\xt_0-x_0}\leq\epsilon$.
Let $\delta_t \eqdef \norm{x_t - \xt_t}$.
Then,
\begin{align*}
    \delta_{t+1}^2
    &= \norm*{\proj{x_t-\eta\nabla f(x_t)}-\proj{\xt_t-\eta\nabla f(\xt_t)}}^2 \\
    &\leq \norm*{\brk*{x_t-\eta\nabla f(x_t)}-\brk*{\xt_t-\eta\nabla f(\xt_t)}}^2 \\
    &= \delta_t^2 + \eta^2\norm*{\nabla f(x_t)-\nabla f(\xt_t)}^2 -2 \eta \brk[a]*{\nabla f(x_t)-\nabla f(\xt_t),x_t-\xt_t} \\
    &\leq \delta_t^2 + \eta^2\norm*{\nabla f(x_t)-\nabla f(\xt_t)}^2,
\end{align*}
where the last inequality follows from convexity,
\begin{align*}
    f(x_t)
    &\geq f(\xt_t) + \brk[a]*{\nabla f(\xt_t),x_t-\xt_t} \\
    &\geq f(x_t) + \brk[a]*{\nabla f(x_t),\xt_t-x_t} + \brk[a]*{\nabla f(\xt_t),x_t-\xt_t} \\
    \implies &\brk[a]*{\nabla f(x_t)-\nabla f(\xt_t),x_t-\xt_t} \geq 0.
\end{align*}
From the Lipschitz condition,
\begin{align*}
    \norm*{\nabla f(x_t)-\nabla f(\xt_t)} \leq \norm*{\nabla f(x_t)}+\norm*{\nabla f(\xt_t)} \leq 2G,
\end{align*}
hence
\begin{align*}
    \delta_{t+1}^2 \leq \delta_{t}^2 + 4\eta^2 G^2.
\end{align*}
Invoking the argument above recursively,
\begin{align*}
    \delta_{t}
    &\leq \sqrt{\delta_0^2 + 4\eta^2 G^2 t} \\
    &\leq \delta_0 + 2 \eta G \sqrt{t} \\
    &\leq \epsilon + 2 \eta G \sqrt{t}.
\end{align*}
Since this bound holds for all $t=0,\dots,T-1$, the bound also holds after averaging,
\begin{align*}
    \norm*{\frac{1}{T} \sum_{t=0}^{T-1} x_t - \frac{1}{T} \sum_{t=0}^{T-1} \xt_t } 
    \leq 
    \epsilon + 2 \eta G \sqrt{T}
    .
\end{align*}
Thus we proved our initialization stability bound.
\end{proof}

We note that this bound is tight up to a constant factor: this can be shown using the same type of construction as \citet{bassily2020stability} use for lower bounding the uniform stability of GD in the non-smooth case.
The idea is to use initial points $x_0=0$, $\xt_0=(\ifrac{\epsilon}{\sqrt{d}}) \cdot (1,\dots,1)$, with the following objective function over the unit ball:
\begin{align*}
    f(x) = G \max\brk[c]{0,x_1-c,\dots,x_d-c},
\end{align*}
for $c < \ifrac{\epsilon}{\sqrt{d}}$.
The first trajectory will stay put as $x_1=0$ is a minimizer;
for the second trajectory, the arguments as in \citep{bassily2020stability} show that at iteration $i$ a valid sub-gradient will be $G e_i$ ($e_i$ is the $i$'th standard basis element), so that $\xt_t=\xt_0-\sum_{i=1}^{t} G \eta e_i$ and we will have
\begin{align*}
    \norm{x_t-\xt_t} = \norm{\xt_t} = \Omega(G \eta \sqrt{t}).
\end{align*}

\subsection{Accelerated Gradient Method, Convex Objectives} 
\label{sec:agd-upper}

In this section we provide an exponential upper bound of initialization stability for \agd in the convex and smooth setting.

\begin{claim} \label{clm:agd_quad_init_gen}
    Let $f$ be convex and $\beta$-smooth function. Then for \agd with step size $\eta\leq\ifrac{1}{\beta}$ and $T$ steps, for all $x_0 \in \R^d$ and $\epsilon>0$,
\begin{align*}
    \init{\agd_T}(x_0,\epsilon) \leq \epsilon+ \eta\beta\epsilon 3^{T-1}.
\end{align*}
\end{claim}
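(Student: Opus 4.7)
The plan is to track the size of the perturbation in the $x$-iterates together with its \emph{consecutive difference}, which is the quantity that the momentum term amplifies. Concretely, let $\delta^x_t = x_t - \tilde x_t$ and $\delta^y_t = y_t - \tilde y_t$, and define
\begin{align*}
a_t \;\eqdef\; \norm{\delta^x_t}, \qquad b_t \;\eqdef\; \norm{\delta^x_t - \delta^x_{t-1}} \quad (t \geq 1).
\end{align*}
From $y_t = x_t + \gamma_t(x_t-x_{t-1})$ I get $\delta^y_t = \delta^x_t + \gamma_t(\delta^x_t - \delta^x_{t-1})$, so $\norm{\delta^y_t} \leq a_t + b_t$ using $\gamma_t \in [0,1)$. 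The $x$-update $\delta^x_{t+1} = \delta^y_t - \eta(\nabla f(y_t) - \nabla f(\tilde y_t))$ is a GD step from $y_t,\tilde y_t$, so \cref{cor:contractive} yields $a_{t+1} \leq \norm{\delta^y_t} \leq a_t + b_t$. Expanding $\delta^x_{t+1}-\delta^x_t = \gamma_t(\delta^x_t-\delta^x_{t-1}) - \eta(\nabla f(y_t)-\nabla f(\tilde y_t))$ and invoking $\beta$-smoothness in the last term gives $b_{t+1} \leq b_t + \eta\beta\norm{\delta^y_t} \leq (1+\eta\beta)b_t + \eta\beta a_t$.

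The base cases at $t = 1$ are immediate: since $y_0 = x_0$, co-coercivity gives $a_1 \leq a_0 \leq \epsilon$, while $\delta^x_1 - \delta^x_0 = -\eta(\nabla f(x_0)-\nabla f(\tilde x_0))$ yields $b_1 \leq \eta\beta\epsilon$ by smoothness. I will then prove by induction on $t \geq 1$ the joint bound
\begin{align*}
a_t \;\leq\; \epsilon + \eta\beta\epsilon\cdot 3^{t-1}, \qquad b_t \;\leq\; 2\eta\beta\epsilon\cdot 3^{t-1}.
\end{align*}
The step for $a_{t+1}$ is direct: $a_t + b_t \leq \epsilon + 3\eta\beta\epsilon\cdot 3^{t-1} = \epsilon + \eta\beta\epsilon\cdot 3^t$. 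For $b_{t+1}$, using $\eta\beta \leq 1$ to absorb the $(\eta\beta)^2\epsilon\cdot 3^{t-1}$ cross term and the freestanding $\eta\beta\epsilon$ into one more copy of $\eta\beta\epsilon\cdot 3^{t-1}$ (valid for $t \geq 1$), I obtain $b_{t+1} \leq 6\eta\beta\epsilon\cdot 3^{t-1} = 2\eta\beta\epsilon\cdot 3^t$. Applied at $t = T$ this gives $\norm{\delta^x_T} \leq \epsilon + \eta\beta\epsilon\cdot 3^{T-1}$, which is the claim.

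The only nonroutine point is the decision to carry along $b_t$ as a separate quantity. A direct recursion for $a_t$ alone (say $a_{t+1} \leq 2a_t + a_{t-1}$) would yield an exponential bound without the $\eta\beta$ prefactor, which would be vacuous in the regime $\eta\beta \ll 1$. Tracking the consecutive-difference sequence $b_t$, which starts at size $\eta\beta\epsilon$ rather than $\epsilon$, is exactly what localizes the exponential blow-up inside the $\eta\beta$ factor and isolates the contribution of the momentum from the baseline nonexpansiveness of the underlying GD step.
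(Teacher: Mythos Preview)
Your proof is correct and achieves the stated bound, but it takes a genuinely different route from the paper. The paper tracks only $\norm{\dlty{t}}$ and, at each step, \emph{unrolls} the momentum recursion all the way back to $t=1$ via $\norm{\dltm{t+1}} \leq \eta\beta\sum_{k=1}^{t}\norm{\dlty{k}}$; this yields a non-local recursion $\norm{\dlty{t+1}} \leq \norm{\dlty{t}} + \eta\beta\sum_{k=1}^{t}\norm{\dlty{k}}$ on which it does the induction. You instead keep a \emph{local} two-term system by carrying the auxiliary quantity $b_t = \norm{\dltx{t}-\dltx{t-1}}$ (essentially the momentum difference up to the factor $\gamma_t$), obtaining the coupled recursion $a_{t+1}\leq a_t+b_t$, $b_{t+1}\leq (1+\eta\beta)b_t+\eta\beta a_t$. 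Your approach is arguably cleaner in that it avoids the running sum and makes explicit why the exponential blow-up sits behind an $\eta\beta$ factor (it is seeded by $b_1\leq\eta\beta\epsilon$), exactly the point you highlight in your last paragraph. The paper's approach, on the other hand, needs only a single inductive hypothesis. One minor remark: in your $b_{t+1}$ step the total $(\eta\beta)^2$ contribution is $3(\eta\beta)^2\epsilon\cdot 3^{t-1}$ (not a single cross term), but since $\eta\beta\leq 1$ this is still at most $3\eta\beta\epsilon\cdot 3^{t-1}$ and the bound $b_{t+1}\leq 6\eta\beta\epsilon\cdot 3^{t-1}$ goes through with room to spare.
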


\begin{proof}
Let $x_0,\xt_0$ be our starting points s.t. $\norm{x_0-\xt_0}\leq \epsilon$.
Let us consider two runs of the method initialized at $x_0, \xt_0$ respectively:
\begin{align*}
    (x_t,y_t) &= \agd(f, x_0, t),
    \qquad
    (\xt_t,\yt_t) = \agd(f, \xt_0, t),
    \qquad\qquad
    \forall ~ t \geq 0,
\end{align*}
We will show the bound by first proving that $\norm{\dlty{t}} \leq \epsilon + \eta\beta\epsilon 3^{t}$ by induction.
For $t=0$ the claim is immediate. Assuming the claim is correct for $k=0,\dots,t$, we will prove for $t+1$.
\begin{align*}
    \norm{\dlty{t+1}}
    &= \norm{\dlty{t}-\eta \dltf{t}+\dltm{t+1}} \tag{\cref{eq:dystep}} \\
    &\leq \norm{\dlty{t}-\eta \dltf{t}} + \norm{\dltm{t+1}} \\
    &\leq \norm{\dlty{t}} + \norm{\dltm{t+1}} \tag{\cref{cor:contractive}} \\
    &\leq \norm{\dlty{t}} + \norm{\dltm{1}}+\eta\sum_{k=1}^{t}\norm{\dltf{k}} \tag{\cref{eq:dmstep} recursively with $\gamma_k \leq 1$} \\
    &= \norm{\dlty{t}} + \eta\sum_{k=1}^{t}\norm{\dltf{k}} \tag{$\dltm{1}=0$ since $\gamma_1=0$} \\
    &\leq \norm{\dlty{t}} + \eta\beta\sum_{k=1}^{t}\norm{\dlty{k}} \tag{smoothness} \\
    &\leq \epsilon+\eta\beta\epsilon 3^{t} + \eta\beta\sum_{k=1}^{t}(\epsilon+\eta\beta\epsilon 3^{k}) \tag{induction assumption} \\
    &\leq \epsilon+\eta\beta\epsilon 3^{t} + \eta\beta\sum_{k=1}^{t}(\epsilon+\epsilon 3^{k}) \tag{$\eta\leq\ifrac{1}{\beta}$} \\
    &= \epsilon+\eta\beta\epsilon 3^{t} + \eta\beta\epsilon\brk*{t+3\frac{3^{t}-1}{2}}
    \\
    &\leq \epsilon+ \eta\beta\epsilon\brk*{t+3^{t}+3 \frac{3^{t}}{2}} \\
    &\leq \epsilon+ \eta\beta\epsilon\brk*{\frac{3^{t}}{2}+3^{t}+3 \frac{3^{t}}{2}} \tag{$t \geq 0 \implies 3^{t}\geq 2t$} \\
    &= \epsilon+ \eta\beta\epsilon 3^{t+1}.
\end{align*}
We finish with
\begin{align*}
    \norm{\dltx{T}}
    &= \norm{\dlty{T-1}-\eta\dltf{T-1}} \tag{\cref{eq:dxstep}} \\
    &\leq \norm{\dlty{T-1}} \tag{\cref{cor:contractive}} \\
    &\leq \epsilon+ \eta\beta\epsilon 3^{T-1}.
    \qedhere
\end{align*}
\end{proof}

\subsection{Accelerated Gradient Method, Quadratic Objectives} 
\label{sec:agd_quad}

The argument we give below is similar to the one presented by \citet{chen2018stability} for bounding the uniform stability of \agd for quadratic objectives and relies on some of their technical results, which we state and prove here for completeness.
The following claim bound the initialization stability of \agd for a quadratic objective.
\begin{claim} \label{clm:agd_quad_init}
    Let $f$ be a quadratic function with a positive semi-definite Hessian. Then for \agd with $T$ steps, for all $x_0 \in \R^d$ and $\epsilon>0$,
\begin{align*}
    \init{\agd_T}(x_0,\epsilon) \leq 4T\epsilon.
\end{align*}
\end{claim}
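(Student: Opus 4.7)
The plan is to reduce the claim to the scalar matrix-product bound sketched in the introduction, and then bound that product via a Schur decomposition of each $2\times 2$ block.

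\textbf{Step 1 (reduction to a scalar recursion).} Consider two \agd runs $(x_t,y_t)=\agd(f,x_0,t)$ and $(\xt_t,\yt_t)=\agd(f,\xt_0,t)$ with $\norm{x_0-\xt_0}\leq\epsilon$. Since $f$ is quadratic with Hessian $H\succeq 0$, we have $\nabla f(u)-\nabla f(v)=H(u-v)$, and the update rule together with the definition of \dltx{t} gives exactly
\begin{align*}
    \dltx{t+1}=(I-\eta H)\brk!{(1+\gamma_t)\dltx{t}-\gamma_t \dltx{t-1}},
\end{align*}
as derived in the introduction. Set $A=I-\eta H$; since $0\preceq H\preceq\beta I$ and $\eta\leq 1/\beta$, we have $0\preceq A\preceq I$. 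Diagonalize $A=U\Lambda U^T$ and decompose $\dltx{t}$ along the eigenvectors of $A$. Each coordinate evolves independently as $u_{t+1}=a\brk!{(1+\gamma_t)u_t-\gamma_t u_{t-1}}$ for the corresponding eigenvalue $a\in[0,1]$, with initial conditions $u_0$ equal to the coordinate of $\dltx{0}=x_0-\xt_0$ and $u_1=au_0$ (which follows from \cref{eq:dxstep} applied to $\dltx{1}=\dltx{0}-\eta H\dltx{0}$).

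\textbf{Step 2 (the main technical step: bounding the scalar matrix product).} Rewrite the scalar recursion as $\binom{u_{t+1}}{u_t}=\prod_{k=1}^{t}N_k(a)\binom{u_1}{u_0}$, where
\begin{align*}
    N_k(a)=\begin{pmatrix}(1+\gamma_k)a & -\gamma_k a\\ 1 & 0\end{pmatrix}.
\end{align*}
I would prove that $\|\prod_{k=1}^{t}N_k(a)\|\leq 4(t+1)$ uniformly in $a\in[0,1]$ and in the Nesterov schedule $\gamma_k=(k-1)/(k+2)\in[0,1)$. The key observations are: (i) the determinant of each $N_k(a)$ equals $\gamma_k a\leq 1$, so the product never contracts/expands multiplicatively; and (ii) writing each $N_k(a)=Q_k T_k Q_k^{*}$ with $T_k$ upper triangular (its diagonal being the roots of $\lambda^2-(1+\gamma_k)a\lambda+\gamma_k a$), the diagonal entries all lie in the closed unit disk, so the diagonal part of the telescoped product is bounded. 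The off-diagonal (nilpotent) part contributes only an additive increment at each step, yielding the linear-in-$t$ bound after accumulation. Equivalently, one may follow \citet{chen2018stability} and identify the scalar iterates $u_t$ with values of a Chebyshev-like polynomial in $a$ at a suitable argument, using $|P_t(a)|=O(t)$ for $a\in[0,1]$.

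\textbf{Step 3 (conclusion).} Because the scalar bound holds uniformly over eigenvalues of $A$, and the decomposition in Step 1 is orthogonal, we get $\|\dltx{T}\|\leq 4T\norm{\dltx{0}}\leq 4T\epsilon$, which yields the claim.

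The main obstacle is Step 2: at $a=1$ with $\gamma_k\to 1$ the matrices $N_k(a)$ degenerate toward the Jordan block $\bigl(\begin{smallmatrix}2 & -1\\ 1 & 0\end{smallmatrix}\bigr)$ whose only eigenvalue is $1$, so one must exploit the precise schedule $\gamma_k=(k-1)/(k+2)$ (in particular, the fact that $1-\gamma_k=\Theta(1/k)$ summed telescopically controls the off-diagonal growth in the Schur form) to confirm that the accumulated operator norm grows only linearly in $t$, rather than exponentially as in the non-quadratic case analyzed in \cref{sec:initialization}.
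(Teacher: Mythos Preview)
Your Step~1 reduction to a scalar recursion via the eigendecomposition of $A=I-\eta H$ is correct and matches the paper's approach (the paper phrases it as a reduction from $2d\times 2d$ block matrices to $2\times 2$ real matrices, but the content is the same).

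The genuine gap is in Step~2, and it is not merely that the argument is incomplete---the specific mechanism you propose does not work. You want to write each $N_k(a)=Q_kT_kQ_k^{*}$ in Schur form and then control the product, but the unitaries $Q_k$ vary with $k$ (since $\gamma_k$ does), so $\prod_k N_k$ is not of the form $Q\bigl(\prod_k T_k\bigr)Q^{*}$ for any fixed $Q$, and there is no ``diagonal part of the telescoped product'' to speak of. Your observation that each $N_k$ has spectral radius at most $1$ is correct but, by itself, is compatible with exponential growth of the product norm---indeed this is exactly the phenomenon the paper exploits in the \emph{non-quadratic} lower bound of \cref{sec:initialization}, where matrices of precisely this form (spectral radius $\leq 1$, varying parameters) are multiplied and the norm blows up as $1.15^t$.

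The paper's argument for the scalar bound (\cref{lemma:quad}) is entirely different and, notably, does not rely on the specific Nesterov schedule at all: the bound $\bigl\|\prod_{k=1}^{t}N_k(a)\bigr\|\leq 2(t+1)$ holds for \emph{arbitrary} $\gamma_1,\ldots,\gamma_t\in[-1,1]$, so your belief that one must exploit $1-\gamma_k=\Theta(1/k)$ is a misconception. The key trick is that, for fixed unit vectors $x,y$, the scalar $x^{T}\bigl(\prod_k N_k(a)\bigr)y$ is \emph{multilinear} in $(\gamma_1,\ldots,\gamma_t)$; hence the operator norm, being a pointwise maximum of such functions, is convex in each $\gamma_k$ and is therefore maximized at a vertex $\gamma_k\in\{-1,1\}$. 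Whenever some $\gamma_k=-1$, a short algebraic identity collapses three consecutive factors into one (or drops a factor at the boundary), and one inducts on $t$. The only surviving case is all $\gamma_k=1$, i.e., the $t$-th power of the single matrix $\bigl(\begin{smallmatrix}2a&-a\\1&0\end{smallmatrix}\bigr)$; only here does the Schur decomposition enter, and now it applies to a single matrix raised to a power, from which the $O(t)$ bound follows directly. So your instinct that Schur decomposition is the right tool is correct, but it is applied to a single matrix power rather than a product of varying factors---and getting to that single power requires the multilinearity reduction, which is the idea missing from your sketch.
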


In order to prove \cref{clm:agd_quad_init} we need the following technical claim (proof at \cref{sec:quad_multidim}).
\begin{lemma} \label{lemma:quad_multidim}
Suppose 
$M_k=\begin{pmatrix}
(1+\gamma_{k})A & -\gamma_{k}A \\
1 & 0
\end{pmatrix}$,
where $0 \preceq A \preceq 1$ and $-1 \leq \gamma_{k} \leq 1$. Then
\begin{align*}
    \norm3{\prod_{k=1}^{t}M_k} \leq 2(t+1).
\end{align*}
\end{lemma}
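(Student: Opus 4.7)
The plan is to exploit the shared matrix $A$ across all factors in order to reduce the $2d \times 2d$ problem to a family of independent $2 \times 2$ problems. Writing the spectral decomposition $A = U \Lambda U^\top$ with $\Lambda = \operatorname{diag}(\lambda_1, \ldots, \lambda_d)$ and $\lambda_i \in [0, 1]$, conjugating each $M_k$ by the orthogonal matrix $\operatorname{diag}(U, U)$ yields a matrix of the same form with $A$ replaced by $\Lambda$. A subsequent coordinate permutation pairing $(e_i, e_{d+i})$ exhibits this as block-diagonal with $d$ independent $2 \times 2$ blocks $N_k(\lambda_i) \eqdef \begin{pmatrix}(1+\gamma_k)\lambda_i & -\gamma_k\lambda_i \\ 1 & 0\end{pmatrix}$. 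Since the conjugation is orthogonal, operator norms are preserved, so it suffices to prove $\|\prod_{k=1}^{t} N_k(\lambda)\| \le 2(t+1)$ for every fixed $\lambda \in [0,1]$ and all $\gamma_1, \ldots, \gamma_t \in [-1, 1]$.

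For the $2 \times 2$ case, I would unfold the product as polynomials in $\lambda$. A short induction on $t$ shows that $\prod_{k=1}^{t} N_k(\lambda) = \begin{pmatrix} P_t(\lambda) & Q_t(\lambda) \\ P_{t-1}(\lambda) & Q_{t-1}(\lambda)\end{pmatrix}$, where $P_t$ and $Q_t$ each satisfy the three-term recurrence $X_{t+1}(\lambda) = \lambda\bigl((1+\gamma_{t+1})X_t(\lambda) - \gamma_{t+1}X_{t-1}(\lambda)\bigr)$ with initial values $P_0 = 1,\ P_1 = (1+\gamma_1)\lambda,\ Q_0 = 0,\ Q_1 = -\gamma_1\lambda$. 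By the Frobenius bound on $2 \times 2$ matrices, it is then enough to show $|P_t(\lambda)|, |Q_t(\lambda)| \le t + 1$ on $[0, 1]$.

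To establish these polynomial bounds I plan to use the Schur-decomposition viewpoint indicated by the authors. The key structural observation is that every $N_k(\lambda)$ has both eigenvalues in the closed unit disk: its characteristic polynomial $x^2 - (1+\gamma_k)\lambda x + \gamma_k\lambda$ has product of roots $\gamma_k\lambda$ of modulus at most one, and a short case analysis using $p(\pm 1)\ge 0$ (which follow from $\lambda,|\gamma_k|\le 1$) shows that the real roots, when they exist, also lie in $[-1,1]$. Thus the spectral radius of each factor is at most $1$, and in its Schur form all growth under products stems from the bounded off-diagonal term (whose magnitude is $|1 + \gamma_k\lambda|\le 2$ by a direct Frobenius-norm computation). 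Rewriting the recurrence via $\Delta_t := P_t - P_{t-1}$ exposes this additive structure explicitly, since $\Delta_{t+1} = (\lambda - 1)P_t + \lambda\gamma_{t+1}\Delta_t$, and a joint induction on the pair $(|P_t|, |\Delta_t|)$ should yield the desired $O(t)$ bound; the same argument, with analogous initial data, handles $Q_t$.

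The main obstacle is precisely this inductive step. A naive triangle-inequality bound on $|P_t|$ alone yields only geometric growth, since the recurrence implies $|P_t(\lambda)| \le 2|P_{t-1}(\lambda)| + |P_{t-2}(\lambda)|$; tracking $P_t$ in isolation is therefore insufficient. The Schur-style viewpoint is what makes the linear bound visible, by isolating the bounded diagonal contribution (of modulus at most $1$ per step) from the off-diagonal contribution that accumulates only additively, which is exactly what the joint $(|P_t|, |\Delta_t|)$ potential is designed to capture.
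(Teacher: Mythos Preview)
Your reduction of the $2d\times 2d$ problem to $d$ independent $2\times 2$ problems via the spectral decomposition of $A$ is correct and matches the paper exactly. The gap is entirely in the $2\times 2$ analysis.

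The Schur observation you make---that each $N_k(\lambda)$ has both eigenvalues in the closed unit disk---is true but does not deliver what you need. A Schur (upper-triangular) form is attached to a \emph{single} matrix; since the $\gamma_k$ vary, the $N_k(\lambda)$ do not share a triangularizing basis, and there is no ``diagonal part that stays bounded while the off-diagonal accumulates additively'' for the product. Concretely, your joint potential on $(|P_t|,|\Delta_t|)$ does not close. From $P_{t+1}=\lambda P_t+\lambda\gamma_{t+1}\Delta_t$ and $\Delta_{t+1}=(\lambda-1)P_t+\lambda\gamma_{t+1}\Delta_t$ one gets, after taking absolute values and using $|\gamma_{t+1}|\le 1$,
\[
\begin{pmatrix}|P_{t+1}|\\|\Delta_{t+1}|\end{pmatrix}
\le
\begin{pmatrix}\lambda & \lambda\\ 1-\lambda & \lambda\end{pmatrix}
\begin{pmatrix}|P_t|\\|\Delta_t|\end{pmatrix},
\]
and the spectral radius of this nonnegative coupling matrix is $\lambda+\sqrt{\lambda(1-\lambda)}$, which exceeds $1$ for every $\lambda\in(1/2,1)$. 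Thus any inductive invariant built from $|P_t|$ and $|\Delta_t|$ with nonnegative coefficients blows up geometrically in that regime; the linear bound cannot be obtained this way without exploiting sign cancellations that the potential has already discarded.

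The paper's $2\times 2$ argument rests on an idea you do not mention. For fixed unit vectors $x,y$, the scalar $x^\top\bigl(\prod_{k} N_k(\lambda)\bigr)y$ is \emph{multilinear} in $(\gamma_1,\ldots,\gamma_t)$, so the operator norm is maximized at a vertex of $\{-1,1\}^t$. One then inducts on $t$: any $\gamma_k=-1$ at an endpoint contributes a factor $\begin{pmatrix}0&\lambda\\1&0\end{pmatrix}$ of norm $\le 1$ and drops the length by one; any interior $\gamma_k=-1$ collapses the triple $N_{k+1}N_kN_{k-1}$ to $\lambda$ times a single matrix of the same form, dropping the length by two. This leaves only the case $\gamma_1=\cdots=\gamma_t=1$, where all factors coincide and a \emph{single} Schur decomposition of the common matrix, together with the explicit formula for powers of a $2\times 2$ upper-triangular matrix, yields the $O(t)$ bound. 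The Schur step is used only after this reduction to a constant sequence---precisely the one situation in which a common triangularizing basis exists.
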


\begin{proof}[of \cref{clm:agd_quad_init}]
Let $0 \preceq H \preceq \beta$ be the Hessian of $f$.
Let $x_0,\xt_0$ be our starting points s.t. $\norm{x_0-\xt_0}\leq \epsilon$.
Let us consider two runs of the method initialized at $x_0, \xt_0$ respectively:
\begin{align*}
    (x_t,y_t) &= \agd(f, x_0, t),
    \qquad
    (\xt_t,\yt_t) = \agd(f, \xt_0, t),
    \qquad\qquad
    \forall ~ t \geq 0,
\end{align*}
For $t \geq 1$ we can combine \cref{eq:xstep} and \cref{eq:ystep} to obtain
\begin{align*}
    x_{t+1}
    &= (1+\gamma_t) x_{t} - \gamma_{t} x_{t-1} - \eta \nabla f((1+\gamma_t) x_{t} - \gamma_{t} x_{t-1}).
\end{align*}
Using our notation for $\dltx{t}\eqdef x_t-\xt_t$ and the fact that for a quadratic $f$, differences between gradients can be expressed as $\nabla f(x) - \nabla f(x') = H (x-x')$ for any $x,x'$,
\begin{align*}
    \dltx{t+1}
    &= (1+\gamma_t) \dltx{t} - \gamma_{t} \dltx{t-1} - \eta H((1+\gamma_t) \dltx{t} - \gamma_{t} \dltx{t-1}).
\end{align*}
We can rewrite in matrix form,
\begin{align*}
\begin{pmatrix}
\dltx{t+1} \\
\dltx{t}
\end{pmatrix}
=
\begin{pmatrix}
(1+\gamma_t)(I-\eta H) & -\gamma_t (I-\eta H) \\
1 & 0
\end{pmatrix}
\begin{pmatrix}
\dltx{t} \\
\dltx{t-1}
\end{pmatrix}.
\end{align*}
Thus,
\begin{align*}
\begin{pmatrix}
\dltx{t+1} \\
\dltx{t}
\end{pmatrix} &=
\prod_{k=1}^t
\begin{pmatrix}
    (1+\gamma_k)(I-\eta H) & -\gamma_k (I-\eta H) \\
    1 & 0
    \end{pmatrix}
    \begin{pmatrix}
    \dltx{1} \\
    \dltx{0}
    \end{pmatrix}.
\end{align*}
We can bound the norm of $\dltx{1}$,
\begin{align*}
    \norm{\dltx{1}}
    = \norm{\dlty{0}-\eta \dltf{0}}
    = \norm{\dltx{0}-\eta \dltf{0}}
    \leq \norm{\dltx{0}},
\end{align*}
where the first equality comes from \cref{eq:dxstep}, the second in the initialization and the third is \cref{cor:contractive}.
Since $0 \preceq H \preceq \beta I$, $0 \preceq I-\eta H \preceq I$. Thus, using \cref{lemma:quad_multidim} and the triangle inequality we obtain
\begin{align*}
    \norm{\dltx{t+1}}
    &\leq \norm{\brk{\dltx{t+1},\dltx{t}}}
    \leq 2(t+1) \norm{(\dltx{1},\dltx{0})}
    \leq 4(t+1)\epsilon.
\end{align*}
Thus, the initialization stability after $T$ iterations is upper bounded by $4T\epsilon$.
\end{proof}

\subsection{Proof of \cref{lemma:quad_multidim}} \label{sec:quad_multidim}

First we prove the following claim.
\begin{claim} \label{clm:schur_power}
Let
$T = \begin{pmatrix}
\lambda_1 & c \\
0 & \lambda_2
\end{pmatrix}$,
then for all $t>0$,
$T^{t} = \begin{pmatrix}
\lambda_1^{t} & c \sum_{i=0}^{t-1} \lambda_1^{i} \lambda_2^{t-i-1} \\
0 & \lambda_2^{t}
\end{pmatrix}$.
\end{claim}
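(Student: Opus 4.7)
The plan is a straightforward induction on $t$. The base case $t=1$ is immediate: the matrix on the right reduces to $\begin{pmatrix} \lambda_1 & c\lambda_1^0\lambda_2^0 \\ 0 & \lambda_2 \end{pmatrix} = T$, since the sum has only the $i=0$ term.

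For the inductive step, I would assume the formula holds for $t$ and write $T^{t+1} = T^t \cdot T$. Multiplying out,
\begin{align*}
T^{t+1}
&= \begin{pmatrix} \lambda_1^{t} & c \sum_{i=0}^{t-1} \lambda_1^{i} \lambda_2^{t-i-1} \\ 0 & \lambda_2^{t} \end{pmatrix}
\begin{pmatrix} \lambda_1 & c \\ 0 & \lambda_2 \end{pmatrix}
= \begin{pmatrix} \lambda_1^{t+1} & c \lambda_1^{t} + \lambda_2 \cdot c \sum_{i=0}^{t-1} \lambda_1^{i} \lambda_2^{t-i-1} \\ 0 & \lambda_2^{t+1} \end{pmatrix}.
\end{align*}
The top-right entry simplifies to $c\bigl(\lambda_1^{t} + \sum_{i=0}^{t-1} \lambda_1^{i} \lambda_2^{t-i}\bigr)$, and the extra term $\lambda_1^{t} = \lambda_1^{t} \lambda_2^{0}$ is exactly the $i=t$ summand, so the sum extends to $\sum_{i=0}^{t} \lambda_1^{i} \lambda_2^{t-i}$, matching the claimed form at $t+1$.

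There is no real obstacle here; the only thing to be careful about is reindexing the sum correctly after the multiplication, namely verifying that the extra $c\lambda_1^{t}$ term produced by the off-diagonal entry of $T$ combines with the shifted sum (obtained by multiplying by $\lambda_2$ which raises the exponent of $\lambda_2$ by one) to give the desired closed form at the next step.
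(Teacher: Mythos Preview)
Your proof is correct and is essentially identical to the paper's: both proceed by induction on $t$, verify the base case $t=1$ directly, and for the inductive step multiply $T^{t}$ (in the assumed form) by $T$ on the right, then absorb the extra $c\lambda_1^{t}$ term into the sum as the new top index. The only cosmetic difference is that the paper indexes the step as $t-1 \to t$ rather than $t \to t+1$.
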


\begin{proof}
By induction. For $t=1$ the claim is immediate. Assuming the lemma is correct for $t-1 > 0$, then,
\begin{align*}
    T^{t} &= T^{t-1} T = 
    \begin{pmatrix}
    \lambda_1^{t-1} & c \sum_{i=0}^{t-2} \lambda_1^{i} \lambda_2^{t-i-2} \\
    0 & \lambda_2^{t-1}
    \end{pmatrix}
    \begin{pmatrix}
    \lambda_1 & c \\
    0 & \lambda_2
    \end{pmatrix} \\
    &= \begin{pmatrix}
    \lambda_1^{t} & c (\lambda_1^{t-1} + \lambda_2 \sum_{i=0}^{t-2} \lambda_1^{i} \lambda_2^{t-i-2}) \\
    0 & \lambda_2^{t}
    \end{pmatrix}
    = \begin{pmatrix}
    \lambda_1^{t} & c \sum_{i=0}^{t-1} \lambda_1^{i} \lambda_2^{t-i-1} \\
    0 & \lambda_2^{t}
    \end{pmatrix}.
\end{align*}
\end{proof}

Secondly we prove \cref{lemma:quad_multidim} in the single dimension, $A\in\R^{1 \times 1}$, restated below.
\begin{lemma} \label{lemma:quad}
Suppose 
$H_k=\begin{pmatrix}
(1+\gamma_{k})h & -\gamma_{k}h \\
1 & 0
\end{pmatrix}$,
where $0 \leq h \leq 1$ and $-1 \leq \gamma_{k} \leq 1$. Then
\begin{align*}
    \norm{\prod_{k=1}^{t}H_k}_2 \leq 2(t+1).
\end{align*}
\end{lemma}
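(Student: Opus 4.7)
The plan is to unfold $M_t := \prod_{k=1}^t H_k$ into scalar sequences and bound each entry separately.

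First, I would verify by a one-step induction that
\[
M_t = \begin{pmatrix} a_t & b_t \\ a_{t-1} & b_{t-1} \end{pmatrix},
\]
where $\{a_t\}$ and $\{b_t\}$ both obey the three-term recurrence
\[
u_t = (1+\gamma_t) h \, u_{t-1} - \gamma_t h \, u_{t-2},
\]
with initial data $(a_0,a_1)=(1,(1+\gamma_1)h)$ and $(b_0,b_1)=(0,-\gamma_1 h)$. The structure follows because the bottom row of each $H_k$ is $(1,0)$, so the bottom row of $M_t=H_t M_{t-1}$ is exactly the top row of $M_{t-1}$. It therefore suffices to show $|a_t|,|b_t|\le t+1$, since then
\[
\|M_t\|_2 \le \|M_t\|_F \le \sqrt{a_t^2+a_{t-1}^2+b_t^2+b_{t-1}^2} \le 2(t+1).
\]

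Next, I would rewrite the recurrence as $u_t - h u_{t-1} = \gamma_t h(u_{t-1}-u_{t-2})$. At the boundary $h=1$ this collapses to $\delta_t = \gamma_t \delta_{t-1}$ with $\delta_t := u_t-u_{t-1}$, so $|\delta_t|\le|\delta_1|$ and telescoping yields $|u_t|\le|u_0|+t|\delta_1|$, giving $|a_t|\le t+1$ and $|b_t|\le t$ directly. This is the Schur-style argument the paper alludes to: in the basis $P=\begin{pmatrix}1&1\\1&0\end{pmatrix}$, each $H_k$ at $h=1$ becomes the upper triangular matrix $\begin{pmatrix}1&1\\0&\gamma_k\end{pmatrix}$, and $\prod_k P^{-1}H_kP$ can be computed explicitly via a telescoping sum analogous to \cref{clm:schur_power}, giving top-right entry $\sum_{k=1}^t\prod_{j=k+1}^t\gamma_j$, of modulus at most $t$, and bottom-right entry $\prod_k\gamma_k$, of modulus at most $1$.

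For general $h\in[0,1]$, the same change of basis gives
\[
P^{-1} H_k P = \begin{pmatrix} 1 & 1 \\ 0 & \gamma_k \end{pmatrix} - (1-h)\begin{pmatrix} 0 & 0 \\ 1 & 1 \end{pmatrix},
\]
a rank-one perturbation of the $h=1$ Schur form whose size scales as $1-h$. The intuition is that the multiplicative factor $h\le1$ acts as a contraction—the characteristic roots of the constant-$\gamma$ recurrence have modulus at most $\max(1,\sqrt{h})\le1$—so the worst case for linear-in-$t$ growth is exactly $h=\gamma_k\equiv1$, where $|a_t|=t+1$ is attained. To formalize the bound, I would induct on $t$ using a two-variable potential that jointly tracks $u_t$ and the auxiliary quantity $\epsilon_t := u_t - h u_{t-1}$, whose evolution $\epsilon_t = \gamma_t h(u_{t-1}-u_{t-2})$ is governed by the contracted difference, and combine this with the unfolding $u_t = h^t u_0 + \sum_{k=1}^t h^{t-k}\epsilon_k$ to close the estimate.

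The main obstacle is rigorously closing the induction for $h<1$: a naive triangle inequality on the recurrence gives only $|u_t|\le 2|u_{t-1}|+|u_{t-2}|$, so a priori $|u_t|$ could grow like $3^t$. Overcoming this requires exploiting the cancellation between the two terms on the right-hand side, either via the perturbed-Schur computation above (tracking how the $(1-h)$ correction propagates through the product) or via a monotonicity-in-$h$ reduction to the boundary case $h=1$, where the telescoping argument is transparent.
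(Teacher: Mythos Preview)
Your structural observations are correct: $M_t$ does have the shape $\begin{pmatrix} a_t & b_t \\ a_{t-1} & b_{t-1}\end{pmatrix}$ with $a_t,b_t$ obeying the stated three-term recurrence, and at $h=1$ the telescoping argument via $\delta_t=\gamma_t\delta_{t-1}$ gives the bound cleanly. But the proposal has a genuine gap exactly where you flag it: the case $h<1$ is not proved. Both remedies you suggest are problematic. First, your perturbed-Schur formula is incorrect: a direct computation gives
\[
P^{-1}H_kP=\begin{pmatrix}1&1\\ -(1-h)& \gamma_k h-(1-h)\end{pmatrix},
\]
so the $(2,2)$ entry of the perturbation is $-(1-h)(1+\gamma_k)$, not $-(1-h)$; the perturbation is not the fixed rank-one matrix you wrote, and tracking its propagation through $t$ factors is no easier than the original problem. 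Second, the ``monotonicity-in-$h$'' reduction is asserted by intuition only; the entries of $M_t$ are polynomials in $h$ with sign patterns depending on the $\gamma_k$, and you give no argument that $\|M_t\|$ (or $|a_t|$, $|b_t|$) is monotone in $h$. The two-variable potential you mention is not specified, and the unfolding $u_t=h^tu_0+\sum_k h^{t-k}\epsilon_k$ combined with $\epsilon_t=\gamma_t h(u_{t-1}-u_{t-2})=\gamma_t h\,\epsilon_{t-1}-\gamma_t h(1-h)u_{t-2}$ does not obviously close to a linear bound.

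The paper's proof takes a completely different route that sidesteps this difficulty. The key observation is that for fixed $h$, the operator norm $\|M_t\|_2=\max_{\|x\|=\|y\|=1}x^\top M_t y$ is a pointwise maximum of functions that are \emph{affine in each $\gamma_k$} (since each $H_k$ is), hence convex in each $\gamma_k$, hence maximized at a corner $\gamma_k\in\{-1,1\}$. At a corner, if some $\gamma_k=-1$ the corresponding factor (or a triple of consecutive factors) collapses and one reduces to a product of length $t-1$ or $t-2$; if all $\gamma_k=1$, the product is $H^t$ for a single matrix $H$, which is handled by an explicit Schur decomposition (this is where \cref{clm:schur_power} is actually used). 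The variable $h$ is never moved; only the $\gamma_k$ are pushed to extremes. This is the idea your proposal is missing.
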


\begin{proof}[of \cref{lemma:quad}]
Let $M_t=\prod_{k=1}^{t}H_k$. Thus,
\begin{align*}
    \norm{M_t}_2
    &= \max{\brk[c]{x^T M_t y:x,y \in \R^2 \text{ with } \norm{x}_2=\norm{y}_2=1}}.
\end{align*}
Let $\xt_t,\yt_t$ be defined as
\begin{align*}
    \xt_t,\yt_t = \argmax_{x,y \in \R^2 \text{ with } \norm{x}_2=\norm{y}_2=1}{\brk{x^T M_t y}}.
\end{align*}
For a given $h$, let $f_t$ be defined as
\begin{align*}
    f_t(\gamma_1,\dots,\gamma_t)
    &\eqdef \norm{M_t}_2
    =\xt_t^T M_t \yt_t.
\end{align*}
Note that $f$ is a multivariate linear function and as such attains it maximum in the extreme values of the variables. Thus,
\begin{align*}
    f_t(\gamma_1,\dots,\gamma_t)
    \leq \max_{\forall i\leq t, \gamma_i \in \brk[c]{-1,1}}{f_t(\gamma_1,\dots,\gamma_t)}.
\end{align*}
Using induction we will show that
\begin{align*}
    \max_{\forall i\leq t, \gamma_i \in \brk[c]{-1,1}}{f_t(\gamma_1,\dots,\gamma_t)} \leq 2(t+1).
\end{align*}
For $t=0$, $f_0=\norm{M_0}_2=\norm{I}_2=1$. For $t=1$,
\begin{align*}
    M_1 \in \brk[c]*{
    \begin{pmatrix}
    2h & -h \\
    1 & 0
    \end{pmatrix},
    \begin{pmatrix}
    0 & h \\
    1 & 0
    \end{pmatrix}},
\end{align*}
and it is easy to verify that $\norm{M_1} \leq 4$. Now for $t \geq 2$. Lets assume that $\gamma_1=-1$. Thus,
\begin{align*}
    f(\gamma_1,\dots,\gamma_t)
    &= \xt_t^T M_t \yt_t \\
    &= \xt_t^T \prod_{k=2}^{t}H_k
    \begin{pmatrix}
    0 & h \\
    1 & 0
    \end{pmatrix}
    \yt_t \\
    &\leq f_{t-1}(\gamma_2,\dots,\gamma_t) \norm*{
    \begin{pmatrix}
    0 & h \\
    1 & 0
    \end{pmatrix}
    \yt_t
    }_2^{-1}.
\end{align*}

The last transition is due to the optimal values of $\xt_{t-1}$ and $\yt_{t-1}$ in the definition of $f_{t-1}$.
Since $\norm*{
    \begin{pmatrix}
    0 & h \\
    1 & 0
    \end{pmatrix}
    \yt_t
    }_2^{-1} \leq 1$ ($0\leq h \leq 1$),
we conclude using the induction assumption.
Similarly, if $\gamma_t=-1$,
\begin{align*}
    f(\gamma_1,\dots,\gamma_t)
    &\leq \norm*{
    \xt_t^T
    \begin{pmatrix}
    0 & h \\
    1 & 0
    \end{pmatrix}
    }_2^{-1}
    f_{t-1}(\gamma_1,\dots,\gamma_{t-1}),
\end{align*}
and again we obtain our result with the induction assumption.
If $\gamma_k=-1$ for $k \in \brk[c]{2,\dots,t-1}$,
\begin{align*}
    H_{k+1}H_{k}H_{k-1}
    &=
    \begin{pmatrix}
    (1+\gamma_{k+1})h & -\gamma_{k+1}h \\
    1 & 0
    \end{pmatrix}
    \begin{pmatrix}
    0 & -h \\
    1 & 0
    \end{pmatrix}
    \begin{pmatrix}
    (1+\gamma_{k-1})h & -\gamma_{k-1}h \\
    1 & 0
    \end{pmatrix} \\
    &= h
    \begin{pmatrix}
    (1-\gamma_{k+1}\gamma_{k-1})h & -\gamma_{k+1}\gamma_{k-1}h \\
    1 & 0
    \end{pmatrix}.
\end{align*}
Since $\abs{-\gamma_{k+1}\gamma{k-1}} \leq 1$, $H_{k+1}H_{k}H_{k-1}$ is of the same form as our $H$ matrices, thus,
\begin{align*}
    f_t(\gamma_1,\dots,\gamma_t)
    &\leq f_{t-2}(\gamma_1, \dots, \gamma_{k-2},-\gamma_{k-1}\gamma_{k+1},\gamma_{k+2},\dots,\gamma_t),
\end{align*}
and we finish using the induction assumption for $t-2$.
We are left in the scenario where $\gamma_k=1$ for all $t \in [t]$. Thus,
\begin{align*}
    H_k=\begin{pmatrix}
    2h & -h \\
    1 & 0
    \end{pmatrix},
\end{align*}
which does not depend on $k$, so we will denote it with $H$ for all $k$. The Schur decomposition of $H$ of the form $QUQ^{-1}$ where $Q$ is unitary and $U$ is upper triangular, is
\begin{align*}
    H
    &= Q
    \begin{pmatrix}
    \lambda_1 & -(2h\lambda_2 - h + 1) \\
    0 & \lambda_2
    \end{pmatrix}
    Q^{-1},
\end{align*}
where $\lambda_1=h+\sqrt{h(h-1)}$ and $\lambda_2=h-\sqrt{h(h-1)}$ are the eigenvalues of $H$ and $Q=\frac{1}{\sqrt{1+ h}}
    \begin{pmatrix}
    \lambda_1 & -1 \\
    1 & \lambda_2
    \end{pmatrix}$.
Thus, taking $H$ to the power of $t$ and using \cref{clm:schur_power},
\begin{align*}
    H^t
    &= Q
    \begin{pmatrix}
    \lambda_1 & -(2h\lambda_2 - h + 1) \\
    0 & \lambda_2
    \end{pmatrix}^t
    Q^{-1} \\
    &= Q
    \begin{pmatrix}
    \lambda_1^t & c\sum_{i=0}^{t-1} \lambda_1^{i} \lambda_2^{t-i-1} \\
    0 & \lambda_2^t
    \end{pmatrix}
    Q^{-1},
\end{align*}
for $c=-(2h\lambda_2 - h + 1)$.
Returning to $f_t$,
\begin{align*}
    f_t(1,\dots,1)
    &= \norm*{\begin{pmatrix}
    \lambda_1^t & c\sum_{i=0}^{t-1} \lambda_1^{i} \lambda_2^{t-i-1} \\
    0 & \lambda_2^t
    \end{pmatrix}}_2 \\
    &\leq \norm*{\begin{pmatrix}
    \lambda_1^t & c\sum_{i=0}^{t-1} \lambda_1^{i} \lambda_2^{t-i-1} \\
    0 & \lambda_2^t
    \end{pmatrix}}_F.
\end{align*}
Since $\abs{\lambda_1}=\abs{\lambda_2}=h\leq 1$,
\begin{align*}
    f(1,\dots,1)
    &\leq \sqrt{2h^t+(\abs{c}(t-1)h^{t-1})^2}
    \leq \sqrt{2+\abs{c}^2(t-1)^2}.
\end{align*}
\begin{align*}
    \abs{c}^2
    &= \abs{h-1-2h^2+i\cdot 2h \sqrt{h(1-h)}}^2 \\
    &= h^2-2h-4h^3+1+4h^2+4h^4+4h^3-4h^4 \\
    &= 5 h^2 -2h +1
    \leq 4.
\end{align*}
Thus,
\begin{align*}
    f(1,\dots,1)
    \leq \sqrt{2+4(t-1)^2}
    \leq \sqrt{4(t+1)^2}
    \leq 2(t+1),
\end{align*}
and we conclude our lemma.
\end{proof}

Finally we move to our proof.
\begin{proof}[of \cref{sec:quad_multidim}]
Since $A$ is symmetric, it can be written as $A=Q D Q^{-1}$ where Q is an orthogonal matrix and $D$ is diagonal. Thus,
\begin{align*}
    M_k
    =
    \begin{pmatrix}
    Q & 0 \\
    0 & Q
    \end{pmatrix}
    \begin{pmatrix}
    (1+\gamma_k)D & -\gamma_k D \\
    I & 0
    \end{pmatrix}
    \begin{pmatrix}
    Q^{-1} & 0 \\
    0 & Q^{-1}
    \end{pmatrix}.
\end{align*}
Hence,
\begin{align*}
   \prod_{k=1}^t M_k
    &=
    \begin{pmatrix}
    Q & 0 \\
    0 & Q
    \end{pmatrix}
    \prod_{k=1}^t
    \begin{pmatrix}
    (1+\gamma_k)D & -\gamma_k D \\
    I & 0
    \end{pmatrix}
    \begin{pmatrix}
    Q^{-1} & 0 \\
    0 & Q^{-1}
    \end{pmatrix}.
\end{align*}
Let
\begin{align*}
    N_k=
    \begin{pmatrix}
    (1+\gamma_k)D & -\gamma_k D \\
    I & 0
    \end{pmatrix}
    \qquad\text{and}\qquad
    P_{k,i}=
    \begin{pmatrix}
    (1+\gamma_k)D_{ii} & -\gamma_k D_{ii} \\
    1 & 0
    \end{pmatrix}.
\end{align*}
We will bound $\norm{\prod_{k=1}^t N_k}_2$.
Let $\xt \in \R^{2d}$ be defined as
\begin{align*}
    \xt = \argmax_{x:\norm{x}_2=1} \norm3{\prod_{k=1}^t N_k x}_2,
\end{align*}
and let $y = \prod_{k=1}^t N_k \xt$.
Note that since $N_k$ is a block matrix where all the blocks are square and diagonal, for $i \leq d$, 
\begin{align*}
    y_i
    &= \brk*{\prod_{k=1}^t P_{k,i} (\xt_i,\xt_{i+d})^T}_1
    \qquad\text{and}\qquad
    y_{i+d}
    = \brk*{\prod_{k=1}^t P_{k,i} (\xt_i,\xt_{i+d})^T}_2.
\end{align*}
From \cref{lemma:quad}, $\norm{\prod_{k=1}^t P_{k,i}}_2\leq 2(t+1)$, hence,
\begin{align*}
    \norm{(y_i,y_{i+d})} \leq 2(t+1) \cdot \norm{(\xt_i,\xt_{i+d})}.
\end{align*}
Thus,
\begin{align*}
    \norm{y}^2
    &= \sum_{i=1}^d \norm{(y_i,y_{i+d})}^2 \\
    &\leq \sum_{i=1}^d (2(t+1))^2 \norm{(\xt_i,\xt_{i+d})}^2 \\
    &= (2(t+1))^2 \norm{\xt}^2 \\
    &= (2(t+1))^2.
\end{align*}
And we conclude with $2(t+1) \geq \norm{y}_2=\norm{\prod_{k=1}^t N_k}_2=\norm{\prod_{k=1}^t M_k}_2$,
\end{proof}

\section{Additional Uniform Stability Bounds}
\label{sec:additional-unif}

\subsection{Lower Bound for \agd, Quadratic Objectives}
\label{sec:lower_agd_quad_unif}

In this section we show that the upper bound of the uniform stability of \agd for quadratic objectives established by \citet{chen2018stability} of $\ifrac{4\eta G^2 T^2}{n}$ is tight up to a constant factor.
Our sample space is $\Z=\{1,2\}$, the loss function is
\begin{align*}
    f(w;z) = \begin{cases}
    Gw & \mbox{if } z=1; \\
    -Gw& \mbox{if } z=2.
    \end{cases}
\end{align*}
and our training samples are $S=(1,\dots,1)$ and $S'=(2,1,\dots,1)$.
Thus,
\begin{align*}
    R_S(w) = Gw
    \qquad\text{and}\qquad
    R_{S'} = \frac{n-2}{n} Gw.
\end{align*}

Let us consider two runs of the method on $R_S$ and $R_{S'}$ respectively:
\begin{align*}
    (x_t,y_t) &= \agd(R_S, x_0, t),
    \qquad
    (\xt_t,\yt_t) = \agd(R_{S'}, x_0, t),
    \qquad\qquad
    \forall ~ t \geq 0,
\end{align*}

The difference between gradients at time $t$ is
\begin{align*}
    \dltf{t} &= \nabla R_S(x_t)-\nabla R_{S'}(\xt_t) \\
    &= G-G\frac{n-2}{n}
    = \frac{2G}{n}.
\end{align*}

We will now show by induction that $\dltm{t}=\frac{t-1}{4}\cdot\frac{2G\eta}{n}$.
For $t=1$,
\begin{align*}
    \dltm{1}
    &= m_1-\mt_1 \\
    &= \gamma_1(x_1-x_0) - \gamma_1(\xt_1-\xt_0) \tag{\cref{eq:defm}} \\
    &= 0. \tag{$\gamma_1=0$}
\end{align*}

Assuming the claim for $t-1$,
\begin{align*}
    \dltm{t}
    &= \gamma_t(\dltm{t-1}-\eta\dltf{t-1}) \tag{\cref{eq:dmstep}}\\
    &= \frac{t-1}{t+2}\brk*{\frac{t-2}{4}\cdot \frac{2G\eta}{n}-\eta\dltf{t-1}} \tag{induction assumption}\\
    &= \frac{t-1}{t+2}\brk*{\frac{t-2}{4}\cdot \frac{2G\eta}{n}+\frac{2G\eta}{n}} \tag{$\dltf{t-1}=\ifrac{2G}{n}$}\\
    &= \frac{t-1}{t+2}\brk*{\frac{t+2}{4}\cdot\frac{2G\eta}{n}} \\
    &= \frac{t-1}{4} \cdot \frac{2G\eta}{n},
\end{align*}
and we finished our induction.
Now for the full dynamics,
\begin{align*}
    \dltx{t}
    &= \dlty{t-1} - \eta \dltf{t-1} \tag{\cref{eq:dxstep}} \\
    &= \dltx{t-1} + \dltm{t-1} - \eta \dltf{t-1} \tag{\cref{eq:dystep}} \\
    &= \dltx{t-1} + \frac{t-2}{4} \frac{2G\eta}{n} - \eta \frac{2G}{n} \\
    &= \dltx{t-1} + \frac{t+2}{4} \frac{2G\eta}{n}.
\end{align*}
Repeating this argument recursively,
\begin{align*}
    \dltx{T} &= \dltx{0} + \frac{2G\eta}{n} \sum_{t=1}^T \frac{t+2}{4} \\
    &= \frac{2G\eta}{n} \sum_{t=1}^T \frac{t+2}{4} \tag{$\dltx{0}=x_0-\xt_0=0$} \\
    &= \frac{2G\eta}{n} \cdot \frac{3T(T+2)}{8} \\
    &= \frac{3G\eta T(T+2)}{4n}.
\end{align*}
Thus, for both $z=1$ and $z=2$,
\begin{align*}
    \abs{f(x_T;z)-f(\xt_T;z)}
    &= G \abs{\dltx{T}} \\
    &= \frac{3}{4}\frac{G^2\eta T(T+2)}{n} \\
    &= \Theta(\ifrac{\eta G^2 T^2}{n}).
\end{align*}
Hence, the upper bound provided by \citet{chen2018stability} is tight.

\subsection{Upper Bound for \agd, Convex Objectives}

In this section we provide an exponential upper bound of uniform stability for \agd in the convex and smooth setting.

\begin{claim} \label{clm:agd_exp_upper_unif}
    Let $\ell(\cdot,z)$ be convex and $\beta$-smooth function for all $z \in \Z$. Then for \agd with step size $\eta\leq\ifrac{1}{\beta}$ and $T$ steps, for all $x_0 \in \R^d$ and $\epsilon>0$,
\begin{align*}
    \unif{\agd_T}{\ell}(n) \leq \frac{2 \eta G^2}{n} (3^{T-1}+1).
\end{align*}
\end{claim}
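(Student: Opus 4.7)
}

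The plan is to mirror the argument from the initialization stability upper bound (\cref{clm:agd_quad_init_gen}), but adapted to the uniform stability setting in which the two executions of \agd are run on two empirical risks induced by samples differing in a single example, rather than on a single function from two different initializations. Concretely, fix $S,S'\in\Z^n$ differing in at most one example and consider $(x_t,y_t)=\agd(R_S,x_0,t,\eta)$ and $(\xt_t,\yt_t)=\agd(R_{S'},x_0,t,\eta)$ starting from the \emph{same} $x_0$. Define as usual $\dltx{t}=x_t-\xt_t$, $\dlty{t}=y_t-\yt_t$, $\dltm{t}=m_t-\mt_t$, and $\dltf{t}=\nabla R_S(y_t)-\nabla R_{S'}(\yt_t)$. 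The \agd update relations \cref{eq:dxstep,eq:dystep,eq:dmstep} continue to hold verbatim with this interpretation of $\dltf{t}$.

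The only genuine new ingredient relative to \cref{clm:agd_quad_init_gen} is handling $\dltf{t}$: the two gradients are taken with respect to different functions. I would split
\begin{align*}
    \dltf{t}
    = \brk!{\nabla R_S(y_t)-\nabla R_S(\yt_t)} + \brk!{\nabla R_S(\yt_t)-\nabla R_{S'}(\yt_t)},
\end{align*}
bound the second term by $2G/n$ using the fact that $R_S$ and $R_{S'}$ differ in a single $G$-Lipschitz summand scaled by $1/n$, and combine the first term with $\dlty{t}$ via \cref{cor:contractive} (co-coercivity on the common function $R_S$). This yields two key inequalities:
\begin{align*}
    \norm{\dlty{t}-\eta\dltf{t}} &\leq \norm{\dlty{t}} + \tfrac{2\eta G}{n},
    \qquad
    \norm{\dltf{t}} \leq \beta\norm{\dlty{t}} + \tfrac{2G}{n}.
\end{align*}

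Armed with these bounds, I would mimic the induction in \cref{clm:agd_quad_init_gen} to show $\norm{\dlty{t}}\leq \frac{2\eta G}{n}\cdot 3^{t}$. From \cref{eq:dystep,eq:dmstep} (and $\dltm{1}=0$ since $\gamma_1=0$),
\begin{align*}
    \norm{\dlty{t+1}}
    \leq \norm{\dlty{t}-\eta\dltf{t}} + \norm{\dltm{t+1}}
    \leq \norm{\dlty{t}} + \tfrac{2\eta G}{n} + \eta\sum_{k=1}^{t}\norm{\dltf{k}},
\end{align*}
using $|\gamma_k|\leq 1$. Plugging in the smoothness bound on $\norm{\dltf{k}}$ and the inductive hypothesis, together with $\eta\beta\leq 1$ and $\sum_{k=1}^{t}3^k=(3^{t+1}-3)/2$, the right-hand side evaluates to $\frac{2\eta G}{n}\brk{\tfrac{5}{2}\cdot 3^{t}+t+\tfrac{1}{2}}$, which is bounded by $\frac{2\eta G}{n}\cdot 3^{t+1}$ once one observes $2t+1\leq 3^{t}$ for all $t\geq 0$. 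Finally, $\norm{\dltx{T}}\leq \norm{\dlty{T-1}}+\frac{2\eta G}{n}\leq \frac{2\eta G}{n}(3^{T-1}+1)$ by the contractive bound above, and multiplying by $G$ (the Lipschitz constant of $\ell(\cdot;z)$) converts this to the claimed uniform stability bound.

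The only slightly delicate step will be bookkeeping in the induction: one must verify that the extra affine contributions $\tfrac{2\eta G}{n}(t+1)$ accumulated from the per-step discrepancy do not spoil the clean $3^{t}$ growth. This is exactly where the inequality $2t+1\leq 3^{t}$ is used, and where $\eta\leq 1/\beta$ is critical to absorb the smoothness factor into the geometric series. Everything else is essentially a recycling of the algebra already carried out in \cref{clm:agd_quad_init_gen}.
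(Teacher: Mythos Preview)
Your proposal is correct and follows essentially the same approach as the paper's proof. The only cosmetic difference is packaging: the paper keeps $\dltf{t}=\nabla R_S(y_t)-\nabla R_S(\yt_t)$ defined on the \emph{same} function and introduces a separate error term $e_t=\nabla R_S(\yt_t)-\nabla R_{S'}(\yt_t)$ (leading to modified difference recursions with $e_t$ explicit), whereas you fold the cross-function term into $\dltf{t}$ and split it only when applying co-coercivity or smoothness; the resulting inequalities, the induction $\norm{\dlty{t}}\leq \tfrac{2\eta G}{n}3^{t}$, and the closing step are identical.
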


\begin{proof}
Let $S$, $S'$ be our sample sets which differ in only one example.
For better similarity to our arguments with initialization stability, let $f(x)=R_S(x)$ and $f'(x)=R_{S'}(x)$.
Let us consider two runs of the method initialized at $x_0$ on $f,f'$ respectively:
\begin{align*}
    (x_t,y_t) &= \agd(f, x_0, t),
    \qquad
    (\xt_t,\yt_t) = \agd(f', x_0, t),
    \qquad\qquad
    \forall ~ t \geq 0,
\end{align*}
Note now that our notation of $\dltf{t} \eqdef \nabla f(x_t)-\nabla f(\xt_t)$ does not suffice to describe the dynamics, as we now have two functions.
Using the notation $e_t \eqdef \nabla f(\xt_t) - \nabla f'(\xt_t)$, we have
\begin{align*}
    \nabla f(x_t)-\nabla f'(\xt_t)
    &= \dltf{t} + \nabla f(\xt_t) - \nabla f'(\xt_t)
    = \dltf{t} + e_t.
\end{align*}
Using the notations of $\dltx{t}=x_t-\xt_t,\dlty{t}=y_t-\yt_t,\dltm{t}=m_t-\mt_t$,
\begin{align*}
    \dltx{t} &=
    y_{t-1}-\eta \nabla f(y_{t-1}) - (\yt_{t-1}-\eta \nabla f'(\yt_{t-1})) \\
    &= \dlty{t-1} - \eta (\dltf{t-1} + e_{t-1}), \\
    \dlty{t} &=
    x_t + m_t - (\xt_t + \mt_t) \\
    &= \dltx{t} + \dltm{t}, \\
    \dltm{t}
    &= \gamma_t (m_{t-1}-\eta\nabla f(y_{t-1})) - \gamma_t (\mt_{t-1}-\eta\nabla f'(\yt_{t-1})) \\
    &= \gamma_{t}(\dltm{t-1}-\eta(\dltf{t-1} + e_{t-1})).
\end{align*}
Thus, our basic equations becomes (instead of \crefrange{eq:dxstep}{eq:dmstep})
\begin{align}
    \dltx{t} &= \dlty{t-1} - \eta (\dltf{t-1} + e_{t-1}), \label{eq:udxstep} \\
    \dlty{t} &= \dltx{t} + \dltm{t} = \dlty{t-1} - \eta (\dltf{t-1} + e_{t-1}) + \dltm{t}, \label{eq:udystep} \\
    \dltm{t} &= \gamma_{t}(\dltm{t-1}-\eta(\dltf{t-1} + e_{t-1})). \label{eq:udmstep}
\end{align}
Since $f,f'$ are different only in one term, using the Lipschitz property,
\begin{align*}
    \norm{e_t} \leq \frac{2G}{n}.
\end{align*}
We will show the bound by first proving that $\norm{\dlty{t}} \leq \frac{2 \eta G}{n} 3^{t}$ by induction.
For $t=0$ the claim is immediate. Assuming the claim is correct for $k=0,\dots,t$, we will prove for $t+1$.
\begin{align*}
    \norm{\dlty{t+1}}
    &= \norm{\dlty{t}-\eta (\dltf{t} + e_{t}) +\dltm{t+1}} \tag{\cref{eq:udystep}} \\
    &\leq \norm{\dlty{t}-\eta \dltf{t}} + \norm{\dltm{t+1}} +\eta\norm{e_{t}} \\
    &\leq \norm{\dlty{t}} + \norm{\dltm{t+1}} + \eta\norm{e_t} \tag{\cref{cor:contractive}}
    \\
    &\leq \norm{\dlty{t}} + +\eta\norm{e_t} + \norm{\dltm{1}}+\eta\sum_{k=1}^{t}(\norm{\dltf{k}}+\norm{e_{k}}) \tag{\cref{eq:udmstep} recursively with $\gamma_k \leq 1$} 
    \\
    &\leq \norm{\dlty{t}} + \eta\norm{e_t} + \eta\sum_{k=1}^{t}(\norm{\dltf{k}}+\norm{e_{k}}) \tag{$\dltm{1}=0$ since $\gamma_1=0$}
    \\
    &\leq \norm{\dlty{t}} + \frac{2\eta G}{n} + \eta\sum_{k=1}^{t}\brk*{\norm{\dltf{k}}+\frac{2 G}{n}} \tag{$\norm{e_t}\leq \ifrac{2 G}{n}$} \\
    &\leq \norm{\dlty{t}} + \eta\beta\sum_{k=1}^{t}\norm{\dlty{k}} + \frac{2\eta G}{n}(t+1) \tag{smoothness} \\
    &\leq \frac{2\eta G}{n} 3^{t} + \eta\beta\sum_{k=1}^{t}\brk*{\frac{2\eta G}{n} 3^{k}} + \frac{2\eta G}{n}(t+1) \tag{induction assumption}
    \displaybreak
    \\
    &\leq \frac{2\eta G}{n} 3^{t} + \sum_{k=1}^{t}\brk*{\frac{2\eta G}{n} 3^{k}} + \frac{2\eta G}{n}(t+1) \tag{$\eta\leq\ifrac{1}{\beta}$} \\
    &\leq \frac{2\eta G}{n} \brk*{ 3^{t} + 3\frac{3^{t}-1}{2} + t+1} \\
    &\leq \frac{2\eta G}{n} \brk*{ 3^{t} + 3\frac{3^{t}}{2} + t} \\
    &\leq \frac{2\eta G}{n} 3^{t+1} \tag{$t \geq 0 \implies 3^{t}\geq 2t$}.
\end{align*}
Hence,
\begin{align*}
    \norm{\dltx{T}}
    &= \norm{\dlty{T-1}-\eta(\dltf{T-1}+e_{T-1})} \tag{\cref{eq:udxstep}} \\
    &\leq \norm{\dlty{T-1}}+\eta\norm{e_{T-1}} \tag{\cref{cor:contractive}} \\
    &\leq \frac{2\eta G}{n} 3^{T-1}+\frac{2\eta G}{n}.
\end{align*}
Using the Lipschitz condition, we obtain our bound.
\end{proof}

\end{document}